\setlist[itemize, 1]{label =\raisebox{-0.1\height}{\scalebox{1.3}{\textbullet}}}
\setlist[itemize]{noitemsep, topsep=-0.05cm, leftmargin=4mm}
\definecolor{sns_blue}{rgb}{0.21, 0.06, 0.42}    
\definecolor{sns_violet}{rgb}{0.45, 0.12, 0.51}  
\definecolor{sns_orange}{rgb}{0.75, 0.22, 0.46}  
\definecolor{sns_yellow}{rgb}{0.99, 0.91, 0.66}  
\definecolor{yellow}{rgb}{1, 1, 0.7}
\definecolor{orange}{rgb}{1, 0.85, 0.7}
\definecolor{tablered}{rgb}{1, 0.7, 0.7}
\definecolor{gain}{HTML}{34a853}
\definecolor{lost}{HTML}{ea4335}
\definecolor{my_blue}{rgb}{0.2, 0.6, 1}  
\definecolor{my_magenta}{rgb}{1.0, 0.2, 0.6} 
\definecolor{my_yellow}{rgb}{1.0, 0.8, 0.2} 
\definecolor{my_green}{rgb}{0.0, 0.9, 0.24}
\definecolor{my_green_2}{rgb}{0.0, 0.4, 0.0}
\definecolor{white}{rgb}{1.0, 1.0, 1.0}
\definecolor{darkGreen}{rgb}{0.01, 0.8, 0.24}
\definecolor{darkGreen2}{rgb}{0.22,0.42, 0.33}
\definecolor{darkGreen3}{rgb}{0.20,0.66, 0.33}
\definecolor{cvprblue}{rgb}{0.21,0.49,0.74}
\definecolor{LightCyan}{rgb}{0.88,1,1}
\definecolor{lightgreen}{HTML}{90EE90}
\definecolor{new_green}{rgb}{0.75,0.97,0.44}
\definecolor{Gray}{gray}{0.95}
\definecolor{lightgray}{rgb}{0.96, 0.96, 0.96}
\definecolor{set1_cyan}{rgb}{0.23, 0.87, 1.0}
\definecolor{building}{rgb}{0.2, 0.33, 0.33}
\definecolor{my_violet}{rgb}{0.79, 0.40, 1} 
\definecolor{my_yellow_2}{rgb}{0.9, 0.8, 0.54}
\definecolor{my_red}{rgb}{1,0,0}
\definecolor{my_purple}{rgb}{0.27,0.8, 0.8}
\definecolor{my_orange}{rgb}{1.0,0.6,0.35}
\definecolor{my_golden}{rgb}{1.0, 0.75, 0.0}
\colorlet{my_gray}{gray!12}
\definecolor{projectionColor}{rgb}{0.2, 0.6, 1}
\definecolor{rayColor}{rgb}{0.0,0.0,0.0}
\definecolor{axisColor}{rgb}{0.0, 0.0, 0.0}
\colorlet{projectionBorderShade}{rayColor!100}
\colorlet{projectionFillShade}{projectionColor!20}
\colorlet{rayShade}{my_yellow}
\colorlet{axisShade}{axisColor!20}
\colorlet{axisShadeDark}{axisColor!100}
\definecolor{backward_color}{rgb}{1.0, 0.6, 0.2}
\definecolor{forward_color}{rgb}{0.2, 1.0, 0.6}
\colorlet{proposedShade}{darkGreen}
\colorlet{vanillaShade}{red!90}
\colorlet{theme_color}{sns_orange}
\colorlet{theme_color_light}{sns_yellow!25}
\colorlet{link_color}{Maroon}
\colorlet{ood_color}{gray!20}
\colorlet{methodColor}{cyan!20}
\newcommand{\noIndentHeading}[1]{\noindent\textbf{#1}}
\definecolor{XLcolor}{rgb}{0.858, 0.188, 0.478}
\newcommand{\forExample}{\textit{e.g.}\xspace}
\newcommand{\cmark}{\checkmark}
\newcommand{\xmark}{\ding{53}}
\newcommand{\myHat}[1]{\widehat{#1}}
\newcommand{\mySp}{\hspace{0.04cm}}
\newcommand{\myTopRule}{\Xhline{2\arrayrulewidth}}
\newcolumntype{t}{!{\vrule width 1.5\arrayrulewidth}}
\newcolumntype{m}{!{\vrule width 2.5\arrayrulewidth}}
\newcolumntype{a}{>{\columncolor{theme_color_light}}l}
\newcolumntype{b}{>{\columncolor{theme_color_light}}c}
\newcolumntype{d}{>{\columncolor{ood_color}}c}
\colorlet{cyan_highlight}{my_blue!85}
\colorlet{darkGreen_highlight}{darkGreen!75}
\colorlet{my_magenta_highlight}{my_magenta!50}
\colorlet{my_yellow_highlight}{my_yellow!55}
\providecommand\rightarrowRHD{\relbar\joinrel\mathrel\RHD}
\newcommand{\uparrowRHD}  {\rotatebox[origin=c]{90}{$\rightarrowRHD$}}
\newcommand{\downarrowRHD}{\rotatebox[origin=c]{270}{$\rightarrowRHD$}}
\newcommand{\uparrowRHDSmall}  {\raisebox{0.05\normalbaselineskip}{\scalebox{0.7}{\uparrowRHD}}}   
\newcommand{\downarrowRHDSmall}{\raisebox{0.07\normalbaselineskip}{\scalebox{0.7}{\downarrowRHD}}} 
\newcommand{\rightarrowRHDSmall}{\raisebox{0.05\normalbaselineskip}{\scalebox{0.75}{$\rightarrowRHD$}}}   
\newcommand{\cvpr}{CVPR}
\newcommand{\eccv}{ECCV}
\newcommand{\iccv}{ICCV}
\newcommand{\iclr}{ICLR}
\newcommand{\nips}{NIPS}
\newcommand{\sigg}{SIGG}
\newcommand{\arxiv}{ArXiv}
\newcommand{\venue}[1]{\tiny{#1}}
\newcommand{\nvs}{NVS\xspace}
\newcommand{\nerf}{NeRF\xspace}
\newcommand{\nerfs}{NeRFs\xspace}
\newcommand{\gsOnly}{GS\xspace}
\newcommand{\gs}{3DGS\xspace}
\newcommand{\gsFull}{\threeD Gaussian Splatting\xspace}
\newcommand{\sfm}{SfM\xspace}
\newcommand{\sdf}{SDF\xspace}
\newcommand{\sh}{SH\xspace}
\newcommand{\mipNerf}{Mip-\nerf~{360}\xspace}
\newcommand{\tAndT}{Tank \& Temples\xspace}
\newcommand{\deepBlend}{Deep Blending\xspace}
\newcommand{\ommo}{OMMO\xspace}
\newcommand{\shelly}{Shelly\xspace}
\newcommand{\synNerf}{Syn-\nerf}
\newcommand{\psnr}{PSNR\xspace}
\newcommand{\ssim}{SSIM\xspace}
\newcommand{\lpips}{LPIPS\xspace}
\newcommand{\psnrEdge}{PSNR$_\text{Edge}$\xspace}
\newcommand{\fps}{FPS\xspace}
\newcommand{\sep}{\textbf{\textcolor{black!90}{/}}\xspace}
\newcommand{\firstColor}{tablered}
\newcommand{\secondColor}{orange}
\newcommand{\thirdColor}{yellow}
\newcommand{\genericB}[2]{%
    \hspace{-0.7\fboxsep}%
    \begingroup%
    \setlength{\fboxsep}{0.06cm}
    \colorbox{#2}{%
        \rule[-0.4ex]{0pt}{2.2ex}
        \hspace{0.1em}#1\hspace{0.1em}
    }%
    \endgroup%
    \hspace{-0.7\fboxsep}%
}
\newcommand{\firstB}[1]{\genericB{#1}{\firstColor}}
\newcommand{\secondB}[1]{\genericB{#1}{\secondColor}}
\newcommand{\thirdB}[1]{\genericB{#1}{\thirdColor}}
\newcommand{\firstBText}[1]{\hspace{-0.7\fboxsep}\begingroup\setlength{\fboxsep}{0.06cm}\colorbox{\firstColor}{{#1}}\endgroup\hspace{-0.7\fboxsep}}
\newcommand{\secondBText}[1]{\hspace{-0.7\fboxsep}\begingroup\setlength{\fboxsep}{0.06cm}\colorbox{\secondColor}{{#1}}\endgroup\hspace{-0.7\fboxsep}}
\newcommand{\thirdBText}[1]{\hspace{-0.7\fboxsep}\begingroup\setlength{\fboxsep}{0.06cm}\colorbox{\thirdColor}{{#1}}\endgroup\hspace{-0.7\fboxsep}}
\newcommand{\firstF}[1]{\cellcolor{\firstColor}$#1$}
\newcommand{\secondF}[1]{\cellcolor{\secondColor}$#1$}
\newcommand{\thirdF}[1]{\cellcolor{\thirdColor}$#1$}
\newcommand{\gain}[1]{\textcolor{gain}{#1}}
\newcommand{\good}[2]{{$\mathbf{#1}$} {({\gain{\textbf{+}$\mathbf{#2}$}})}}
\newcommand{\nothing}[1]{{$#1$} {({{$+0.00$}})}}
\newcommand{\sota}{SoTA\xspace}
\newcommand{\retrained}{$\!^*$}
\newcommand{\taken}{$^\dagger$}
\def\relicon{\resizebox{.009\textwidth}{!}{\includegraphics{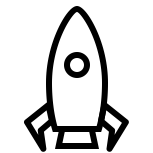}}}
\newcommand{\released}{$\!^{\relicon}$}
\newcommand{\mathDash}{$-$}
\newcommand{\avg}{Avg\xspace}
\newcommand{\mipNerfMethod}{Mip-\nerf}
\newcommand{\ever}{EVER\xspace}
\newcommand{\zipNerf}{Zip-\nerf}
\newcommand{\nerfstudio}{Nerfstudio\xspace}
\newcommand{\gsplat}{gsplat\xspace}
\newcommand{\mcmc}{MCMC\xspace}
\newcommand{\vdgs}{VDGS\xspace}
\newcommand{\mipSplat}{Mip-Splat\xspace}
\newcommand{\adam}{Adam\xspace}
\newcommand{\sr}{SR\xspace}
\newcommand{\lr}{LR\xspace}
\newcommand{\hr}{HR\xspace}
\newcommand{\ffhq}{FFHQ\xspace}
\newcommand{\ffhqFull}{Flickr-Faces-HQ\xspace}
\newcommand{\celebAHQ}{CelebA-HQ\xspace}
\newcommand{\divTwoK}{DIV2K\xspace}
\newcommand{\flikr}{Flikr2K\xspace}
\newcommand{\ost}{OST\xspace}
\newcommand{\gan}{GAN\xspace}
\newcommand{\ddpm}{DDPM\xspace}
\newcommand{\realEsrgan}{Real-ESRGAN\xspace}
\newcommand{\srThree}{SR3\xspace}
\newcommand{\querySet}{\mathcal{Q}}
\newcommand{\query}{\textbf{q}}
\newcommand{\queryIndex}{\textbf{q}_i}
\newcommand{\outputSet}{\mathcal{O}}
\newcommand{\params}{\theta}
\newcommand{\nField}{f}
\newcommand{\nFieldGT}{f}
\newcommand{\nFieldPd}{\myHat{f}}
\newcommand{\nFieldLP}{\nFieldGT^{low}}
\newcommand{\nFieldGTLP}{\nFieldGT^{\mySp low}}
\newcommand{\nFieldPdLP}{\nFieldPd^{\mySp low}}
\newcommand{\nFieldMLP}{\nFieldPd_\params^{\mySp MLP}}
\newcommand{\nFieldQNN}{\nFieldPd_\params^{\mySp QNN}}
\newcommand{\nFieldGS}{\myHat{\nField}_\params^{\mySp GS}}
\newcommand{\nFieldGSQNN}{\myHat{\nField}_\params^{\mySp GS+QNN}}
\newcommand{\opacityGT}{\alpha}
\newcommand{\opacityPd}{\myHat{\alpha}}
\newcommand{\background}{b}
\newcommand{\inDim}{d}
\newcommand{\outDim}{m}
\newcommand{\numLayer}{L}
\newcommand{\encoding}{\gamma}
\newcommand{\erf}{Erf\xspace}
\newcommand{\sinc}{Sinc\xspace}
\newcommand{\siren}{SIREN\xspace}
\newcommand{\qiren}{QIREN\xspace}
\newcommand{\finer}{FINER\xspace}
\newcommand{\inChannels}{C_{in}}
\newcommand{\lowChannels}{C_{low}}
\newcommand{\outChannels}{C_{out}}
\newcommand{\hidChannels}{C_{h}}
\newcommand{\numData}{N}
\newcommand{\neighbor}{\mathcal{N}(i)}
\newcommand{\concat}{\oplus}
\newcommand{\raymap}{raymap\xspace}
\newcommand{\Raymap}{Raymap\xspace}
\newcommand{\Plucker}{Pl\"ucker\xspace}
\newcommand{\numPrim}{n}
\newcommand{\error}{\epsilon_q}
\newcommand{\errorTwo}{\epsilon}
\newcommand{\errorThree}{\epsilon_3}
\newcommand{\wavelet}{\psi}
\newcommand{\sobolev}{r}
\newcommand{\lambert}{W}
\newcommand{\measFunc}{Y}
\newcommand{\mse}{MSE\xspace}
\newcommand{\neighborhood}{neighborhood\xspace}
\newcommand{\receptive}{\Delta}
\newcommand{\queryInput}{\mathcal{X}}
\newcommand{\riskOptimal}{\mathcal{R}^*}
\newcommand{\parametrize}{\phi}
\newcommand{\LowFreq}{Low-frequency\xspace}
\newcommand{\HighFreq}{High-frequency\xspace}
\newcommand{\lf}{LF\xspace}
\newcommand{\hf}{HF\xspace}
\newcommand{\lowFid}{low-fidelity\xspace}
\newcommand{\highFid}{high-fidelity\xspace}
\newcommand{\HighFid}{High-fidelity\xspace}
\newcommand{\qConv}{Qonvolution\xspace}
\newcommand{\qConvs}{Qonvolutions\xspace}
\newcommand{\qConvFull}{Queried-Convolution\xspace}
\newcommand{\qConvsFull}{Queried-Convolutions\xspace}
\newcommand{\mlp}{MLP\xspace}
\newcommand{\mlps}{MLPs\xspace}
\newcommand{\cnn}{CNN\xspace}
\newcommand{\cnns}{CNNs\xspace}
\newcommand{\qnn}{QNN\xspace}
\newcommand{\qnns}{QNNs\xspace}
\newcommand{\hashGrid}{hash-grid\xspace}
\newcommand{\hashGrids}{hash-grids\xspace}
\newcommand{\HashGrid}{Hash-grid\xspace}
\newcommand{\freqSym}{f}
\newcommand{\power}{\alpha}
\newcommand{\oneD}{$1$D\xspace}
\newcommand{\twoD}{$2$D\xspace}
\newcommand{\threeD}{$3$D\xspace}
\newcommand{\fourD}{$4$D\xspace}
\newcommand{\fiveD}{$5$D\xspace}
\newcommand{\sixD}{$6$D\xspace}
\newcommand{\imageNet}{ImageNet\xspace}
\newcommand{\mnist}{MNIST\xspace}
\newcommand{\val}{Val\xspace}
\newcommand{\loss}{\mathcal{L}}
\newcommand{\lOne}{\loss_1}
\newcommand{\relu}{ReLU\xspace}
\newcommand{\relus}{ReLUs\xspace}
\newcommand{\expect}{\mathbb{E}}
\newcommand{\realDomain}{\mathbb{R}}
\newcommand{\paperTitle}{Towards High-Fidelity Gaussian Splatting with \textcolor{theme_color}{Q}ueried-C\textcolor{theme_color}{onvolution} Neural Networks}
\crefname{section}{Sec.}{Secs.}
\Crefname{section}{Section}{Sections}
\Crefname{table}{Table}{Tables}
\crefname{table}{Tab.}{Tabs.}
\let\oldaddcontentsline\addcontentsline
\newcommand{\stopcontents}{\let\addcontentsline\@gobblethree}
\newcommand{\resumecontents}{\let\addcontentsline\oldaddcontentsline}
\begin{document}

\title{\paperTitle} 

\titlerunning{Gaussian Splatting with \qConv Neural Network}



\author{Abhinav Kumar$^1$, Tristan Aumentado-Armstrong$^{2}$\thanks{Equal Second Authors.}~, Lazar Valkov$^{1*}$\\
Gopal Sharma$^{1}$, Alex Levinshtein$^{2}$, Radek Grzeszczuk$^{1,2}$, Suren Kumar$^{1}$\\
$^{1}$Samsung Research America, AI Center – Mountain View, CA, USA \\
$^{2}$Samsung Research, AI Center – Toronto, ON, Canada~~~~~~~~~~~~~~~~~~\\
\email{\{a.kumar4,tristan.a,lazar.valkov,gopal.sharma\}@samsung.com} \\
\email{\{alex.lev,radek.g,suren.kumar\}@samsung.com} \\
{Project Page: \textbf{\url{https://abhi1kumar.github.io/qonvolution/}}} \vspace{-0.4cm}
}
\institute{}

\authorrunning{Kumar et al.}

{\maketitle}
\vspace{-0.25 cm}

\addtocontents{toc}{\protect\setcounter{tocdepth}{-2}}
\begin{abstract}
    Gaussian Splatting has revolutionized the field of Novel View Synthesis (\nvs) with faster training and real-time rendering. 
    However, its reconstruction fidelity still trails behind the powerful radiance models such as \zipNerf. 
    Motivated by our theoretical result that both queries (such as coordinates) and neighborhood are important to learn \highFid signals, this paper proposes \qConvsFull (\qConvs), a simple yet powerful modification using the \neighborhood properties of convolution. 
    \qConvs convolve a \lowFid signal with queries to output residual and achieve \highFid reconstruction.
    We empirically demonstrate that combining Gaussian splatting with \qConv neural networks (\qnns) results in state-of-the-art \nvs on real-world scenes, even outperforming \zipNerf on image fidelity. 
    \qnns also enhance performance of \oneD regression, \twoD regression and \twoD super-resolution tasks.
    \keywords{Gaussian Splatting \and \qConvs \and \HighFid}
\end{abstract}

\begin{figure}[H]
    \vspace{-1cm}
    \centering
    \includegraphics[width=\linewidth]{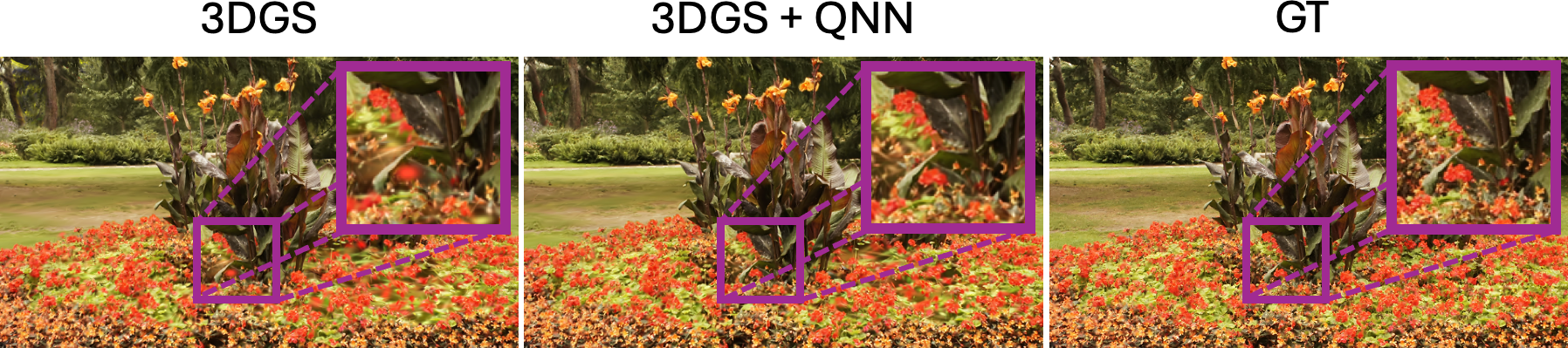}
    \caption{\textbf{\gs with \qConv.} 
    \textbf{Adding \qConv Neural Network (\qnn) faithfully reconstructs details} and results in higher fidelity synthesis than \gsFull \cite{kerbl2023gaussians}. 
    We highlight the differences in inset figures. 
    }
    \label{fig:qualitative}
    \vspace{-0.7 cm}
\end{figure}

\section{Introduction}

    Novel View Synthesis (\nvs) aims to render a scene from an unseen viewpoint, based on a finite set of input images and their associated camera poses.
    This task is a cornerstone of modern computer graphics, with applications ranging from dynamic scene reconstruction \cite{gao2024cat3d} to \threeD content creation \cite{tang2024dreamgaussian,yuan2024gavatar,zou2024triplane}.
    This field underwent a paradigm shift with the introduction of Neural Radiance Field (\nerf) \cite{mildenhall2020nerf}, and more recently, 3D Gaussian Splatting (\gs) \cite{kerbl2023gaussians}. 
    While \nerfs pioneered high-quality volumetric rendering \cite{barron2023zip}, \gs offers faster training and real-time rendering.

    Several extensions of \gs have followed that refine initialization \cite{wang2025stablegs}, exploration \cite{kheradmand20243d}, primitives \cite{arunan2025darb,li2025half}, multi-scale representations \cite{yu2024mip}, and frequency-weighting \cite{zhang2024fregs}.
    However, its reconstruction fidelity still trails behind the powerful radiance models such as \zipNerf \cite{barron2023zip}.
    We observe that a bottleneck in achieving \highFid (\hf) \gs lies in the sub-optimal color (Spherical Harmonics) parameters within \threeD gaussians (\cref{fig:oracle_gs}) and difficulty in representing the fine structural details.


    The challenge of capturing details in neural networks has spurred a rich and diverse body of research within two classes of tasks.
    One stream, which includes the popular novel view synthesis (\nvs) task with \nerfs, uses Multi-Layer Perceptrons (\mlps) (\cref{fig:overview}\textcolor{red}{a}) to directly fit signals. 
    Key strategies within this stream include modifying positional encodings \cite{tancik2020fourier}, altering activations \cite{sitzmann2020siren}, and predicting Fourier series coefficients \cite{lee2021conditional}.
    However, \mlps lack the necessary inductive biases \cite{cohen2016group,lecun1998gradient} to capture the local \neighborhood dependencies inherent in most \oneD and \twoD signals. 
    We conjecture that these local relationships are crucial for learning details effectively. 
    By processing  data points or pixels in isolation, existing \mlps often neglect these local connections, which limits their capacity to fully represent \hf signals.

\begin{figure}[!t]
    \centering
    \begin{NiceTabular}{ccc}
        \hspace{-0.3cm}
        \Block[borders={right,tikz={densely dashed, thick}}]{1-1}{}%
        \subcaptionbox{\mlp-based Neural Fields}{
            \includegraphics[width=0.28\linewidth]{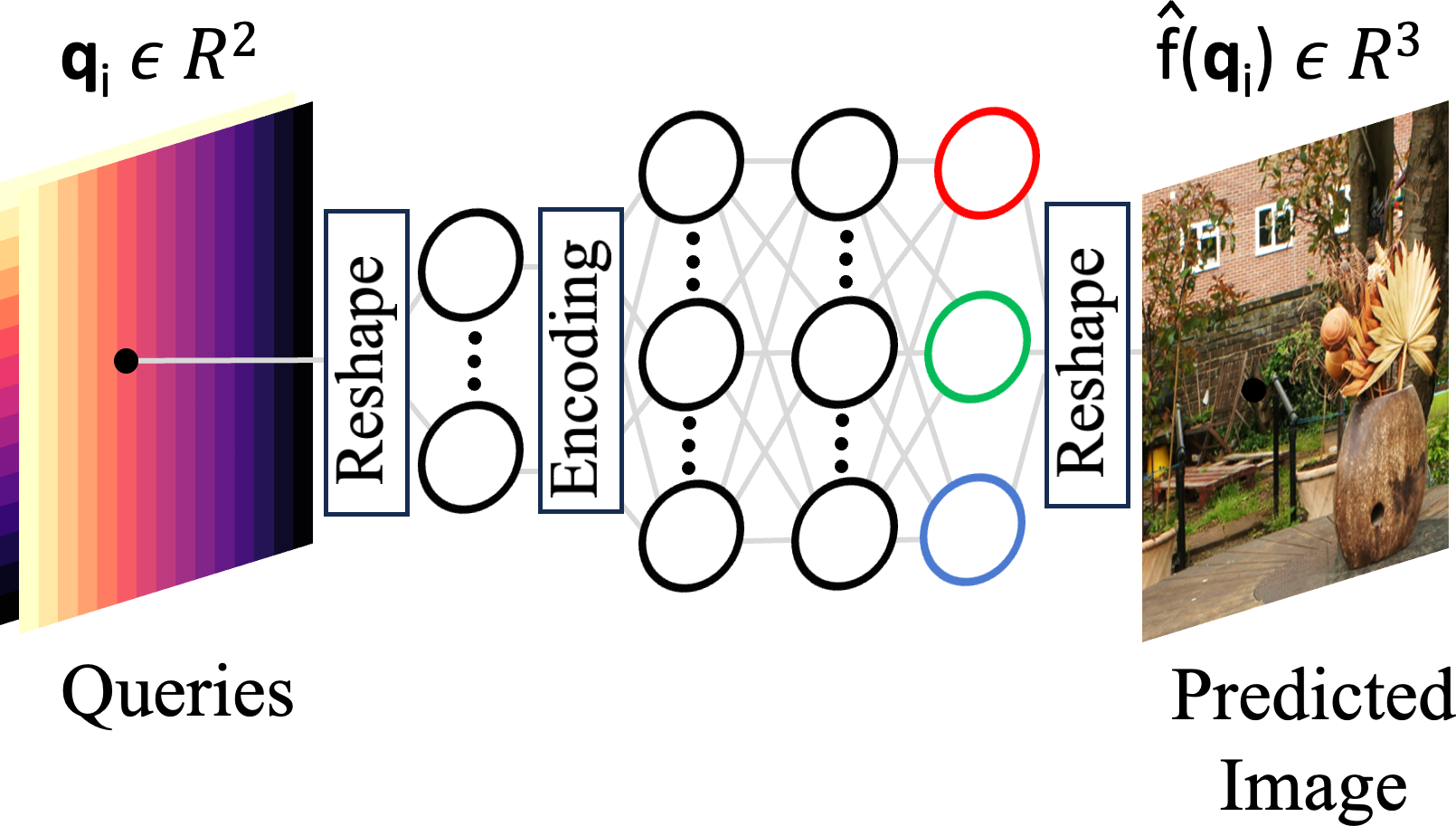}
            \label{fig:mlp}
            } 
        & \Block[borders={right,tikz={densely dashed, thick}}]{1-1}{}%
        \subcaptionbox{\cnn}{
            \includegraphics[width=0.28\linewidth]{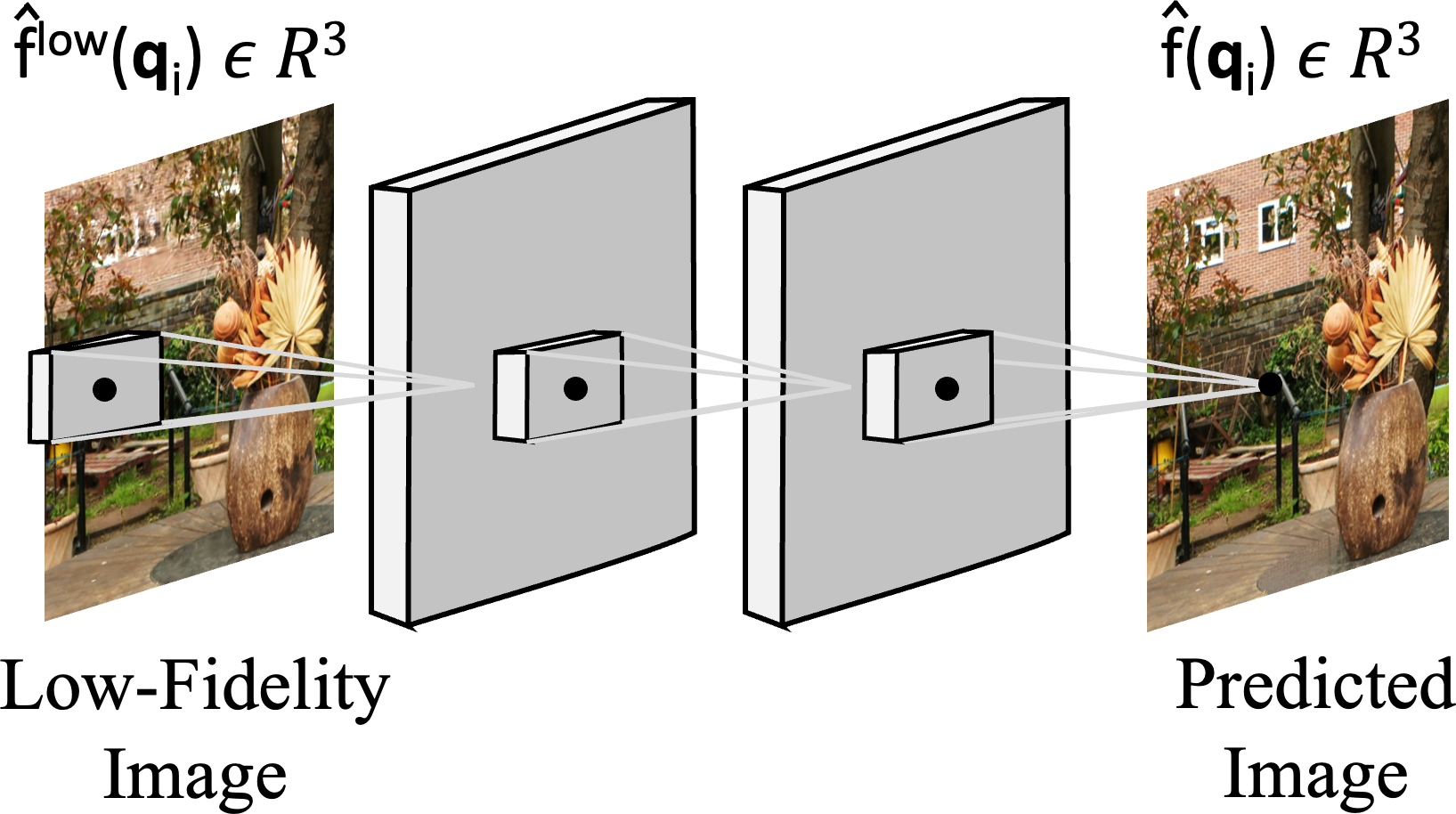}
            \label{fig:cnn}
            } 
        & \subcaptionbox{\qnn}{
            \vspace{-0.11cm}
            \includegraphics[width=0.33\linewidth]{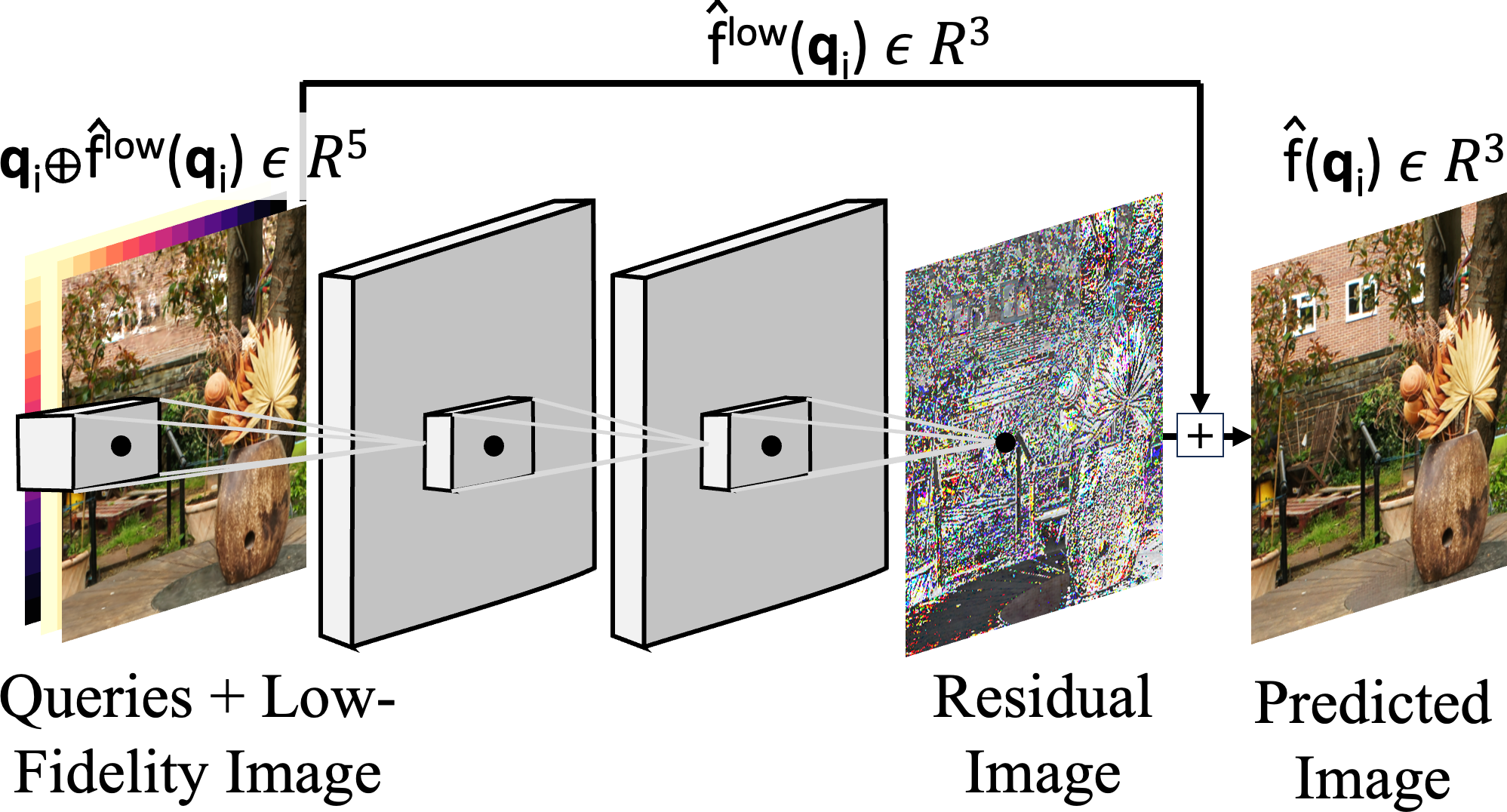}
            \label{fig:qnn}
            }
    \end{NiceTabular}
    \vspace{-0.2cm}
    \caption{\textbf{Overview} of \mlps, \cnns and \qnns. 
    \mlps take (encoded) queries $\encoding(\queryIndex)$ and uses linear layers.
    \cnns take the \lowFid signal $\nFieldPdLP$ and uses convolutions.
    \qnn concatenates the \lowFid signal $\nFieldPdLP$ to the (encoded) queries $\encoding(\queryIndex)$ and uses convolutions.
    [Key: Freq = Frequency, $\concat$= Concatenation].
    }
    \label{fig:overview}
    \vspace{-0.7cm}
\end{figure}

    Conversely, tasks like \twoD super-resolution employ Convolutional Neural Networks (\cnns) to leverage neighborhood information \ie the \lowFid (\lf) signal is convolved with a \cnn (\cref{fig:overview}\textcolor{red}{b}).
    However, these approaches \cite{karras2018progressive} often do not utilize the information present in the input queries (\forExample: spatial coordinates) in predicting the High Fidelity (\hf) signal.
    As shown in previous work \cite{liu2018intriguing}, architectures that are aware of locality, such as \cnns, often fail at tasks that require even a simple transformations of coordinates.
    Thus, the potential for queries to also contribute to the reconstruction of details in such tasks remains an area for further investigation.

    To address the limitations of existing methods and effectively leverage \neighborhood dependencies, valuable queries and \lf signals (\cref{theorem:monotone_feat}), this paper introduces \qConvFull, or \qConv. 
    As a building block, it replaces the traditional linear layer of \mlps with a convolutional layer and processes queries alongside a \lf signal.
    This approach, which convolves a \lf signal with queries to output the residual, marks a departure from \mlp-based neural fields and \cnns (see \cref{fig:overview}). 
    We empirically demonstrate that combining Gaussian splatting with \qConv Neural Networks (\qnns) (\cref{fig:overview}\textcolor{red}{c}) learns better details (\cref{fig:qualitative}) and even surpasses \zipNerf \cite{barron2023zip} in the \threeD \nvs task.
    We additionally show that \qnns \footnote{Except for \sr, this paper mostly covers coordinate-based networks (neural fields) for regression, where the focus is fitting to a single signal than in generalizable or amortized models.} benefit \oneD regression (\cref{fig:1d_reg}), \twoD regression, and  \twoD \sr tasks.

    In summary, this paper makes the following contributions:
    \begin{itemize}
        \item We propose \qnns, an architecture that convolves both \lowFid signals and queries to improve the learning of details.
        \item We theoretically investigate the predictive power of \qnns compared to \cnns (\cref{theorem:monotone_feat}) and the number of Gaussians for increasing fidelity~(\cref{th:num_gaussians}).
        \item We empirically show that GS-based baseline with a \qnn surpasses \zipNerf \cite{barron2023zip} in the \threeD \nvs task (\cref{sec:nvs}).
        \item Our experiments also show that \qnns enhance performance on \oneD regression, \twoD regression, and \twoD \sr tasks (\cref{sec:sr}).
    \end{itemize}

\section{Literature Review}

    \noIndentHeading{Novel-View Synthesis (\nvs).}
        Neural Radiance Fields (\nerfs) represent a \threeD scene as a continuous function for \nvs; however, their reliance on ray-marching results in slow training and rendering \cite{mildenhall2020nerf,barron2022mipnerf360}.
        Subsequent research addresses these limitations by introducing better encodings \cite{muller2022instant,tancik2020fourier} and multi-scale representations \cite{barron2023zip}.
        In contrast, \gs renders a scene with rasterized \threeD gaussians, offering faster training and real-time rendering.
        Subsequent works improve reconstruction fidelity through better initialization \cite{wang2025stablegs}, exploration \cite{kheradmand20243d}, primitives \cite{arunan2025darb,li2025half}, multi-scale representations \cite{yu2024mip}, better projections \cite{huang2024error} and frequency-weighting \cite{zhang2024fregs}.
        A few works integrate ray-marching with \gsOnly \cite{moenne2024gaussray}, though this typically sacrifices the speed gains. 
        Other studies attempt to generalize \nerfs or \gsOnly to multiple scenes \cite{charatan2024pixelsplat,schnepf2025bringing} or improve efficiency \cite{aumentado2023exploring} with \cnns. 
        Conversely, this work applies \qnns to rasterized \gsOnly images to achieve superior view synthesis.
        Finally, video diffusion models like CAT3D \cite{gao2024cat3d} and LVSM \cite{jin2025lvsm} synthesize novel views by concatenating image latents and \threeD queries.
        In contrast, \qnn focuses on established \nvs benchmarks, rather than sparse-view or generative modelling.
        Although PRoPE \cite{li2025cameras} and MVGD \cite{guizilini2025zero} concatenate images and queries, they neither use convolutional architectures, nor show that concatenation helps learning \hf signal.
        PaDIS \cite{hu2024learning} concatenates image coordinates to image as input to a diffusion model.

    \noIndentHeading{\HighFid Learning.}
        \mlps favor learning \lf components in a \hf signal \cite{rahaman2019spectral}, which poses a significant hurdle in representing complex signals. 
        A closely related phenomenon is the low-rank bias \cite{huh2022low}. 
        To combat these limitations, the literature presents a variety of strategies.
        The first prominent direction involves modifying positional encodings. 
        Techniques like sinusoidal encodings \cite{vaswani2017attention}, Fourier encodings \cite{rahimi2007random,tancik2020fourier} and \hashGrids \cite{muller2022instant} provide \mlps with more discriminative spatial information, which helps them learn higher frequencies.
        Similarly, \cite{yu2025high} feeds wavelet-decomposed signals into neural networks, leveraging the multi-resolution properties of wavelets.
        Another direction centers on redesigning activation functions to boost the \mlp capacity for \hf representation. 
        Examples include the use of the error function (erf) \cite{yang2019fine}, the periodic activations in \siren \cite{sitzmann2020siren}, sinc functions \cite{saratchandran2024activation}, FINER \cite{liu2024finer} or \qiren functions \cite{zhao2024quantum}. 
        Some methods also work directly in the frequency domain, either by predicting Fourier series coefficients \cite{lee2021conditional} or phase-shifted signals \cite{cai2020phase}.
        \HighFreq weighted losses \cite{zhang2024fregs,sawada2025frebis} is another effective strategy, that reconstructs fine-grained details during training. 
        Additional efforts to overcome include carefully tuning weight initialization \cite{saratchandran2024activation,teney2024neural} for more balanced learning across all frequencies. 
        Some methods employ network ensembles \cite{ainsworth2022galerkin,wang2024multi} to collectively capture this information.
        Compared to these prior methods, which primarily use \mlps over queries, this paper convolves both queries and the \lf signal.
        This leverages \neighborhood and the \lf signal for superior signal learning.

    \noIndentHeading{Image Super-Resolution (\sr).}
        The \sr literature is seeing a paradigm shift from Generative Adversarial Networks (\gan{}s) \cite{wang2021realesrgan} to diffusion models \cite{saharia2022image,yue2023resshift}, primarily building on \cnns \cite{wang2021realesrgan}, with some  exceptions \cite{lee2022local}. 
        To achieve arbitrary resolution, implicit representations are a key strategy \cite{chen2021liif,lee2022local}. 
        In this context, \qnn stands apart by integrating queries and encoded images through convolutions, contrasting with \mlp-based processing of queries and latents. 
        While \cite{lee2022local} also use \cnns and queries, their core innovation centers on predicting Fourier space coefficients in the output, whereas \qnn focuses on the convolutional mechanism for query integration.

    \noIndentHeading{Theoretical Results.}
        Many theoretical works demonstrate that \cnns exhibit superior sample complexity compared to \mlps, for both binary classification \cite{bietti2021approximation,li2021convolutional} and regression problems \cite{du2018many,misiakiewicz2022learning,wang2023theoretical}, provided certain conditions are met.
        Many common computer vision tasks, including \nvs, meet these conditions \cite{ulyanov2018deep}. 
        Some attribute the enduring utility of the \cnn to its inductive biases, specifically weight sharing \cite{bruna2013invariant,lenc2015understanding,mei2021learning} and \neighborhood \cite{pogodin2021towards,malach2021computational,lahoti2024role}.
        For example, \cite{malach2021computational} showed that \cnns learn more accurate mappings than \mlps when the classification label depends on the \neighborhood information.
        To the best of our knowledge, no theoretical results directly compare \qnns with \cnns and \mlps in regression settings.

\section{Background}

    We begin by establishing the basics in the following paragraphs.

    \noIndentHeading{3D Gaussian Splatting.}
        The \gs \cite{kerbl2023gaussians} represents a scene with a collection of learnable \threeD gaussian primitives $\mathcal{G} = \{G_k\}_{k=1}^K$, each parametrized by opacity, location, rotation, scale, and color (represented by spherical harmonics or \sh).
        The opacity $\alpha_k{\in}[0,1]$ of Gaussian $G_k$ at spatial point $x \in \mathbb{R}^3$ is given by
        $\sigma_k(x) = \alpha_k \exp\left(-\frac{1}{2}(x-\mu_k)^\top \Sigma_k^{-1}(x-\mu_k)\right)$ 
        where $\Sigma_k = R_k S_k S_k^\top R_k^\top \in \mathbb{R}^{3\times3}$ is the covariance matrix controlling the ellipsoid's spatial distribution.
        \gs employs rasterization for color rendering. 
        With $K_r$ Gaussians intersecting the ray $r$, which are ordered by ascending depth $d_k^r$, the rendering equations are
        \begin{equation}
        \setlength\abovedisplayskip{3pt}
        \setlength\belowdisplayskip{3pt}
        \begin{aligned}
        c(r) = \sum_{k=1}^{K_r} c_k \sigma(x_k^r)\tau_k, \ 
        d(r) = \sum_{k=1}^{K_r} d_k^r \sigma(x_k^r)\tau_k
        \end{aligned}
        \end{equation}  
        where $\tau_k = \prod_{j=1}^{k-1} \left(1 - \sigma(x_j^r)\right)$ and $c$ and $d$ denote the rendered color and depth values of the \threeD Gaussian scene $\mathcal{G}$ along ray $r$.
        See \cite{kerbl2023gaussians} for more details.

    \noIndentHeading{Neural Fields.}\label{sec:neuralfields}
        A neural field, denoted as $\nFieldPd_{\params}: \querySet \rightarrow \outputSet$, with parameters $\params$, represents a signal by mapping a bounded set of queries $\querySet \subset \realDomain^\inDim$ to outputs $\outputSet \in \realDomain^\outDim$.
        Specifically, for an input query vector $\queryIndex\in\querySet$, the neural field produces an output $\nFieldPd_{\params}(\queryIndex)\in\outputSet$. 
        This framework is highly versatile, accommodating diverse signal types and queries, such as \oneD audio, \twoD images, or \threeD geometry. 
        Illustrative examples of tasks, inputs and their corresponding outputs include:
        \begin{itemize} 
            \item \oneD Regression: \oneD coordinate queries $\queryIndex{\in}\realDomain^1$, outputting \oneD scalar values \cite{tancik2020fourier};
            \item \twoD Image Regression: \twoD coordinate queries $\queryIndex{\in}\realDomain^2$, representing pixel locations, outputting \threeD RGB pixel colors \cite{stanley2007compositional};
            \item \nvs with \nerf: \fiveD queries\footnote{In practice, \nerf uses \sixD queries with \threeD view directions of norm $1$.} $\queryIndex{\in}\realDomain^5$, combining \threeD camera coordinates with \twoD view directions, outputting \fourD color and density information \cite{mildenhall2020nerf}.
        \end{itemize}

        Most neural fields commonly employ \mlps to implement the function $\nFieldPd_{\params}$. 
        To construct details, they first pass a query $\queryIndex$ through an encoding $\encoding$ such as Fourier encodings \cite{tancik2020fourier} and \hashGrids \cite{muller2022instant}.
        So, we write the neural field as
        \begin{align}
            \nFieldMLP(\queryIndex) &= \mlp (\encoding(\queryIndex)).
        \end{align}

        Vanilla encoding refers to $\encoding(\queryIndex){=}\queryIndex$.
        The neural field processes $\numData$ input samples $\queryIndex$, each with $\inDim{=}\inChannels$ input dimensions (channels), $\hidChannels$ hidden dimensions (channels) and generates $\numData$ output samples with $\outDim{=}\outChannels$ output dimensions (channels). 
        Consequently, the neural field transforms a tensor of shape $\numData{\times}\inChannels$ into an output tensor of shape $\numData{\times}\outChannels$.
        One then reshapes the output samples to match the target signal dimensions.
        One optimizes the parameters $\params$ of these networks by minimizing a loss, such as the squared or the $\lOne$ loss between the network's output and the true signal, over the observed data.

    \noIndentHeading{Example.} 
        As an example, consider a \twoD image regression task where \twoD coordinates serve as queries. 
        In this scenario, the input channels $\inChannels$ are $2$, representing the $(u,v)$ pixel coordinates, and the output channels $\outChannels$ are $3$, corresponding to the RGB pixel colors.
        For each image, the number of data samples, $i$, ranges from $1$ to $N$, where $\numData{=}H{\times}W$ (height times width of the image).

\section{\qConvFull Neural Networks (\qnn)}\label{sec:qnn}

    Having established the necessary notations and preliminaries in the preceding section, we now introduce our proposed \qConv and \qnn in this section.

    \subsection{\qnn}\label{sec:qconv}

        Let $\nFieldPdLP$ and $\nFieldGTLP$ respectively be the predicted (learned) and the true \lf approximation of the ground truth (GT) signal $\nFieldGT$.
        Note that this \lf signal is often available in certain tasks: for example, the \nvs task can use the splatted \gs image as the learned \lf signal, while the \sr task itself provides a GT low-pass image as the input.

        We hypothesize that there is benefit to providing the \lf approximation of neighboring coordinates. 
        So, we set out to design a neural field which exploits the information present in the \neighborhood of the given query $\query$ in terms of the query values and the corresponding \lf approximations.
        Then, the \qConv neural network (\qnn) concatenates the \neighborhood of encoded input queries $\encoding(\query_{\neighbor})$ with their \lf signal $\nFieldPdLP_{\neighbor}$, and subsequently convolves them to produce the output $\nFieldPd_{\params}(\queryIndex){\in}\outputSet$, with $\neighbor$ denoting the  neighbors of index $i$ including itself.
        We formally write the \qnn as 
        \begin{align}
            \nFieldQNN(\queryIndex) &= \cnn (\encoding(\query_{\neighbor}) \concat \nFieldPdLP_{\neighbor}),
        \end{align}
        with $\concat$ denoting the concatenation of tensors along the channel dimensions.
        With \lf signal $\nFieldPdLP$ available, we ask the \qnn to learn the residual, and so the final output $\nFieldPd_{\params}(\queryIndex)$ adds the \lf and the residual as
        \begin{align}
            \nFieldPd_{\params}(\queryIndex) &= \nFieldPdLP(\queryIndex) + \cnn (\encoding(\query_{\neighbor}) \concat \nFieldPdLP_{\neighbor}),
        \end{align}

        A distinct advantage of the \qnn architecture is that it needs to be evaluated once for the entire signal, rather than once per coordinate. 
        However, a downside is \qnns can not be used in tasks without neighborhoods such as casting a single ray in \nerf.
        Furthermore, the \qnn architecture offers considerable flexibility in the choice and integration of queries; various queries can be appended depending on the specific task at hand. 
        We refer to our ablation studies (\cref{sec:ablation}), where we experiment with different queries.

        \noIndentHeading{Example.} 
            To illustrate, consider the same \twoD image regression task previously discussed in \cref{sec:neuralfields}, characterized by $\inChannels{=}2$ input channels and $\outChannels{=}3$ output channels. 
            The number of data samples for each image is $\numData{=}H{\times}W$.
            In this context, the \lf image is a tensor of shape $\lowChannels{\times}H{\times}W$.
            The \qnn appends both this \lf tensor and the \twoD coordinate queries. 
            Consequently, the \qnn receives an input tensor of shape $(\lowChannels+\inChannels){\times}H{\times}W$, or $5{\times}H{\times}W$, and outputs a $\outChannels{\times}H{\times}W$, or $3{\times}H{\times}W$ tensor without reshaping.

        \noIndentHeading{Remarks.}
            The \qnn generalizes several existing architectures in the literature:
            \begin{itemize}
                \item Replacing the \cnn with an \mlp and excluding the \lf signal $\nFieldPdLP$, reduces the \qnn to conventional neural fields.
                \item By omitting the input queries $\queryIndex$, the \qnn effectively simplifies to a standard \cnn architecture.
                \item When utilizing normalized \twoD coordinates as queries with vanilla encodings and employing \twoD convolutions, \qnn becomes the coordinate \cnn \cite{liu2018intriguing}. 
            \end{itemize}

    \subsection{Comparing \qnn, \cnn and \mlp on Regression Tasks}\label{sec:qnn_theory}

        In this subsection, we provide theoretical justification for our approach with real-valued target functions.
        We show that, in the setting which is not limited by data nor computation, adding contextual information, specifically neighborhood information and then the neighboring queries, does not negatively impact the achievable risk when predicting from a \lf signal.
        Furthermore, we show that, in this setting, adding queries guarantees to perfectly approximate the target function, achieving the best possible error.
        See \cref{sec:supp_qnn_theory} for the proof.

\begin{figure}[!t]
    \centering
    \includegraphics[width=0.8\linewidth]{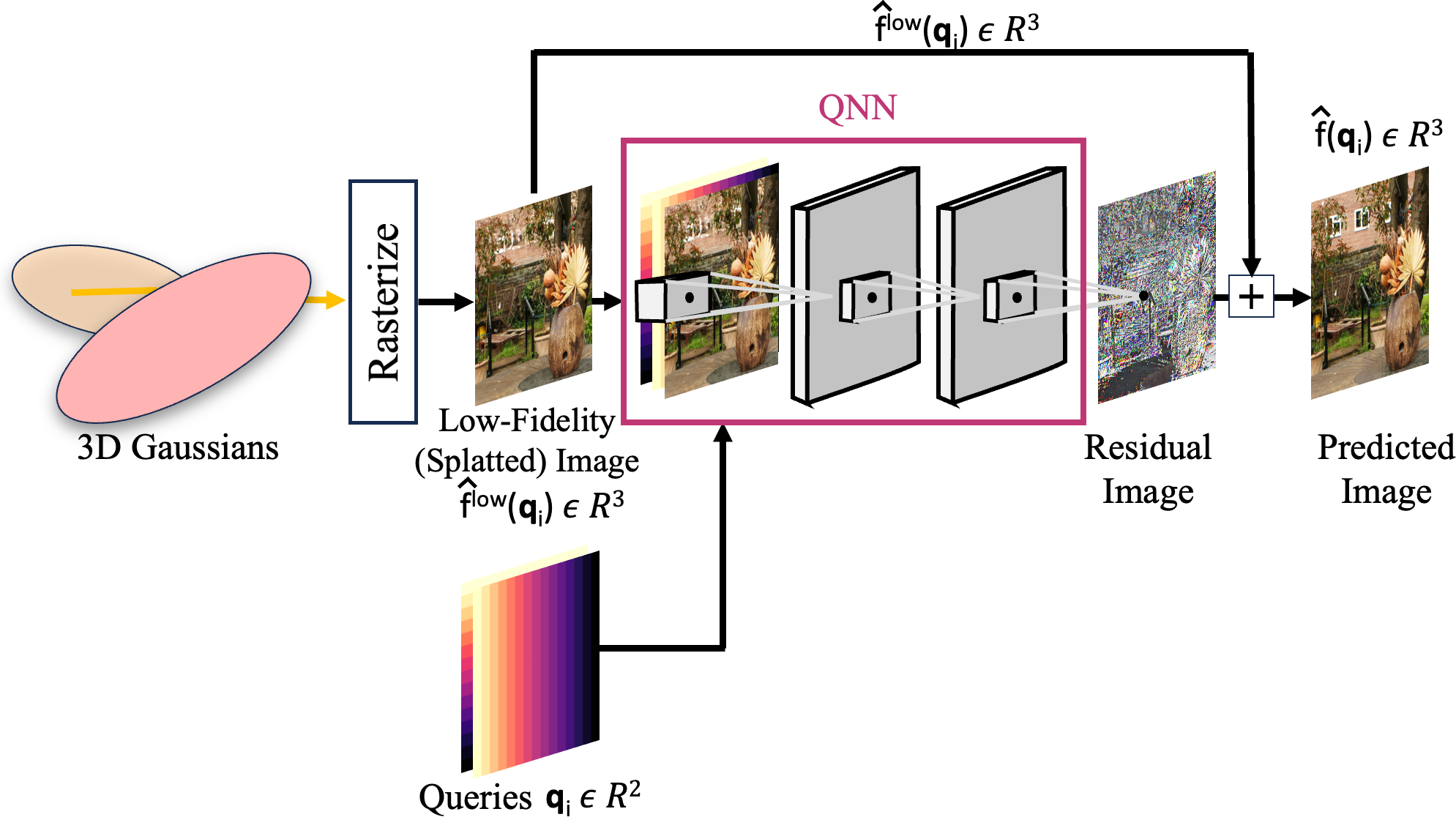}
    \caption{\textbf{\gsOnly{}+\qnn Architecture}. 
    \qnn takes the \twoD \lf splatted image and queries as input to output the residual image, which is then added to the splatted image to get the final predicted novel view image.}
    \label{fig:gs_qnn}
    \vspace{-0.7cm}
\end{figure}
    
        \begin{theorem}\label{theorem:monotone_feat}
            Let $\mathcal{\queryInput}$ be an input space, which is a discrete integer lattice or in $\realDomain^{\inDim}$, and let $P_\queryInput$ be a probability distribution on $\queryInput$. 
            Let $\nFieldGT{:}~\mathcal{X}{\to}[0, 1]$ be a deterministic and measurable target function. A feature map is a deterministic, measurable function $\parametrize{:}~\queryInput{\to}\mathcal{Z}$ that maps inputs to a feature space $\mathcal{Z}$. 
            The corresponding hypothesis class, $\mathcal{H}(\parametrize)$, is the set of all deterministic, measurable functions mapping from $\mathcal{Z}$ to the output space $[0, 1]$.
            The optimal expected error for a given feature map $\parametrize$ is the minimum mean squared error achievable by any hypothesis in its class $\riskOptimal(\parametrize, \nFieldGT) = \min_{h \in \mathcal{H}(\parametrize)} \mathop{\expect}\left[ (\nFieldGT(x) - h(\parametrize(x)))^2 \right]$.
            Let $\nFieldGTLP{:}~\queryInput{\to}[0, 1]$ be a deterministic measurable function that approximates $\nFieldGT$. 
            Consider the sequence of increasingly informative feature maps:
            \begin{itemize}
                \item \textbf{Feature Map 1 (Approximation) or \mlp~(LF):} $\parametrize_1(x) = (\nFieldGTLP(x))$
                \item \textbf{Feature Map 2 (Neighborhood) or \cnn~(LF):} ~$\parametrize_2(x) = \\(\parametrize_1(x - \receptive), \parametrize_1(x), \parametrize_1(x + \receptive))$
                \item \textbf{Feature Map 3 (Neighborhood and Queries) or \qnn:} $\parametrize_3(x) = \\(\parametrize_2(x), x, x-\Delta, x+\Delta)$,
            \end{itemize}
            for a fixed $\receptive > 0$. Then, the optimal expected errors for these feature maps are monotonically non-increasing and bounded by zero:
            \begin{align}
                \riskOptimal(\parametrize_1, \nField) \geq \riskOptimal(\parametrize_2, \nField) &\geq \riskOptimal(\parametrize_3, \nField) = 0
            \end{align}
        \end{theorem}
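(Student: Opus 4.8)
The plan is to reduce everything to one elementary fact — a \emph{data-processing} or \emph{refinement} property of the optimal error — and then verify that $\parametrize_1,\parametrize_2,\parametrize_3$ form a chain under this relation. First I would prove the refinement lemma: if $\parametrize = g \circ \psi$ for some deterministic measurable $g$, then $\riskOptimal(\psi, \nFieldGT) \le \riskOptimal(\parametrize, \nFieldGT)$. The argument is immediate: for any $h \in \mathcal{H}(\parametrize)$, the map $h \circ g$ is deterministic, measurable and $[0,1]$-valued, hence $h\circ g \in \mathcal{H}(\psi)$, and $\expect[(\nFieldGT(x) - (h\circ g)(\psi(x)))^2] = \expect[(\nFieldGT(x) - h(\parametrize(x)))^2]$; taking the minimum over $h \in \mathcal{H}(\parametrize)$ on both sides gives the inequality. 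To be careful that the stated $\min$ is attained rather than merely an infimum, I would invoke the Doob--Dynkin lemma: the $L^2$-minimizer is a version of $\expect[\nFieldGT(x)\mid \parametrize(x)]$, which is a measurable function of $\parametrize(x)$ and, since $\nFieldGT$ is $[0,1]$-valued, lies in $[0,1]$ $P_\queryInput$-a.s., so it can be truncated to $[0,1]$ everywhere without changing its value $P_\queryInput$-a.s.

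Second, I would apply the lemma twice via coordinate projections. By construction $\parametrize_2(x) = (\parametrize_1(x-\receptive), \parametrize_1(x), \parametrize_1(x+\receptive))$, so $\parametrize_1 = \pi_{\mathrm{mid}} \circ \parametrize_2$ where $\pi_{\mathrm{mid}}$ is the measurable projection onto the middle coordinate; the lemma yields $\riskOptimal(\parametrize_2, \nFieldGT) \le \riskOptimal(\parametrize_1, \nFieldGT)$. Likewise $\parametrize_3(x) = (\parametrize_2(x), x, x-\receptive, x+\receptive)$, so $\parametrize_2 = \pi_1 \circ \parametrize_3$ with $\pi_1$ the projection onto the first block, giving $\riskOptimal(\parametrize_3, \nFieldGT) \le \riskOptimal(\parametrize_2, \nFieldGT)$. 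This establishes the two inequalities of the chain.

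Third, for the equality $\riskOptimal(\parametrize_3, \nFieldGT) = 0$ I would use that $\parametrize_3$ retains the raw query: let $\pi_x$ be the projection of the feature tuple onto the block equal to $x$, so $\pi_x(\parametrize_3(x)) = x$ for every $x$. Set $h^\star = \nFieldGT \circ \pi_x$, which is measurable and $[0,1]$-valued, hence $h^\star \in \mathcal{H}(\parametrize_3)$, and $h^\star(\parametrize_3(x)) = \nFieldGT(x)$ pointwise. Thus $\expect[(\nFieldGT(x) - h^\star(\parametrize_3(x)))^2] = 0$, and since mean squared error is nonnegative, $\riskOptimal(\parametrize_3, \nFieldGT) = 0$. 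Concatenating the three parts gives $\riskOptimal(\parametrize_1, \nFieldGT) \ge \riskOptimal(\parametrize_2, \nFieldGT) \ge \riskOptimal(\parametrize_3, \nFieldGT) = 0$.

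The argument is essentially bookkeeping, so rather than a genuine obstacle there are three points I would state explicitly: (i) each feature space is a finite product of measurable spaces so that the projections $\pi_{\mathrm{mid}}, \pi_1, \pi_x$ are measurable and the composed hypotheses remain in the stated classes; (ii) attainment of the $\min$, handled by the conditional-expectation remark above; and (iii) ensuring $x \pm \receptive \in \queryInput$, which is automatic for $\queryInput = \realDomain^{\inDim}$ and for the integer lattice with integer shift $\receptive$, and otherwise resolved by extending $\nFieldGTLP$ and $\nFieldGT$ by any measurable rule off the set, which changes no expectation under $P_\queryInput$. The mild subtlety worth flagging is (i)--(ii); everything else is routine.
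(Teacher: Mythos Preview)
Your proposal is correct and follows essentially the same approach as the paper: a refinement/data-processing lemma applied twice via coordinate projections to obtain the two inequalities, together with the observation that $\parametrize_3$ retains $x$ to force zero risk. Your execution is somewhat more elementary and self-contained --- you prove the refinement lemma directly by composing hypotheses rather than invoking an external data-processing inequality for Bayes risk, and you exhibit $h^\star = \nFieldGT \circ \pi_x$ explicitly rather than routing through the conditional-variance characterization $\riskOptimal(\parametrize,\nFieldGT)=\expect[\mathbb{V}(\nFieldGT(X)\mid Z)]$ --- but the overall structure is identical.
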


    \subsection{Combining Gaussian Splatting with \qnn}\label{sec:nvs_architecture}

        \cref{fig:gs_qnn} shows the architecture of combining Gaussian Splatting (\gsOnly) with \qnn to enhance image fidelity. 
        We integrate \qnn into \gsOnly  as follows\footnote{We drop the independent variable $x$ and subscript $i$ in query $\queryIndex$ for brevity.}.
        The \gsOnly renders the initial \lf signal $\nFieldPdLP$.
        The \qnn concatenates the query $\query$ and the \lf splatted image $\nFieldPdLP$ as input.
        The \qnn passes the concatenation through a \cnn to produce a residual image $\qnn(\query,\nFieldPdLP)$. 
        We then add the \lf splatted image $\nFieldPdLP$ and this residual image to produce the final predicted novel view image $\nFieldGSQNN=\nFieldPdLP+\qnn(\query,\nFieldPdLP)$.
        We compare the prediction from the \gsOnly{}+\qnn model $\nFieldGSQNN$ against the GT image $\nFieldGT$ with the same loss as the corresponding baseline.
        More details are in \cref{sec:supp_impl_nvs}.
            
        \noIndentHeading{Note.}
            Unlike ray marching methods (\nerf) which require multiple queries per pixel, \qnn requires exactly one query per pixel, significantly speeding up the process without increasing training time.
            We confirm this empirically in \cref{sec:nvs}.

        \noIndentHeading{Shortcoming of Gaussian-based Representation.}
            Our next \cref{th:num_gaussians} states that increasing fidelity (\psnr) or decreasing \mse for gaussian-based representation requires increasing the number of gaussians exponentially.
            We defer its proof to \cref{sec:shortcoming_gaussian}, and confirm this empirically in \cref{sec:nvs_alter}.
            \vspace{-0.2cm}
            \begin{corollary}[of \cref{th:approx_wavelet} in Supp, \textbf{\#Gaussians}]
                \label{th:num_gaussians}
                Let a \twoD image be a measurable real-valued function whose first derivative exists at all points.
                Then, the minimum number of \twoD splatted gaussians, $\numPrim$, required to achieve an approximation MSE of $\errorTwo$ is bounded below by
                    $\numPrim \ge \exp  \left[ \lambert  \left(\frac{c}{\errorTwo}\right)^{2} \right]$,
                where $\lambert$ denotes the Lambert $W$ function and $c$ is a constant independent of $\errorTwo$.
            \end{corollary}

\section{Experiments}

    We evaluate on four tasks including \threeD \nvs, \oneD and \twoD regression, and \twoD \sr.
    \cref{tab:exp_setting} summarizes all these tasks.
    The \nvs, \oneD and \twoD regression experiments compares \qnns against \mlp-based neural fields and \cnns, while \twoD \sr experiments compares \qnns against \cnns.

\begin{table}[!t]
    \centering
    \caption{
        \textbf{Experimental Settings} for tasks. 
        We add the predicted/GT \lf signal to the network's residual output $\nFieldPd$ to get the final prediction $\nFieldPdLP{+}\nFieldPd(\query, \nFieldPdLP)$. 
        The loss is computed between the final prediction $\nFieldPdLP{+}\nFieldPd(\query, \nFieldPdLP)$ and the GT signal $\nField$.
        [Key: Conv= Convolution, Pred= Predicted, Out= Output].
    }
    \label{tab:exp_setting}
    \vspace{-0.2cm}
    \scalebox{0.7}{
    \setlength{\tabcolsep}{0.1cm}
    \begin{tabular}{l m cccc m c cc m c c} 
        \multirow{2}{*}{\textbf{Task}} & \multicolumn{4}{cm}{\textbf{Dimensionality} (Channels)} & \textbf{\lf Input } & \textbf{Network} & \textbf{Final} & \textbf{Network} & \multirow{2}{*}{\textbf{Loss}} \\
         & \textbf{Query $\query(\inDim)$} & \textbf{LF\!$(\lowChannels)$} & \textbf{Conv} & \textbf{Out $(\outDim)$} &  $(\nFieldLP/\nFieldPdLP)$ & \textbf{Output} & \textbf{Output} & $(\nFieldPd)$\\
        \myTopRule
        \threeD \nvs & 2 & 3 & \twoD & 3 & \multirow{2}{*}{Pred $\nFieldPdLP$} & \multirow{2}{*}{$\nFieldPd(\query{,}\nFieldPdLP)$} & \multirow{2}{*}{$\nFieldPdLP{+}\nFieldPd(\query{,}\nFieldPdLP)$} & \qnn, \mlp, \cnn & $\lOne$ \tiny{variant}\\
        \twoD Reg & 2 & 3 & \twoD & 3 & & & & \qnn, \mlp~~~~~~~~ &$\lOne$\\
        \hline
        \oneD Reg & 1 & 1 & \oneD & 1 & \multirow{2}{*}{GT $\nFieldLP$} & \multirow{2}{*}{$\nFieldPd(\query{,}\nFieldLP)$} & \multirow{2}{*}{$\nFieldLP{+}\nFieldPd(\query{,}\nFieldLP)$} & \qnn, \mlp, \cnn & \multirow{2}{*}{$\lOne$}\\
        \twoD \sr & 2 & 3 & \twoD & 3 & & & & \qnn, \cnn~~~~~~~~ \\
    \end{tabular}
    }
    \vspace{-0.2cm}
\end{table}

    \subsection{3D \nvs}\label{sec:nvs}

        \noIndentHeading{Datasets.}
            Our \nvs experiments use six datasets: \mipNerf \cite{barron2022mipnerf360}, \tAndT \cite{knapitsch2017tanks}, \deepBlend \cite{hedman2018deep}, \ommo \cite{lu2023large}, \shelly \cite{wang2023adaptiveshells} and \synNerf \cite{mildenhall2020nerf}.

\begin{table*}[!t]
    \caption{
        \textbf{\nvs Results.} 
        \textbf{Adding \qnn outperforms} the baselines across all datasets. 
        Numbers are from respective papers.
        Stable-GS, 3D-HGS and \mcmc do not report results on all nine scenes of \mipNerf dataset. 
        \zipNerf \synNerf numbers are from their official repository.
        Scenewise details in \cref{tab:nvs_results_detail}.
        [Key: \firstBText{Best}~, \secondBText{Second-best}~, \thirdBText{Third-best}~, \released= Reported, \retrained= Retrained, \taken= Reported in \gs \cite{kerbl2023gaussians}].
    }
    \label{tab:nvs_results}
    \vspace{-0.3cm}
    \resizebox{\linewidth}{!}{
    \centering
    \setlength{\tabcolsep}{0.04cm}
    \begin{tabular}{l @{\hspace{0.2cm}} lc c @{\hspace{0.2cm}} c c @{\hspace{0.3cm}} c @{\hspace{0.4cm}} c @{\hspace{0.4cm}} c@{}} 
        \myTopRule
        & \multirow{2}{*}{Method} & \multirow{2}{*}{Venue} 
        & \textbf{\mipNerf\hspace{-0.1cm}\cite{barron2022mipnerf360}} 
        & \textbf{\tAndT\hspace{-0.1cm}\cite{knapitsch2017tanks}} 
        & \textbf{\deepBlend\hspace{-0.1cm}\cite{hedman2018deep}} 
        & \textbf{\ommo \cite{lu2023large}} 
        & \textbf{\shelly \cite{wang2023adaptiveshells}} 
        & \textbf{\synNerf \cite{mildenhall2020nerf}} \\
        & & & \tiny{\psnr\!$\uparrow$\sep\ssim\!$\uparrow$\sep\lpips\!$\downarrow$}
        & \tiny{\psnr\!$\uparrow$\sep\ssim\!$\uparrow$\sep\lpips\!$\downarrow$}
        & \tiny{\psnr\!$\uparrow$\sep\ssim\!$\uparrow$\sep\lpips\!$\downarrow$}
        & \tiny{\psnr\!$\uparrow$\sep\ssim\!$\uparrow$\sep\lpips\!$\downarrow$}
        & \tiny{\psnr\!$\uparrow$\sep\ssim\!$\uparrow$\sep\lpips\!$\downarrow$}
        & \tiny{\psnr\!$\uparrow$\sep\ssim\!$\uparrow$\sep\lpips\!$\downarrow$} \\
        \myTopRule 
        \multirow{11}{*}{\rotatebox{90}{\textbf{Ray}}}
        & Plenoxels\taken \cite{fridovich2022plenoxels} & \venue{\cvpr22} & $23.08$ \sep $0.63$ \sep $0.44$ & $21.08$ \sep $0.72$ \sep $0.38$ & $23.06$ \sep $0.80$ \sep $0.51$ & \mathDash & \mathDash & $31.76$ \sep ~~~\mathDash~~ \sep ~~\mathDash~~ \\
        & INGP-Big\taken \cite{muller2022instant}      & \venue{\sigg22} & $25.59$ \sep $0.75$ \sep $0.30$ & $21.92$ \sep $0.75$ \sep $0.31$ & $24.96$ \sep $0.82$ \sep $0.39$ & \mathDash & \mathDash & $33.18$ \sep ~~~\mathDash~~ \sep ~~\mathDash~~ \\
        & Adaptive \cite{wang2023adaptiveshells} & \venue{\sigg23} & \mathDash & \mathDash & \mathDash & \mathDash & $36.02$ \sep \secondB{0.95} \sep \secondB{0.08} & $32.51$ \sep $0.96$ \sep $0.05$ \\
        & QFields \cite{sharma2024volumetric} & \venue{\eccv24} & \mathDash & \mathDash & \mathDash & \mathDash & $37.29$ \sep \secondB{0.95} \sep \firstB{0.07} & $31.00$ \sep $0.95$ \sep $0.07$ \\
        & 3DGRT \cite{moenne2024gaussray} & \venue{\sigg25} & $27.20$ \sep $0.82$ \sep $0.25$ & $23.20$ \sep $0.83$ \sep $0.22$ & $29.23$ \sep \secondB{0.90} \sep $0.32$ & \mathDash & \mathDash & \mathDash \\
        & 3DGUT \cite{wu20253dgut} & \venue{\cvpr25}  & $27.26$ \sep $0.81$ \sep $0.22$ & $22.90$ \sep $0.84$ \sep $0.17$  & \mathDash & \mathDash & \mathDash & \mathDash \\
        & VKRayGS \cite{bulo2025hardware} & \venue{\cvpr25} & $27.27$ \sep \thirdB{0.82} \sep $0.22$ & \mathDash & \mathDash & \mathDash & \mathDash & \mathDash \\
        & \ever~\cite{mai2025ever} & \venue{\iccv25}  & $27.51$ \sep \secondB{0.83} \sep $0.23$ & \mathDash & \mathDash & \mathDash & \mathDash & \mathDash \\
        & Mip-\nerf\!\taken \cite{barron2021mip} & \venue{\cvpr22} & $27.69$ \sep $0.79$ \sep $0.24$ & $22.22$ \sep $0.76$ \sep $0.26$ & $29.40$ \sep \secondB{0.90} \sep \thirdB{0.25}  & \mathDash & \mathDash & \mathDash \\
        & 3DGEER \cite{huang20263dgeer} & \venue{\arxiv25} & $27.76$ \sep \thirdB{0.82} \sep $0.21$ & \mathDash & \mathDash & \mathDash & \mathDash & \mathDash \\
        & \zipNerf \!\cite{barron2023zip}           & \venue{\iccv23} & \secondB{28.54} \sep \secondB{0.83} \sep \thirdB{0.19} & \mathDash & \mathDash & \mathDash & \mathDash & $33.69$ \sep \thirdB{0.97} \sep ~~\mathDash~~ \\
        \myTopRule
        \multirow{19}{*}{\rotatebox{90}{\textbf{Raster}}}
        & GES \cite{hamdi2024ges} & \venue{\cvpr24} & $26.91$ \sep $0.79$ \sep $0.25$ & $23.35$ \sep $0.84$ \sep $0.20$ & $29.68$ \sep \secondB{0.90} \sep \thirdB{0.25} & \mathDash & \mathDash & \mathDash \\
        & Stable-GS \cite{wang2025stablegs} & \venue{\arxiv25} & \mathDash            & $24.04$ \sep \thirdB{0.86} \sep $0.16$ & $29.66$ \sep \firstB{0.91} \sep \secondB{0.24} & \mathDash & \mathDash & \mathDash \\
        & Convex \cite{held2025convex} & \venue{\cvpr25} & $27.29$ \sep $0.80$ \sep $0.21$ & $23.95$ \sep $0.85$ \sep $0.16$ & \thirdB{29.81} \sep \secondB{0.90} \sep \secondB{0.24} & \mathDash & \mathDash & \mathDash \\
        & Vol3DGS \cite{talegaonkar2025volumetrically} & \venue{\cvpr25} & $27.30$ \sep $0.81$ \sep $0.21$ & $23.74$ \sep $0.85$ \sep $0.17$ & $29.72$ \sep \firstB{0.91} \sep \thirdB{0.25} & \mathDash & \mathDash & \mathDash\\
        & Tex-GS \cite{chao2025textured} & \venue{\cvpr25} & $27.35$ \sep \secondB{0.83} \sep \thirdB{0.19} & $24.26$ \sep $0.85$ \sep $0.17$ & $28.33$ \sep \thirdB{0.89} \sep $0.36$ & \mathDash & \mathDash & $33.24$ \sep \thirdB{0.97} \sep $0.04$\\
        & DARB \cite{arunan2025darb} & \venue{\arxiv25} & $27.45$ \sep $0.81$ \sep $0.21$ & $23.64$ \sep $0.85$ \sep $0.17$ & $29.63$ \sep \secondB{0.90} \sep \secondB{0.24} & \mathDash & \mathDash & \mathDash \\
        & 3D-HGS \cite{li2025half} & \venue{\cvpr25} & \mathDash           & \firstB{25.08} \sep $0.84$ \sep \thirdB{0.14} & $29.80$ \sep \secondB{0.90} \sep \thirdB{0.25} & \mathDash & \mathDash & \mathDash \\
        & Project \cite{huang2024error} & \venue{\eccv24} & $27.48$ \sep $0.82$ \sep $0.21$ & $23.43$ \sep ~~\mathDash~~ \sep ~~\mathDash~~ & $29.51$ \sep  ~~\mathDash~~ \sep ~~\mathDash~~ & \mathDash & \mathDash & \mathDash \\
        & \vdgs~\cite{malarz2025gaussian} & \venue{CVIU25} & $27.64$ \sep $0.81$ \sep $0.22$ & $24.02$ \sep $0.85$ \sep $0.18$ & $29.54$ \sep \firstB{0.91} \sep \secondB{0.24} & \mathDash & \mathDash & $35.97$ \sep \firstB{0.99} \sep \firstB{0.01} \\ 
        & Revise \cite{rota2024revising} & \venue{\eccv24} & $27.70$ \sep \thirdB{0.82} \sep $0.22$ & $24.10$ \sep \thirdB{0.86} \sep $0.18$ & $29.64$ \sep \firstB{0.91} \sep $0.30$ & \mathDash & \mathDash & \mathDash \\
        & HyRF \cite{wang2025hyrf} & \venue{\nips25} & $27.78$ \sep \thirdB{0.82} \sep $0.21$ & $24.02$ \sep $0.84$ \sep $0.18$ & \firstB{30.37} \sep \firstB{0.91} \sep \secondB{0.24} & \mathDash & \mathDash & \mathDash \\
        & \mipSplat~\cite{yu2024mip} & \venue{\cvpr24} & $27.79$ \sep \secondB{0.83} \sep $0.20$ & \mathDash & \mathDash & \mathDash & \mathDash & \mathDash \\
        & FreGS \cite{zhang2024fregs} & \venue{\cvpr24} & $27.85$ \sep \secondB{0.83} \sep $0.21$ & $23.96$ \sep $0.85$ \sep $0.18$ & \secondB{29.93} \sep \secondB{0.90} \sep \secondB{0.24} & \mathDash & \mathDash & \mathDash \\
        \hhline{|~|--------|}
        & \gs \released \cite{kerbl2023gaussians}      & \venue{\sigg23} & $27.20$ \sep \thirdB{0.82} \sep $0.21$  & $23.14$ \sep $0.84$ \sep $0.18$ & $29.41$ \sep \secondB{0.90} \sep \secondB{0.24} & \mathDash & \mathDash & $33.31$ \sep ~~\mathDash~~ \sep ~~\mathDash~~ \\
        & \gs \retrained \cite{kerbl2023gaussians} & \venue{\sigg23} & $27.67$ \sep \thirdB{0.82} \sep $0.20$  & $23.52$ \sep $0.84$ \sep $0.18$ & $29.50$ \sep \secondB{0.90} \sep \secondB{0.24} & $29.03$ \sep \thirdB{0.90} \sep \thirdB{0.16} & \secondB{37.56} \sep \firstB{0.96} \sep \secondB{0.08} & $34.83$ \sep \secondB{0.98} \sep \thirdB{0.03} \\
        & \gs + \qnn & \mathDash & $27.96$ \sep \secondB{0.83} \sep $0.20$ & $24.11$ \sep $0.85$ \sep $0.17$  & $29.67$ \sep \firstB{0.91} \sep \thirdB{0.25} & $29.37$ \sep \secondB{0.91} \sep \secondB{0.15} & \firstB{37.99} \sep \firstB{0.96} \sep \firstB{0.07} & \thirdB{35.36} \sep \secondB{0.98} \sep \thirdB{0.03} \\
        \hhline{|~|--------|}
        & \mcmc \released \cite{kheradmand20243d}       & \venue{\nips24} & \mathDash            & $24.29$ \sep \thirdB{0.86} \sep $0.19$ & $29.67$ \sep $0.89$ \sep $0.32$ & \thirdB{29.52} \sep \secondB{0.91} \sep $0.20$ & \mathDash & $33.80$ \sep \thirdB{0.97} \sep $0.04$ \\
        & \mcmc \retrained \cite{kheradmand20243d}  & \venue{\nips24} & \thirdB{28.26} \sep \firstB{0.84} \sep \secondB{0.17}  & \thirdB{24.53} \sep \secondB{0.87} \sep \secondB{0.13} & $29.32$ \sep \firstB{0.91} \sep \secondB{0.24} & \secondB{30.10} \sep \firstB{0.92} \sep \firstB{0.12} & $35.14$ \sep \thirdB{0.94} \sep \thirdB{0.09} & \secondB{36.14} \sep \secondB{0.98} \sep \secondB{0.02} \\
        & \mcmc + \qnn                                   & \mathDash & \firstB{28.58} \sep \firstB{0.84} \sep \firstB{0.16}  & \secondB{24.87} \sep \firstB{0.88} \sep \firstB{0.12} & $29.76$	/ \firstB{0.91} \sep \firstB{0.23} & \firstB{30.34} \sep \firstB{0.92} \sep \firstB{0.12} & \thirdB{35.57} \sep \secondB{0.95} \sep \thirdB{0.09} &  \firstB{36.58} \sep \secondB{0.98} \sep \secondB{0.02} \\
        \myTopRule
    \end{tabular}
    }
    \vspace{-0.3cm}
\end{table*}

        \noIndentHeading{Evaluation Metrics and Baselines.}
            We use \psnr, \ssim \cite{wang2004image} and VGG-16 based normalized Learned Perceptual Image Patch Similarity (\lpips) \cite{zhang2018unreasonable} metrics \cite{kerbl2023gaussians}.
            We integrate \qnn into two \gsOnly{}-based baselines (\cref{fig:gs_qnn}): \gs \cite{kerbl2023gaussians} and \mcmc \cite{kheradmand20243d}.
            See \cref{sec:supp_impl_nvs} for more details.

        \noIndentHeading{Results.}
            \cref{tab:nvs_results} shows the \nvs results on all six datasets. 
            \cref{tab:nvs_results} confirms that adding a \qnn outperforms all the baselines and significantly benefits the challenging \nvs task.
            Notably, a \qnn with the \mcmc baseline surpasses even \zipNerf \cite{barron2023zip} in this task, a testament to its effectiveness.

        \noIndentHeading{Training Times.}
            \gs, \gs+ \qnn, \mcmc, \mcmc+ \qnn and \zipNerf models take $0.67$, $0.83$, $2.00$, $2.86$ and $32.00$\footnote{See Tab. 1 of \ever \cite{mai2025ever}.} GPU-hours respectively for training on a \mipNerf scene measured with V100 GPUs.
            Thus, \mcmc+ \qnn obtains \zipNerf fidelity with $\mathbf{10}\times$ lower training time than \zipNerf.

        \noIndentHeading{\HighFid Comparison.}
            \cref{tab:nvs_edge} reports \psnrEdge on the edges of the \mipNerf images as \cite{feng2025sasnet}. 
            We first identify Canny edges \cite{canny1986computational} with thresholds $0, 100$, followed by disk dilation of kernel size $3$ \cite{feng2025sasnet}. 
            \cref{tab:nvs_edge} results show that adding \qnn improves the \psnrEdge score for both the \gs and \mcmc baselines, and thus provides superior edge fidelity than baselines and \zipNerf.

        \noIndentHeading{Qualitative Results.}
            \cref{fig:qualitative_nvs} shows qualitative results using \gs \cite{kerbl2023gaussians} baseline on multiple datasets. 
            Adding \qnn to \gs reconstructs better details resulting in higher fidelity synthesis visually. 

        \noIndentHeading{\qnn Scalability.}
            \cref{fig:mcmc_scale_gaussians} shows the scalability of \qnn with the \mcmc baseline \cite{kheradmand20243d} on the \mipNerf dataset \cite{barron2022mipnerf360} varying the number of final gaussians.
            It confirms that adding \qnn consistently improves \nvs at all gaussians.

    \subsection{Alternatives to \qnn}\label{sec:nvs_alter}

        We next consider other alternatives to incorporate details in \gsOnly.

        \noIndentHeading{\mlp-based Neural Fields.}
            We first replace the \qnn with several \mlp-based neural fields that take queries as inputs. 
            This includes a vanilla \mlp (with same or different channels), a Fourier \mlp \cite{tancik2020fourier}, \hashGrid \mlp \cite{muller2022instant} and \mlps with \erf \cite{yang2019fine}, \sinc \cite{saratchandran2024activation}, \siren \cite{sitzmann2020siren} and \finer \cite{liu2024finer} activations.
            None of these alternatives improve \nvs performance either on the val or the train sets in \cref{tab:nvs_arch}.
            Interestingly, the negligible gain in training \psnr for the Fourier \mlp ($29.77$) vs. \gs ($29.71$) suggests that Fourier \mlp \textbf{underfits} the training images. 
            In contrast, \qnn increases the training performance ($30.11$), confirming its superior ability to fit the training images.

\begin{figure}[!t]
    \begin{minipage}[!t]{0.52\linewidth}
        \centering
        \vspace{-2.0cm}
        \captionof{table}{\textbf{\mipNerf \psnrEdge Evaluation}. 
        Adding \qnn improves \psnrEdge for both \gs and \mcmc baselines. 
        \textbf{\mcmc+ \qnn outperforms \zipNerf} in \psnrEdge.
        [Key: \textbf{Best}].
        }
        \label{tab:nvs_edge}
        \scalebox{0.75}{
        \setlength\tabcolsep{0.1cm}
        \begin{tabular}{l m l  l}
            \myTopRule
            Method & \psnr (\uparrowRHDSmall) & \psnrEdge (\uparrowRHDSmall) \\
            \myTopRule
            \zipNerf\cite{barron2023zip} & $28.54$ & $26.34$\\
            \hline
            \gs\cite{kheradmand20243d} & $27.67$ & $25.76$\\
            \gs + \qnn & \good{27.96}{0.29} & \good{25.89}{0.13}\\
            \hline 
            \mcmc\cite{kheradmand20243d} & $28.26$ & $26.28$\\
            \mcmc + \qnn & \good{28.58}{0.32} & \good{26.45}{0.17} \\
            \myTopRule
        \end{tabular}
        }
    \end{minipage}%
    \hfill
    \begin{minipage}[!t]{0.45\linewidth}
        \centering
        \includegraphics[width=0.8\linewidth]{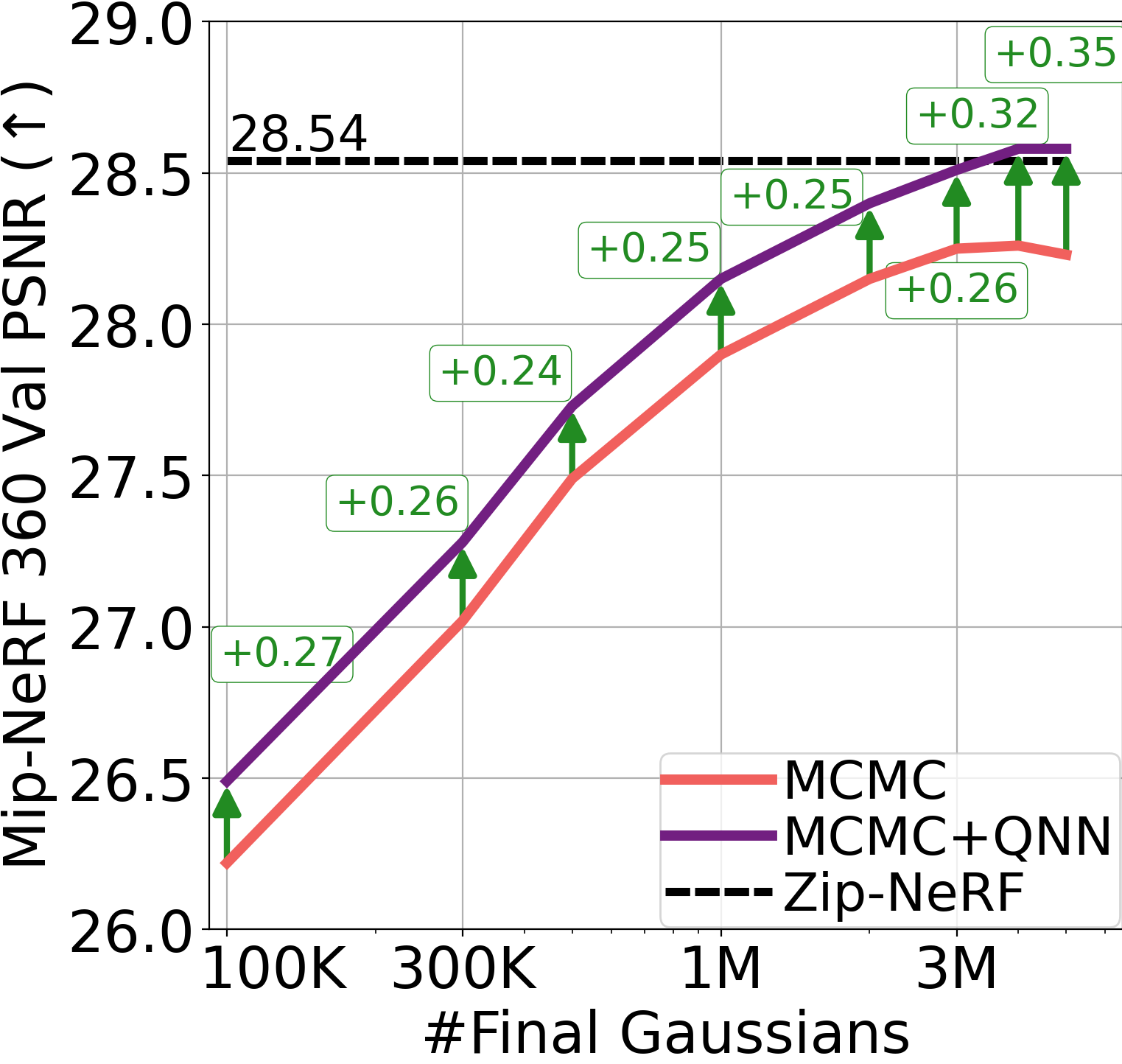}
        \captionof{figure}{\textbf{\qnn Scalability.} 
        \textbf{Adding \qnn consistently improves} \mcmc \cite{kheradmand20243d} at varied final gaussians.} 
        \label{fig:mcmc_scale_gaussians}
        \vspace{-0.1cm}
    \end{minipage}%
\end{figure}
\begin{table*}[t]
    \centering
    \caption{\textbf{\mlp-based Neural Fields} with \gs \cite{kerbl2023gaussians} baseline on the \mipNerf dataset.
    Adding \textbf{\qnn outperforms all \mlps} in fitting training data.
    We refrain from highlighting if there are more than $5$ entries with the same number.
    [Key: \firstBText{Best}~, \secondBText{Second-best}~, \thirdBText{Third-best}~].
    }
    \label{tab:nvs_arch}
    \vspace{-0.3cm}
    \scalebox{0.7}{
    \setlength\tabcolsep{0.1cm}
    \begin{tabular}{l m ccc m ccc }
        \multirow{2}{*}{Method} &  \multicolumn{3}{cm}{\textbf{Val}} & \multicolumn{3}{c}{\textbf{Train}}\\
        & \psnr (\uparrowRHDSmall) & \ssim (\uparrowRHDSmall) & \lpips (\downarrowRHDSmall) & \psnr (\uparrowRHDSmall) & \ssim (\uparrowRHDSmall) & \lpips (\downarrowRHDSmall)\\
        \myTopRule
        \gs~\cite{kerbl2023gaussians} & $27.67$ & $0.82$ & $0.20$ & $29.71$ & \firstF{0.90} & $0.16$\\
        \gs + Vanilla \mlp$\!_{\text{same}}$ & $27.71$ & \firstF{0.83} & $0.20$ & $29.70$ & $0.89$ & $0.16$\\
        \gs + Vanilla \mlp~\cite{mildenhall2020nerf} & $27.74$ & \firstF{0.83} & $0.20$ & \thirdF{29.72} & $0.89$ & $0.16$\\
        \gs + Fourier \mlp~\cite{tancik2020fourier} & \thirdF{27.78} & $0.82$ & $0.20$ & \secondF{29.77} & $0.89$ & $0.16$\\
        \gs + \HashGrid \mlp~\cite{muller2022instant} & \secondF{27.79} &	$0.82$ & $0.20$ & \secondF{29.77} & $0.89$	& $0.16$ \\
        \gs + \erf \mlp~\cite{yang2019fine} & $27.68$ & $0.82$ & $0.20$ & $29.66$ & $0.89$ & $0.16$ \\
        \gs + \sinc \mlp~\cite{saratchandran2024activation} & $27.68$ & $0.82$ & $0.20$ & $29.66$ & $0.89$ & $0.16$ \\
        \gs + \siren \mlp~\cite{sitzmann2020siren} & $14.49$ & $0.62$ & $0.51$ & $14.66$ & $0.67$ & $0.49$ \\
        \gs + \finer \mlp~\cite{liu2024finer} & $27.64$ & $0.82$ & $0.20$ & $29.66$ & $0.89$ & $0.16$ \\
        \hline
        \gs + \qnn & \firstF{27.96} & \firstF{0.83} & \firstF{0.20} & \firstF{30.11} & \firstF{0.90} & \firstF{0.16}\\
    \end{tabular}
    }
    \vspace{-0.3cm}
\end{table*}

\begin{figure}[!t]
    \centering
    \includegraphics[width=0.95\linewidth]{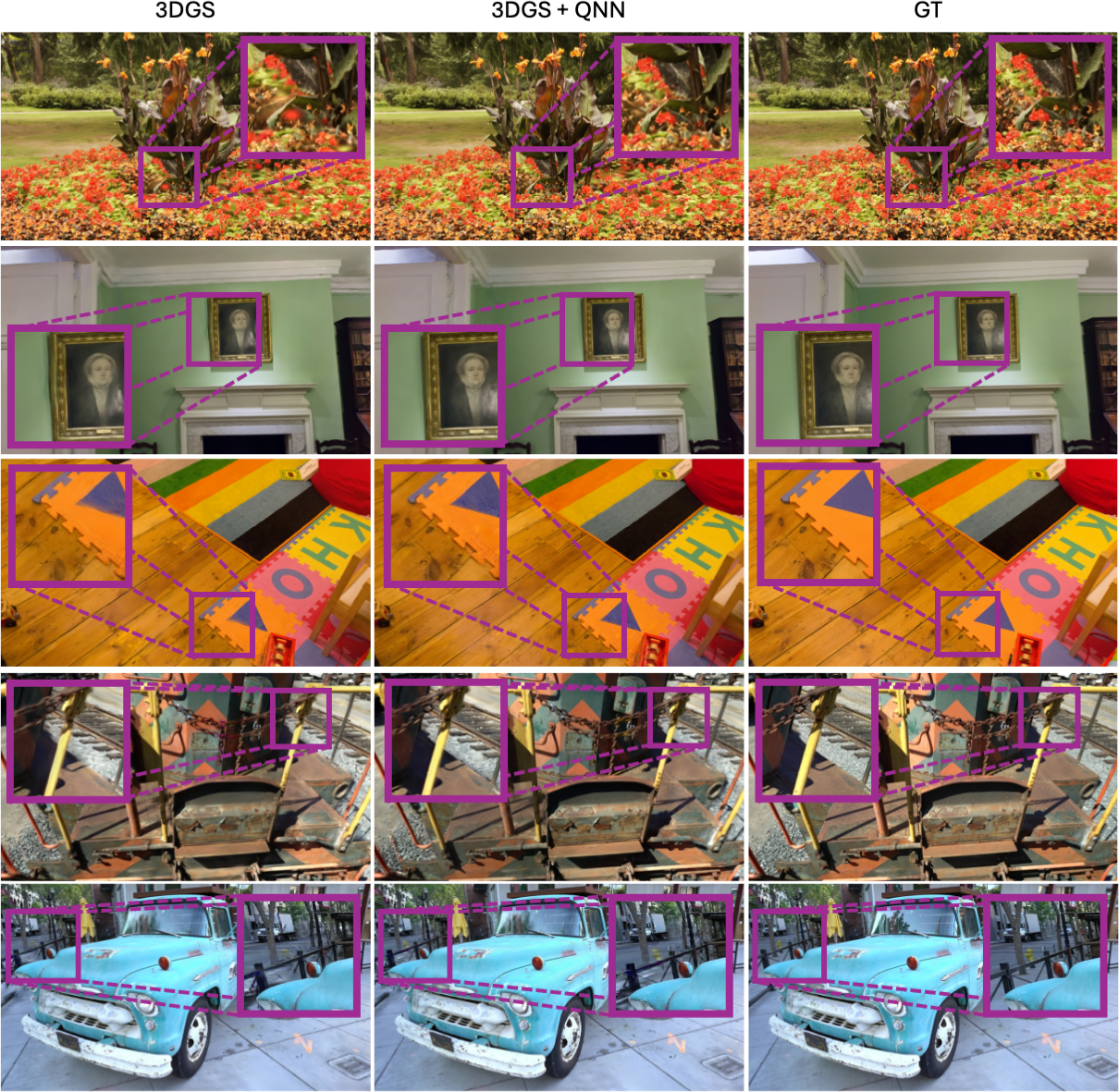}
    \caption{\textbf{\nvs Qualitative Results}.
        We provide examples of \nvs task using \gs \cite{kerbl2023gaussians} baseline on multiple datasets. 
        \textbf{Adding \qnn faithfully reconstructs details} resulting in higher fidelity synthesis visually. 
        Inset figures highlight differences.
    }
    \label{fig:qualitative_nvs}
    \vspace{-0.3cm}
\end{figure}

\begin{table*}[!t]
    \centering
    \caption{\textbf{Adding extra parameters to 3D gaussian} of the \gs baseline \cite{kerbl2023gaussians} on the \mipNerf dataset.
    Adding parameters to \threeD gaussians \textbf{overfits on the training data}, and does not generalize well.
    \textbf{\qnn shows good generalization (val) and fitting (train)} performance.
    [Key: \firstBText{Best}~, \secondBText{Second-best}~, \thirdBText{Third-best}~].
    }
    \label{tab:nvs_gauss_params}
    \vspace{-0.3cm}
    \scalebox{0.7}{
    \setlength\tabcolsep{0.1cm}
    \begin{tabular}{l m ccc m ccc }
        \multirow{2}{*}{Method} &  \multicolumn{3}{cm}{\textbf{Val}} & \multicolumn{3}{c}{\textbf{Train}}\\
        & \psnr (\uparrowRHDSmall) & \ssim (\uparrowRHDSmall) & \lpips (\downarrowRHDSmall) & \psnr (\uparrowRHDSmall) & \ssim (\uparrowRHDSmall) & \lpips (\downarrowRHDSmall)\\
        \myTopRule
        \gs~\cite{kerbl2023gaussians} & \secondF{27.67} & \secondF{0.82} & \firstF{0.20} & $29.71$ & \thirdF{0.90} & \thirdF{0.16}\\
        \gs + $1$-layer \cnn & \thirdF{27.47} & \secondF{0.82} & \secondF{0.21} & \thirdF{30.12} &  \secondF{0.91} &	\secondF{0.15}\\
        \gs + $2$-layer \cnn & $26.83$ & \thirdF{0.81} & \thirdF{0.23} & \secondF{30.56} &  \firstF{0.92} &	\firstF{0.14}\\
        \gs + $4$-layer \cnn & $27.42$ & \thirdF{0.81} & \thirdF{0.23} & \firstF{31.23} &  \firstF{0.92} & \firstF{0.14}\\
        \hline
        \gs + \qnn & \firstF{27.96} & \firstF{0.83} & \firstF{0.20} & $30.11$ & \thirdF{0.90} & \thirdF{0.16}\\
    \end{tabular}
    }
    \vspace{-0.7cm}
\end{table*}

        \noIndentHeading{Adding Parameters to 3D Gaussians.}
            \qnn adds parameters in splatted \twoD space.  
            An alternative is to add extra $32$ parameters to the \threeD gaussians in \threeD space similar to Textured-GS \cite{chao2025textured}, splatting these parameters, passing through a \cnn with varying convolutional layers and then adding as a residual in \cref{tab:nvs_gauss_params}. 
            Note that this does not change the number of gaussians but only changes the number of parameters per gaussian.
            \cref{tab:nvs_gauss_params} results show that adding parameters to \threeD gaussians overfits on the training data and does not generalize well.

        \noIndentHeading{Increasing Number of 3D Gaussians.}
            Another potential alternative to improve details is by increasing the number of \threeD gaussians.
            Since the \gs baseline \cite{kerbl2023gaussians} does not restrict the number of \threeD gaussians, we go with the \mcmc baseline \cite{kheradmand20243d} which lets us control the final number of gaussians.
            \cref{tab:nvs_inc_gauss} shows that this approach suffers from optimization difficulties beyond $4$M gaussians since the training performance improves but the val performance saturates.
            \cref{fig:mcmc_scale_gaussians} also shows this trend where the red curve of \mcmc saturates after $3$M gaussians.
            This agrees with \cref{th:num_gaussians} which shows that increasing \psnr needs increasing the number of gaussians exponentially.

\begin{table*}[!t]
    \centering
    \caption{\textbf{Increasing number of 3D gaussians} of the \mcmc baseline \cite{kheradmand20243d} on the \mipNerf dataset \textbf{suffers from optimization difficulties} beyond $4$M since the training performance improves but the val performance saturates.
    Adding \textbf{\qnn outperforms increasing the gaussians}.
    [Key: \firstBText{Best}~, \secondBText{Second-best}~, \thirdBText{Third-best}~].
    }
    \label{tab:nvs_inc_gauss}
    \vspace{-0.3cm}
    \scalebox{0.7}{
    \setlength\tabcolsep{0.1cm}
    \begin{tabular}{l m c m ccc m ccc }
        \multirow{2}{*}{Method} & \multirow{2}{*}{\#Gaussians (M)} & \multicolumn{3}{cm}{\textbf{Val}} & \multicolumn{3}{c}{\textbf{Train}}\\
        & & \psnr (\uparrowRHDSmall) & \ssim (\uparrowRHDSmall) & \lpips (\downarrowRHDSmall) & \psnr (\uparrowRHDSmall) & \ssim (\uparrowRHDSmall) & \lpips (\downarrowRHDSmall)\\
        \myTopRule
        \mcmc~\cite{kheradmand20243d} & $4$ & \secondF{28.26} & \firstF{0.84} & \secondF{0.17} & $30.31$ & \secondF{0.91} & \secondF{0.14}\\
        \mcmc~\cite{kheradmand20243d} & $5$ & $28.23$ & \firstF{0.84} & \secondF{0.17} & $30.51$ & \secondF{0.91} & \firstF{0.13}\\
        \mcmc~\cite{kheradmand20243d} & $6$ & \thirdF{28.25} & \firstF{0.84} & \secondF{0.17} & \thirdF{30.66} & \firstF{0.92} & \firstF{0.13}\\
        \mcmc~\cite{kheradmand20243d} & $7$ & $28.23$ & \firstF{0.84} & \secondF{0.17} & \firstF{30.78} & \firstF{0.92} & \firstF{0.13}\\
        \hline
        \mcmc + \qnn & $4$ & \firstF{28.58} & \firstF{0.84} & \firstF{0.16} & \secondF{30.69} & \secondF{0.91} & \secondF{0.14}\\
    \end{tabular}
    }
\end{table*}

\begin{table*}[!t]
    \centering
    \caption{\textbf{Ablation Studies} of \qnn with \gs \cite{kerbl2023gaussians} on \mipNerf.
    \qnn performs the best in most metrics.
    We refrain from highlighting if there are more than $5$ entries with the same number.
    [Key: \firstBText{Best}~, \secondBText{Second-best}~, \thirdBText{Third-best}~, $\concat$= Concatenation].
    }
    \label{tab:ablation}
    \vspace{-0.3cm}
    \scalebox{0.7}{
    \setlength\tabcolsep{0.1cm}
    \begin{tabular}{l l m ccc m ccc }
        \multirow{2}{*}{Change} & \multirow{2}{*}{From \rightarrowRHDSmall~To} &  \multicolumn{3}{cm}{\textbf{Val}} & \multicolumn{3}{c}{\textbf{Train}}\\
        & & \psnr (\uparrowRHDSmall) & \ssim (\uparrowRHDSmall) & \lpips (\downarrowRHDSmall) & \psnr (\uparrowRHDSmall) & \ssim (\uparrowRHDSmall) & \lpips (\downarrowRHDSmall)\\
        \myTopRule
        \gs \cite{kerbl2023gaussians} & \mathDash & $27.67$ & $0.82$ & $0.20$ & $29.71$ & $0.90$ & $0.16$\\
        \hline
        \multirow{3}{*}{Architecture} & \qnn \rightarrowRHDSmall~Vanilla \mlp~\cite{mildenhall2020nerf} & $27.74$ & \firstF{0.83} & $0.20$ & $29.72$ & $0.89$ & $0.16$\\
        & \qnn \rightarrowRHDSmall~Fourier \mlp~\cite{tancik2020fourier} & $27.78$ & $0.82$ & $0.20$ & $29.77$ & $0.89$ & $0.16$\\
        & \twoD Conv ($3{\times}3$) \rightarrowRHDSmall~Linear ($1{\times}1$) & $27.83$ & $0.82$ & $0.20$ & $29.84$ & $0.90$ & $0.16$ \\
        \hline
        \multirow{2}{*}{Input} & Queries $\concat$~\lf \rightarrowRHDSmall~Queries & $27.81$ & $0.82$ & $0.20$ & $29.80$ & $0.90$ & $0.16$\\
        & Queries $\concat$~\lf \rightarrowRHDSmall~\lf & $27.61$ & $0.82$ & $0.20$ & $29.96$ & $0.90$ & $0.16$\\
        \hline
        \multirow{5}{*}{Queries} & \twoD \rightarrowRHDSmall~\threeD location & $27.59$ & $0.82$ & \firstF{0.19} & \firstF{30.74} & \firstF{0.91} & \firstF{0.14}\\
        & \twoD \rightarrowRHDSmall~\threeD location\hspace{0.06cm}$\concat$ direction & $27.66$ & $0.82$ & $0.20$ &	$29.87$ & $0.90$ & $0.16$\\
        & \twoD \rightarrowRHDSmall~\Raymap~\cite{mildenhall2020nerf} & $27.73$ & $0.82$ & $0.20$ & $29.85$ & $0.90$ & $0.16$\\
        & \twoD \rightarrowRHDSmall~\Raymap $\concat$ \twoD & \thirdF{27.88} & $0.82$ & $0.20$ & \thirdF{30.03} & $0.90$ & $0.16$\\
        & \twoD \rightarrowRHDSmall~\Plucker~\cite{plucker1828analytisch} & $27.67$ & $0.82$ & $0.20$ & $29.85$ & $0.90$ & $0.16$\\
        \hline
        \multirow{3}{*}{Activation} & \relu \rightarrowRHDSmall~\erf~\cite{yang2019fine} & $26.43$ & $0.80$ & $0.22$ & $27.98$ & $0.87$ & $0.18$\\
        & \relu \rightarrowRHDSmall~\sinc~\cite{saratchandran2024activation} & $27.70$ & $0.82$ & $0.20$ & $29.79$ & $0.90$ & $0.16$\\
        & \relu \rightarrowRHDSmall~\finer~\cite{liu2024finer} & $27.33$ & $0.81$ & $0.22$ & $28.62$ & $0.87$ & $0.19$ \\
        \hline
        \multirow{3}{*}{Encoding} & Vanilla \rightarrowRHDSmall~Fourier \cite{tancik2020fourier} & $27.85$ & $0.82$ & $0.20$ & $29.90$ & $0.90$ & $0.16$ \\
        & Vanilla \rightarrowRHDSmall~Per-axis Fourier & $27.85$ & $0.82$ & $0.20$ & $29.89$ & $0.90$ & $0.16$ \\
        & Vanilla \rightarrowRHDSmall~Exponential \cite{mildenhall2020nerf} & \secondF{27.94} & $0.82$ & $0.20$ & $29.99$ & $0.90$ & $0.16$ \\
        \hline
        \gs + \qnn & \mathDash & \firstF{27.96} & \firstF{0.83} & $0.20$ & \secondF{30.11} & $0.90$ & $0.16$\\
    \end{tabular}
    }
    \vspace{-0.7cm}
\end{table*}

    \subsection{Ablation Studies}\label{sec:ablation} 

        To understand the design choices of \qnn, we conduct ablations 
        using the \gs baseline on the \mipNerf dataset \cite{barron2022mipnerf360} following \cref{sec:nvs}.
        We report metrics on both the val and train sets to evaluate generalization and fitting, respectively.

        \noIndentHeading{Architecture.}
            We changed the \qnn to vanilla and Fourier \mlp \cite{tancik2020fourier} as well as \twoD convolution to linear layer ($3{\times}3 \rightarrowRHDSmall1{\times}1$) in \qnn.
            \cref{tab:ablation} results reinforces that convolution remains the key for learning details.

        \noIndentHeading{Input.}
            We next examined the input to the \qnn.
            \cref{tab:ablation} shows that using queries or \lf images alone decreases \nvs performance on both the val and train sets, confirming input concatenation is crucial for optimal results.

        \noIndentHeading{Queries.}
            As explained in \cref{sec:qnn}, we explored different queries, including:
            \begin{itemize}
                \item \threeD location: Unprojecting pixel locations into \threeD using camera parameters and splatted depth.
                \item \threeD location $\concat$~direction: Concatenating \threeD view directions to the \threeD locations.
                \item \Raymap: Using \threeD camera centers with view directions, as in \nerf \cite{mildenhall2020nerf}. 
                \cite{gao2024cat3d} refer to this construct as a \raymap.
                \item \Raymap$\concat$~\twoD: Appending the \twoD queries to the \raymap queries.
                \item \Plucker: Using the \Plucker rays \cite{plucker1828analytisch,grossberg2001general} as queries.
            \end{itemize}
            \cref{tab:ablation} shows that \twoD queries performed best on the validation set, while \threeD location queries yielded the best performance on the training set. 
            This divergence warrants further investigation, and we leave a deeper exploration of query performance for future work.

        \noIndentHeading{Activation.}
            We then explored changing the \relu activation within \qnn to \erf \cite{yang2019fine}, \sinc \cite{saratchandran2024activation} and \finer \cite{liu2024finer} activations. 
            These experiments use our proposed \qConv layer, with the only change being the activation.
            \cref{tab:ablation} shows that even these changes prove suboptimal to our best performance.

        \noIndentHeading{Encoding.}
            We also change vanilla encodings in \qnn with randomized Fourier \cite{tancik2020fourier}, per-axis Fourier and exponential style \nerf encodings \cite{mildenhall2020nerf}.
            Changing encodings to any of these decreases performance on both val and train sets.

\begin{figure}[!tb]
    \centering
    \includegraphics[width=0.9\linewidth]{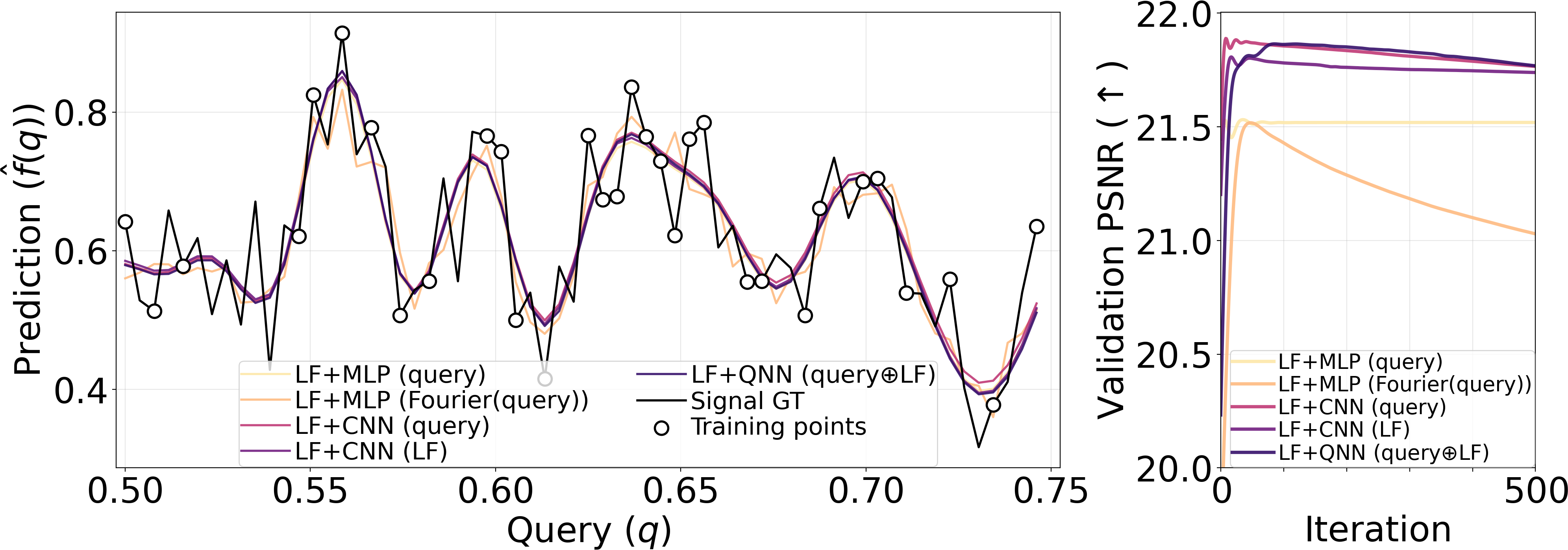}
    \caption{\textbf{1D Regression.} 
    This simple experiment compares networks which take the \oneD queries and \lf signal as input.
    All networks add \lf signal to their residual predictions to produce the final output, which is used in evaluation and training.
    The standard \mlp-based networks including Fourier encodings take \oneD coordinates as queries.
    \qnn changes the linear layer to a \oneD convolutional layer and also takes the \lf signal in addition to the \oneD queries.
    \textbf{\qnn outperforms \mlp-based architectures including Fourier \mlps}.
    [Key: $\concat$= Concatenation].
    }
    \label{fig:1d_reg}
    \vspace{-0.1cm}
\end{figure}

    \subsection{Other Tasks}\label{sec:other_tasks}

        We next evaluate on \oneD and \twoD regression and \sr tasks. 
        Please refer to \cref{tab:exp_setting} for the experimental settings and \cref{sec:app_imp_details} for more details. 

        \noIndentHeading{1D Regression.}\label{sec:1dreg}
            We now compare the performance of \mlps, \cnns and our proposed \qnns on a toy \oneD regression task.
            The task involves taking \oneD coordinates (and \lf signal) as input to produce \oneD output.
            We use the synthetic \oneD signal from \cite{tancik2020fourier}.
            \cref{fig:1d_reg} confirms that all \cnn models outperform \mlps in \psnr, including Fourier \mlp in \psnr.

        \noIndentHeading{2D Regression.}\label{sec:2dreg}
            We next evaluate the \twoD residual image regression task, where the networks take the \twoD coordinates (and \lf image) as input to output three-dimensional RGB values.
            We use the \mipNerf dataset \cite{barron2022mipnerf360} for this experiment.
            \cref{tab:img_reg_results} confirms that adding \qnn benefits learning details for \twoD residual image regression task on both val and train splits.

\begin{table*}[t]
    \caption{
        \textbf{2D Regression Results.}
        \textbf{Adding \qnn outperforms} baselines on this regression task. 
        [Key: \textbf{Best}].
    }
    \label{tab:img_reg_results}
    \centering
    \vspace{-0.3cm}
    \small
    \scalebox{.7}{    
        \begin{tabular}{l@{\hspace{.2cm}}c@{\hspace{.2cm}}ml@{\hspace{.2cm}}l@{\hspace{.2cm}}ml@{\hspace{.2cm}}l}
        \multirow{2}{*}{Encoding $(\encoding)$} & \multirow{2}{*}{\qnn} & \multicolumn{2}{cm}{\bfseries Val} & \multicolumn{2}{c}{\bfseries Train}\\
        & & \psnr ($\uparrowRHDSmall$) & \ssim ($\uparrowRHDSmall$) & \psnr ($\uparrowRHDSmall$) & \ssim ($\uparrowRHDSmall$) \\
        \myTopRule
        \LowFreq images & \mathDash & $29.61$ & $0.89$ & $29.65$ & $0.89$ \\
        \HashGrid \cite{muller2022instant}~~~~~~~~~~~& \xmark & $29.35$ & $0.88$ & $30.34$ & $0.88$ \\
        \hline
        \multirow{2}{*}{Vanilla} & \xmark & $29.69$ & $0.92$ & $29.69$ & $0.92$ \\ 
        & \cmark & \good{32.48}{2.79} & \good{0.96}{0.04} & \good{32.51}{2.82} & \good{0.96}{0.04} \\
        \hline
        \multirow{2}{*}{Fourier \cite{tancik2020fourier}} & \xmark & $29.79$ & $0.92$ & $29.80$ & $0.92$\\
        & \cmark & \good{32.87}{3.09} & \good{0.94}{0.02} & \good{32.93}{3.13} & \good{0.94}{0.02}\\
        \end{tabular}
    }
    \vspace{-0.3cm}
\end{table*}

\begin{table*}[t] 
    \caption{
        \textbf{2D Image \sr Results.}
        \textbf{Adding \qnn outperforms} baselines across both \sr sub-tasks. 
        [Key: \textbf{Best}, \released= Reported, \retrained= Retrained]. 
    }
    \label{tab:sr_results}
    \vspace{-0.3cm}
    \centering
    \small
    \scalebox{.7}{    
        \begin{tabular}{l@{\hspace{.2cm}}c@{\hspace{.2cm}}ml@{\hspace{.2cm}}l@{\hspace{.2cm}}ml@{\hspace{.2cm}}l}
        \multirow{2}{*}{Method} & \multirow{2}{*}{\qnn} & \multicolumn{2}{cm}{\textbf{Face \sr}} & \multicolumn{2}{c}{\textbf{Natural \sr}} \\
        & & \psnr ($\uparrowRHDSmall$) & \ssim ($\uparrowRHDSmall$) & \psnr ($\uparrowRHDSmall$) & \ssim ($\uparrowRHDSmall$) \\
        \myTopRule
        PULSE \cite{menon2020pulse} & \mathDash & $16.88$ & $0.44$ & \mathDash & \mathDash\\
        FSRGAN \cite{chen2018fsrnet} & \mathDash & $23.01$ & $0.62$ & \mathDash & \mathDash\\
        \srThree \released \cite{saharia2022image} & \xmark & $23.04$ & $0.65$ & \mathDash & \mathDash\\
        \srThree \retrained \cite{saharia2022image} & \xmark & $23.51$ & $0.68$ & \mathDash & \mathDash \\
        \srThree \cite{saharia2022image} & \cmark & \good{23.63}{0.12} & \good{0.69}{0.01} & \mathDash & \mathDash \\
        \myTopRule
        \realEsrgan \released \cite{wang2021realesrgan} & \xmark & \mathDash & \mathDash & $24.84$ & $0.71$ \\
        \realEsrgan \retrained \cite{wang2021realesrgan} & \xmark & \mathDash & \mathDash & $24.41$ & $0.70$ \\
        \realEsrgan \cite{wang2021realesrgan} & \cmark & \mathDash & \mathDash & \good{24.63}{0.22} & \nothing{0.70}\\
        \end{tabular}
    }
    \vspace{-0.3cm}
\end{table*}

        \noIndentHeading{2D Super-Resolution (\sr).}\label{sec:sr}
            We further evaluate on Face and Natural image \sr tasks in \cref{tab:sr_results}.
            The findings confirm that adding \qnns consistently increases the fidelity for both \sr sub-tasks. 
            \cref{fig:qualitative_sr} in Supp shows qualitative results.

\section{Conclusions}

    Gaussian Splatting has revolutionized the field of Novel View Synthesis (\nvs) with faster training and real-time rendering. 
    However, its reconstruction fidelity still trails behind the powerful radiance models such as \zipNerf.
    Motivated by our theoretical result that both queries (such as coordinates) and neighborhood are important to learn \highFid signals, this paper proposes \qConvsFull (\qConvs), a simple yet powerful modification using the \neighborhood properties of convolution. 
    \qConvs convolve a \lowFid signal with queries to output residual and achieve \highFid reconstruction.
    We empirically demonstrate that combining Gaussian splatting with \qConv neural networks (\qnns) results in state-of-the-art \nvs on real-world scenes, even outperforming \zipNerf on image fidelity. 
    \qnns also enhance performance of \oneD regression, \twoD regression and \twoD super-resolution tasks.
    Future work involves exploring whether advances in neural implicit fields arising from orthogonal research directions are also beneficial for \qnns.

    \noIndentHeading{Limitations.}
        \qnns require \neighborhood information for processing.
        \qnns cannot be used in tasks without {\neighborhood}s such as casting a single ray in a \nerf \cite{mildenhall2020nerf} or in \sdf \cite{park2019deepsdf}.

\clearpage
\bibliographystyle{splncs04}
\bibliography{references}

\clearpage
\appendix
{
\centering
\Large
\textbf{\paperTitle}\\
\vspace{0.5em}Supplementary Material \\
\vspace{1.0em}
}

\renewcommand{\theHsection}{A\arabic{section}} 
\resumecontents 
{
\begingroup
    \hypersetup{linkcolor=blue}
    \newcommand{\authcount}[1]{}
    \renewcommand{\numberline}[1]{#1.\hspace{0.1cm}}
    \makeatletter
    \renewcommand{\@dottedtocline}[5]{%
      \ifnum #1>\c@tocdepth \else
        \vskip \z@ \@plus.2\p@
        {\leftskip #2\relax \rightskip \@tocrmarg \parfillskip -\rightskip
         \parindent #2\relax\@afterindenttrue
         \interlinepenalty\@M
         \leavevmode
         \@tempdima #3\relax
         \advance\leftskip \@tempdima \null\nobreak\hskip -\leftskip
         {#4}\nobreak
         \leaders\hbox{$\m@th\mkern 1.0mu\textcolor{blue}{.}\mkern 1.0mu$}\hfill
         \nobreak
         \hb@xt@\@pnumwidth{\hfil\normalfont\color{blue}#5}\par}%
      \fi
    }
    \renewcommand*\l@section{\@dottedtocline{1}{1.75em}{1.5em}}
    \renewcommand*\l@subsection{\@dottedtocline{2}{3.75em}{2.5em}}
    \makeatother
    {\large\textbf{\textcolor{blue}{Table of Contents}}}
    \makeatletter
    \@starttoc{toc}
    \makeatother
\endgroup
}

\addtocontents{toc}{\protect\setcounter{tocdepth}{3}}
\section{Qualitative Results}

    \subsection{3D \nvs}

        We show rendered novel-view videos from \mcmc \cite{kheradmand20243d}, \mcmc+ \qnn, \gs \cite{kerbl2023gaussians} and \gs+ \qnn methods of \mipNerf \cite{barron2022mipnerf360} scenes on the project page.
        We provide the red sliders to inspect details.
        These rendered videos show that adding \qnn, faithfully renders the scene with higher fidelity compared to both the \gsOnly baselines.

    \subsection{2D Image \sr}\label{sec:supp_sr}

        We provide some qualitative results of the \twoD image \sr experiment, with \realEsrgan \cite{wang2021realesrgan} as the baseline in \cref{fig:qualitative_sr}.
        Adding \qnn to \realEsrgan faithfully reconstructs details in various regions and results in higher fidelity synthesis visually.

\begin{figure}[!t]
    \centering
    \includegraphics[width=\linewidth]{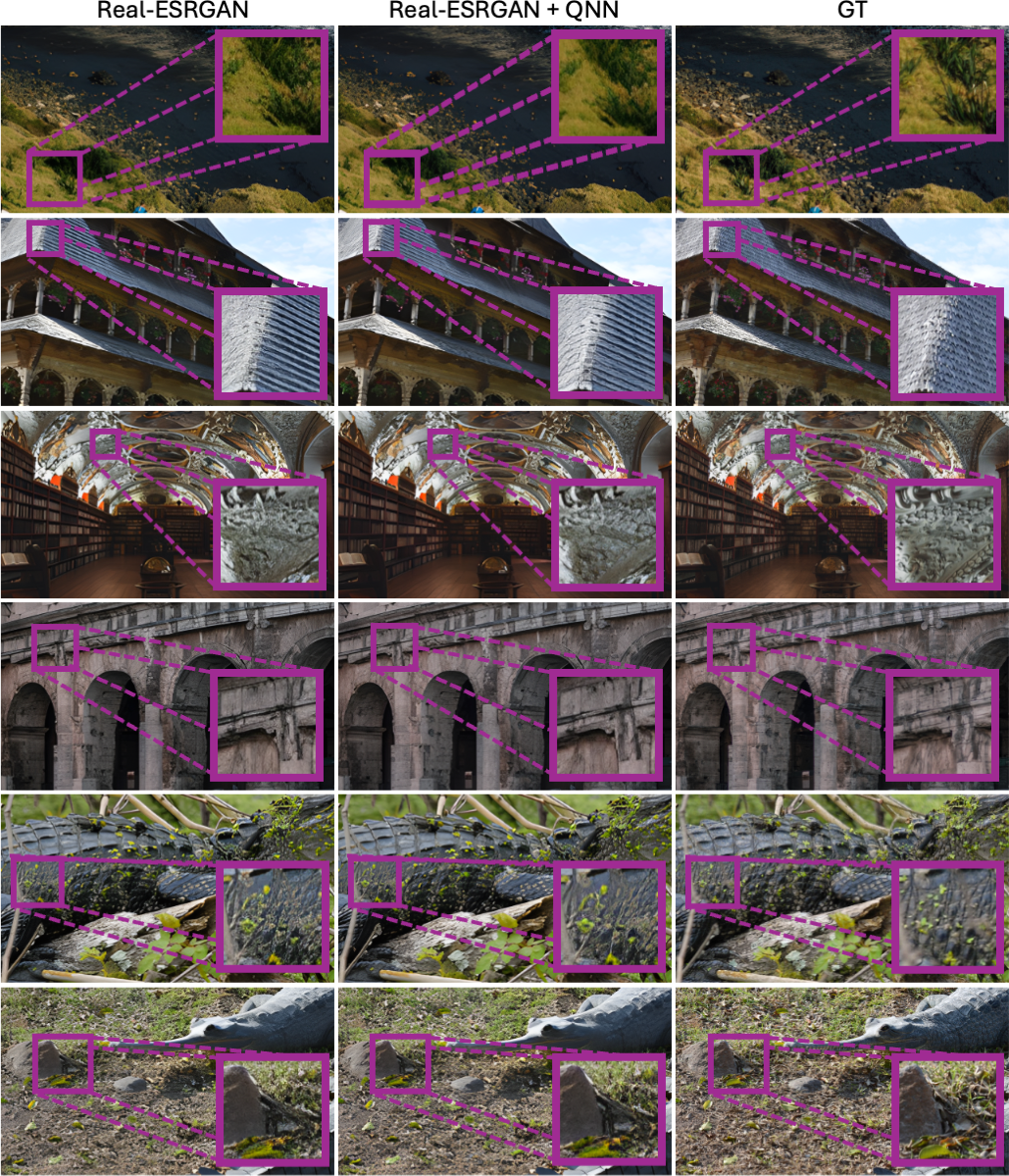}
    \caption{
        \textbf{\sr Results} of \divTwoK \val images.
        Adding \qnn to \realEsrgan faithfully reconstructs details in various regions and results in higher fidelity synthesis visually.
        We highlight the differences in inset figures.
    }
    \label{fig:qualitative_sr}
    \vspace{-0.6cm}
\end{figure}

\clearpage
\section{Theoretical Results}

    We next include some proofs, which we could not put in the main paper due to space constraints.

    \subsection{Benefit of Features}\label{sec:supp_qnn_theory}

        We show that adding \neighborhood and queries provide the best possible risk.

        \noIndentHeading{Setup.} 
            The input space $\queryInput$ is either a finite discrete set or a subset of $\mathbb{R}^{\inDim}$ and $P_{\queryInput}$ is a probability distribution on $\queryInput$. The target function $\nField: \queryInput \rightarrow [0, 1]$ is any measurable deterministic function. A feature map $\parametrize : \queryInput \rightarrow \mathcal{Z}$ is a deterministic measurable function, mapping $\queryInput$ to a feature space $\mathcal{Z}$, resulting in a random variable $Z = \parametrize(X)$ distributed according to $P_Z$.
            The hypothesis class $\mathcal{H}(\parametrize)$ is the set of all deterministic measurable functions that map from $\mathcal{Z}$ to the output space $[0, 1]$.
            The risk of a hypothesis $h \in \mathcal{H}(\parametrize)$ is the expected squared (\mse) loss: $\mathcal{R}(h, \parametrize, \nField) = \expect_{P_X} [ (\nField(X)-h(\parametrize(X)))^2 ]$. 
            And the best achievable risk for a feature map $\parametrize$ is $\riskOptimal(\parametrize, \nField) = \min_{h\in \mathcal{H}(\parametrize)} \mathcal{R}(h, \parametrize, \nField)$. 

        \begin{lemma}[\textbf{Best achievable risk}]\label{lem:lower-risk}
            The lowest achievable risk for a feature map $\parametrize$ is $\riskOptimal(\parametrize, \nField) = \expect_{P_Z}[\mathbb{V}_{P_{X | Z}} (\nField(X) | Z) ]$.
        \end{lemma}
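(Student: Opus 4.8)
The plan is to reduce the statement to the classical characterization of the $L^2$-optimal predictor as a conditional expectation, now applied to the $\sigma$-algebra generated by $Z=\parametrize(X)$. First I would rewrite the risk of an arbitrary $h\in\mathcal{H}(\parametrize)$ using the tower property: since $h(\parametrize(X))=h(Z)$ is $\sigma(Z)$-measurable,
\begin{align}
\risk(h,\parametrize,\nField)=\expect_{P_X}\!\left[(\nField(X)-h(Z))^2\right]=\expect_{P_Z}\!\left[\expect_{P_{X\mid Z}}\!\left[(\nField(X)-h(Z))^2 \,\middle|\, Z\right]\right].
\end{align}
For each fixed $z$ in the support of $P_Z$, the inner term is $\expect_{P_{X\mid Z=z}}[(\nField(X)-c)^2]$ with $c=h(z)$ a constant, which by the bias--variance decomposition equals $\mathbb{V}_{P_{X\mid Z=z}}(\nField(X))+\big(c-\expect_{P_{X\mid Z=z}}[\nField(X)]\big)^2$; hence it is minimized over all $c\in\realDomain$ uniquely at $c=m(z):=\expect[\nField(X)\mid Z=z]$, with minimal value the conditional variance $\mathbb{V}_{P_{X\mid Z=z}}(\nField(X))$.

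Next I would verify that this pointwise minimizer is admissible, i.e. that the map $h^\star:=m$ actually lies in $\mathcal{H}(\parametrize)$. Measurability of $m$ follows because any version of $\expect[\nField(X)\mid Z]$ is $\sigma(Z)$-measurable and therefore factors through a measurable function of $z$; and since $\nField$ takes values in $[0,1]$, monotonicity of conditional expectation gives $m(z)\in[0,1]$, so the range constraint in the definition of $\mathcal{H}(\parametrize)$ is not binding. Substituting $h=h^\star$ into the tower-property expression yields $\risk(h^\star,\parametrize,\nField)=\expect_{P_Z}[\mathbb{V}_{P_{X\mid Z}}(\nField(X)\mid Z)]$, while the pointwise bound above shows every other $h$ has risk at least this; hence the minimum over $\mathcal{H}(\parametrize)$ is attained at $h^\star$ and equals the claimed quantity.

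The main obstacle I anticipate is the measurability and disintegration bookkeeping: making precise that "choose the optimal constant separately for each $z$" assembles into a single measurable $h^\star:\mathcal{Z}\to[0,1]$, and that the regular conditional distribution $P_{X\mid Z}$ and the conditional variance $\mathbb{V}_{P_{X\mid Z}}(\nField(X)\mid Z)$ are well-defined. Under the paper's hypotheses ($\queryInput$ a discrete integer lattice or a subset of $\realDomain^{\inDim}$, and $\parametrize$ measurable) these spaces are standard Borel, so existence of regular conditional probabilities and the $L^2$-projection argument are routine \citep{bishop2006pattern}; I would cite these rather than reprove them, and keep the written proof to the tower-property identity, the pointwise bias--variance step, and the admissibility check on $h^\star$.
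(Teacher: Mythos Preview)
Your proposal is correct and follows essentially the same route as the paper: identify the optimal predictor as the conditional expectation $h^\star(Z)=\expect[\nField(X)\mid Z]$ (citing \citep{bishop2006pattern}), verify $h^\star\in\mathcal{H}(\parametrize)$ via measurability and the $[0,1]$ range bound, and then compute the resulting risk as the expected conditional variance via the tower property. The paper's version is terser---it invokes the $L^2$-optimality of the conditional mean directly rather than spelling out the pointwise bias--variance step---but the argument and the admissibility check are the same.
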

        \begin{proof}
            Given that we are using the squared loss, and since $f$ is a deterministic function, the optimal predictor is given by \cite[Eq. (3.36)]{bishop2006pattern}:
            $$h^*(Z) = \expect_{P_{X|Z}}[\nField(X) | Z] .$$
            
            Since $f$ and $\parametrize$ are deterministic and measurable, the conditional expectation $\expect_{P_{X|Z}}[f(X) | Z]$ exists, is a deterministic function of Z and is measurable in Z. Furthermore, since $f(X) \in [0, 1]$ the conditional expectation also lies in $[0, 1]$. Therefore, $h^*$ there is a deterministic measurable function from $\mathcal{Z}$ to $[0, 1]$, and thus belongs to $\mathcal{H}(\parametrize)$.
        
            The minimal risk is therefore given by:
            $$\riskOptimal(\parametrize, \nField) = \mathcal{R}(h^*, \parametrize, \nField) = \expect_{P_Z}[\expect_{P_{X|Z}}[(f(X)-h^*(Z))^2 | Z] ] = \expect_{P_Z}[\mathbb{V}_{P_{X|Z}}[f(X) | Z] ]$$
            where the last step uses the definition of $h^*$ together with the definition of conditional variance.
        \end{proof}

        \begin{lemma}[\textbf{Feature augmentation cannot worsen the best achievable risk}]\label{lem:feature_aug}
            Define two feature maps $\parametrize_i$ and $\parametrize_j$ where $\parametrize_j(X)=(\parametrize_i(X), \tilde{\parametrize}(X))$ contains the features of $\parametrize_i$ and $\tilde{\parametrize}$ provides some additional features. Then: $$\riskOptimal(\parametrize_i, \nField) \ge \riskOptimal(\parametrize_j, \nField).$$
        \end{lemma}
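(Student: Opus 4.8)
The plan is to observe that enlarging the feature vector can only enlarge the hypothesis class: every predictor built on $\parametrize_i$ is faithfully mimicked by a predictor built on $\parametrize_j$ that simply discards the extra coordinates. Write the codomains as $\mathcal{Z}_i$ and $\mathcal{Z}_j = \mathcal{Z}_i \times \tilde{\mathcal{Z}}$ (with the product $\sigma$-algebra), and let $\pi : \mathcal{Z}_j \to \mathcal{Z}_i$ be the coordinate projection $\pi(z_i, \tilde z) = z_i$. Given any $h \in \mathcal{H}(\parametrize_i)$, set $\bar h = h \circ \pi$. First I would check that $\bar h$ is admissible for $\parametrize_j$: $\pi$ is measurable, $h$ is measurable with values in $[0,1]$, so $\bar h : \mathcal{Z}_j \to [0,1]$ is measurable, i.e. $\bar h \in \mathcal{H}(\parametrize_j)$.

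The key step is a pointwise identity. Since $\parametrize_j(X) = (\parametrize_i(X), \tilde{\parametrize}(X))$, we have $\bar h(\parametrize_j(X)) = h\big(\pi(\parametrize_i(X), \tilde\parametrize(X))\big) = h(\parametrize_i(X))$ for every $X$, hence
\[
\risk(\bar h, \parametrize_j, \nField) = \expect_{P_X}\big[(\nField(X) - \bar h(\parametrize_j(X)))^2\big] = \expect_{P_X}\big[(\nField(X) - h(\parametrize_i(X)))^2\big] = \risk(h, \parametrize_i, \nField).
\]
Because $\bar h$ is an admissible hypothesis for $\parametrize_j$, we get $\riskOptimal(\parametrize_j, \nField) \le \risk(\bar h, \parametrize_j, \nField) = \risk(h, \parametrize_i, \nField)$; since this holds for all $h \in \mathcal{H}(\parametrize_i)$, taking the infimum over $h$ — attained, by Lemma~\ref{lem:lower-risk} — yields $\riskOptimal(\parametrize_j, \nField) \le \riskOptimal(\parametrize_i, \nField)$, which is the claim.

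I do not expect a serious obstacle here; the only points requiring care are the measurability of $\bar h$ (routine once $\mathcal{Z}_j$ carries the product $\sigma$-algebra) and the existence of the relevant minima, which is exactly the content of Lemma~\ref{lem:lower-risk}. As an alternative route that more directly invokes the ``information never hurts'' principle of \citet{xu2022minimum}, one can instead combine Lemma~\ref{lem:lower-risk} with the law of total variance applied to $Z_i = \parametrize_i(X)$ and $\tilde Z = \tilde\parametrize(X)$: since $\mathbb{V}(\nField(X)\mid Z_i) = \expect\big[\mathbb{V}(\nField(X)\mid Z_i,\tilde Z)\mid Z_i\big] + \mathbb{V}\big(\expect[\nField(X)\mid Z_i,\tilde Z]\mid Z_i\big)$ and the second summand is nonnegative, taking expectations gives $\riskOptimal(\parametrize_i,\nField) = \expect[\mathbb{V}(\nField(X)\mid Z_i)] \ge \expect[\mathbb{V}(\nField(X)\mid Z_i,\tilde Z)] = \riskOptimal(\parametrize_j,\nField)$. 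Either argument chains immediately (with $\parametrize_j = \parametrize_2$ augmenting $\parametrize_1$, then $\parametrize_j = \parametrize_3$ augmenting $\parametrize_2$) to give the monotone non-increasing chain $\riskOptimal(\parametrize_1,\nField) \ge \riskOptimal(\parametrize_2,\nField) \ge \riskOptimal(\parametrize_3,\nField)$ asserted in Theorem~\ref{theorem:monotone_feat}; the remaining equality $\riskOptimal(\parametrize_3,\nField)=0$ would be handled separately, by exhibiting $x$ itself inside $\parametrize_3(x)$ so that $h^*(\parametrize_3(x)) = \nField(x)$ exactly.
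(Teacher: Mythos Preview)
Your primary argument is correct and, in fact, more elementary than the paper's. The paper also introduces the projection $\pi(z_i,\tilde z)=z_i$, but rather than using your hypothesis-class containment observation (that $h\circ\pi\in\mathcal{H}(\parametrize_j)$ reproduces the risk of $h$ exactly), it notes that $Z_i$ is a deterministic function of $Z_j$, hence $F\perp Z_i\mid Z_j$, and then appeals to the data-processing inequality for Bayes risk from \citet{xu2022minimum} as a black box. Your route avoids that external citation entirely and makes the monotonicity transparent at the level of the hypothesis classes; the paper's route is shorter on the page but outsources the actual inequality. Your alternative via the law of total variance is also sound and, combined with Lemma~\ref{lem:lower-risk}, gives a fully self-contained proof that is closer in spirit to what the cited data-processing inequality is doing under the hood. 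Either of your arguments is a perfectly acceptable replacement for the paper's.
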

        \begin{proof}
            Denote the random variable $F=f(X)$ and the random variables resulting from the feature maps as $Z_i = \parametrize_i(X), \tilde{Z}=\tilde{\parametrize}(X), Z_j=\parametrize_j(X)=(Z_i, \tilde{Z})$. Therefore, there exists a function $\pi(z_i, \tilde{z}_j) = z_i$ which recovers $Z_i$ from $Z_j$. Thus $F \perp Z_i \mid Z_j$ (conditioning on $Z_j$ already determines $Z_i$, so adding $Z_i$ provides no further information).
            Applying the data processing inequality for Bayes risk \cite{xu2022minimum} gives:
            $$\riskOptimal(\parametrize_i, \nField) \ge \riskOptimal(\parametrize_j, \nField).$$
        \end{proof}
        
        \begin{lemma}[\textbf{Including coordinates $X$ in the features yields zero optimal risk}]\label{lem:best_risk_using_x}
            Let $Z = \parametrize(X) = (X, \tilde{\parametrize}(X))$ where $\tilde{\parametrize}$ is some feature map. Then
            $$\riskOptimal(\parametrize, \nField) = 0$$
        \end{lemma}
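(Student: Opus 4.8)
The plan is to exhibit an explicit hypothesis that already attains zero risk and then appeal to non-negativity of the squared loss. Since $Z = \parametrize(X) = (X, \tilde{\parametrize}(X))$, the coordinate projection $\pi : \mathcal{Z} \to \queryInput$ given by $\pi(x, \tilde{z}) = x$ satisfies $\pi(\parametrize(X)) = X$ identically. First I would set $h^{*} = \nField \circ \pi$. As a composition of measurable maps it is measurable, and it takes values in $[0,1]$ because $\nField$ does, so $h^{*} \in \mathcal{H}(\parametrize)$. Then $h^{*}(\parametrize(X)) = \nField(\pi(\parametrize(X))) = \nField(X)$ pointwise, hence $\mathcal{R}(h^{*}, \parametrize, \nField) = \expect_{P_X}[(\nField(X) - \nField(X))^{2}] = 0$. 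Since $\mathcal{R}(h, \parametrize, \nField) \ge 0$ for every $h \in \mathcal{H}(\parametrize)$, the minimum $\riskOptimal(\parametrize, \nField)$ equals $0$.

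An equivalent route, which I would mention for continuity with the preceding lemmas, goes through Lemma~\ref{lem:lower-risk}: because $X$ is recoverable from $Z$ via $\pi$, the random variable $\nField(X)$ is $\sigma(Z)$-measurable, so $\expect_{P_{X|Z}}[\nField(X) \mid Z] = \nField(X)$ and the conditional variance $\mathbb{V}_{P_{X|Z}}(\nField(X) \mid Z)$ vanishes $P_Z$-almost surely; substituting into $\riskOptimal(\parametrize, \nField) = \expect_{P_Z}[\mathbb{V}_{P_{X|Z}}(\nField(X) \mid Z)]$ yields $0$. This also makes clear that the result subsumes the conclusion $\riskOptimal(\parametrize_3, \nField) = 0$ of \cref{theorem:monotone_feat}, since $\parametrize_3(x)$ contains $x$ as a sub-block.

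The only point requiring care is the measurability of $\pi$ and hence of $h^{*}$. This is where the structural hypothesis on $\queryInput$ — a discrete integer lattice or a subset of $\realDomain^{\inDim}$ — is used: in both cases coordinate projections are continuous, therefore Borel measurable, and products of measurable spaces carry the product $\sigma$-algebra under which $\pi$ is measurable. Consequently this step is routine rather than a genuine obstacle; the substantive content of the lemma is simply the observation that retaining the raw input among the features makes the feature map lossless with respect to any target depending only on that input.
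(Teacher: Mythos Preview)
Your proposal is correct. The second route you sketch---invoking \cref{lem:lower-risk} and observing that the conditional variance $\mathbb{V}_{P_{X|Z}}(\nField(X)\mid Z)$ vanishes because $Z$ determines $X$---is exactly the paper's proof. Your primary route, exhibiting the explicit hypothesis $h^{*}=\nField\circ\pi$ and appealing to non-negativity of the squared loss, is a valid and slightly more elementary alternative: it is self-contained and does not rely on \cref{lem:lower-risk} at all, whereas the paper's argument keeps the presentation uniform with the surrounding lemmas by routing everything through the expected-conditional-variance formula. The measurability discussion you add is a nice touch; the paper leaves it implicit.
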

        \begin{proof}
            Using \cref{lem:lower-risk}: $\riskOptimal(\parametrize, \nField) = \expect_{P_Z}[\mathbb{V}_{P_{X | Z}} (\nField(X) | Z) ]$. 
            Since $Z$ contains $X$, conditioning on $Z$ determines $X$ exactly. As $f$ is a deterministic function of $X$, we have $\mathbb{V}_{P_{X | Z}} (f(X) | Z)=0$. 
            Therefore:
            $$\riskOptimal(\parametrize, \nField) = \expect_{P_Z}[\mathbb{V}_{P_{X | Z}} (\nField(X) | Z) ] = \expect_{P_Z}[0]=0.$$
        \end{proof}

        \noIndentHeading{Proof of \cref{theorem:monotone_feat}}
        \begin{proof}
            The different feature maps are $\parametrize_1(x) = (g(x))$; $\parametrize_2(x) = (\parametrize_1(x-\receptive),\parametrize_1(x),\parametrize_1(x+\receptive))$ for a fixed $\receptive > 0$; $\parametrize_3(x) = (\parametrize_2(x),x-\receptive,x,x+\receptive)$.
            Applying \cref{lem:feature_aug} twice, first with $\parametrize_i=\parametrize_1, \parametrize_j=\parametrize_2$ and then with $\parametrize_i=\parametrize_2, \parametrize_j=\parametrize_3$ results in the non-strict inequalities:
            $$\riskOptimal(\parametrize_1, \nField) \ge \riskOptimal(\parametrize_2, \nField) \ge \riskOptimal(\parametrize_3, \nField).$$
            Applying \cref{lem:best_risk_using_x} for $\parametrize=\parametrize_3$ shows that $\riskOptimal(\parametrize_3, \nField)=0$. Combining these results proves the theorem:
            $$\riskOptimal(\parametrize_1, \nField) \ge \riskOptimal(\parametrize_2, \nField) \ge \riskOptimal(\parametrize_3, \nField) = 0$$
        \end{proof}

    \subsection{Shortcomings of Gaussian-based Representation}\label{sec:shortcoming_gaussian}

        We next present a theoretical result on the shortcoming of the Gaussian-based representations.
        We will show that increasing image-fidelity (increasing \psnr or decreasing \mse) for gaussian-based representation requires exponential increase in the number of Gaussians.

        \begin{theorem}[\textbf{Approximation Error with Wavelets}, Theorem 1, \cite{maiorov2002lower}]\label{th:approx_wavelet}
            Let $\nField$ be a class of measurable real-valued functions defined on a compact domain in $\realDomain^\inDim$, and let $\measFunc_p^{\sobolev,\inDim}$ denote a Sobolev space function with pseudo-dimension $r$ for some $r>0$.
            Next, consider the set of $\numPrim$-term $\wavelet$ wavelet approximations:
            \begin{align}
                \nField_\numPrim^l(\wavelet) &= \sum_{j=1}^\numPrim c_j \wavelet(\mathbf{A}_jx + \mathbf{b}_j), 
                \label{eq:wavelet_approx}
            \end{align}
            with the affine terms $\mathbf{A}_j \in \realDomain^{l\times\inDim}$ and $\mathbf{b}_j \in \realDomain^{l}$.
            For $r>1$ and $1\le p,q<\infty$ satisfying $\frac{r}{\inDim}>\left(\frac{1}{p}{-}\frac{1}{q}\right)_+$ and integers $1\le\numPrim,\inDim<\infty$, the approximation error in $L_q$ norm is bounded from below by
            \begin{align}
                L_q(\measFunc_p^{\sobolev,\inDim}, \nField_\numPrim^l(\wavelet)) &\ge \frac{c}{(\numPrim\log \numPrim)^{\sobolev/\inDim}},
                \label{eq:wavelet_approx_error}
            \end{align}
            where $c$ is a constant independent of $\numPrim$.
        \end{theorem}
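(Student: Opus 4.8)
\noIndentHeading{Proof strategy.}
The plan is to reduce the claimed bound to two largely orthogonal ingredients: a combinatorial bound on the complexity (pseudo-dimension) of the manifold of $\numPrim$-term wavelet sums, and a general lower bound saying that \emph{no} family of that small a complexity can approximate a Sobolev ball too well. First I would fix the (suitably regular, as in \citet{maiorov2002lower}) wavelet $\wavelet$ and write the approximating set as the image of a $\numPrim(\inDim^2+\inDim+1)$-parameter map,
\begin{align}
\mathcal{M}_\numPrim(\wavelet) = \Bigl\{\, x \mapsto \textstyle\sum_{j=1}^\numPrim c_j\,\wavelet(\mathbf{A}_j x + \mathbf{b}_j)\ :\ c_j\in\realDomain,\ \mathbf{A}_j\in\realDomain^{\inDim\times\inDim},\ \mathbf{b}_j\in\realDomain^{\inDim} \,\Bigr\},
\end{align}
and set $N := \numPrim(\inDim^2+\inDim+1)$.

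\textbf{Step 1: complexity of the dictionary manifold.}
I would show $\mathrm{Pdim}\bigl(\mathcal{M}_\numPrim(\wavelet)\bigr) \le c_1\,\numPrim\log\numPrim$, with $c_1$ depending only on $\inDim$ and the algebraic complexity of $\wavelet$. The standard route: evaluating a member of $\mathcal{M}_\numPrim(\wavelet)$ at $m$ fixed points and comparing with a threshold produces a sign vector cut out, in the $N{+}1$ free parameters, by a bounded number of polynomial (for $\wavelet$ piecewise-polynomial, e.g.\ a B-spline bump) or analytic (for $\wavelet$ real-analytic, e.g.\ a Gaussian) equations and inequalities of bounded degree; Warren's theorem on sign patterns of polynomials — or Khovanskii-type fewnomial bounds in the analytic case — then caps the number of realizable sign vectors by $m^{\bigO{N}}$, which forces any shattered set to have size $m \le \bigO{N\log N} = \bigO{\numPrim\log\numPrim}$ for fixed $\inDim$. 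For the Gaussian case relevant to the surrounding discussion this step is immediate, since $\wavelet(\mathbf{A}x+\mathbf{b}) > t$ is a quadratic inequality in $x$.

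\textbf{Step 2: nonlinear-width lower bound for Sobolev classes.}
Next I would invoke the general principle — in this sharp form due to DeVore--Howard--Micchelli and Maiorov--Ratsaby — that for \emph{any} set $\mathcal{M}$ of real-valued functions on the compact domain with $\mathrm{Pdim}(\mathcal{M}) \le N$,
\begin{align}
\sup_{\nField \in \measFunc_p^{\sobolev,\inDim}}\ \inf_{g \in \mathcal{M}}\ \|\nField - g\|_{L_q} \ \ge\ c_2\, N^{-\sobolev/\inDim},
\end{align}
valid under exactly the hypothesis $\tfrac{\sobolev}{\inDim} > \bigl(\tfrac1p - \tfrac1q\bigr)_+$, which supplies the Sobolev embedding that keeps the test functions in $L_q$ and keeps them separated. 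Its proof exhibits $\ge 2^{cN}$ functions in the Sobolev ball — signed translates/dilates of a fixed bump on a grid of $\asymp N$ cells — that are pairwise $\gtrsim N^{-\sobolev/\inDim}$-apart in $L_q$, and then a counting argument (a pseudo-dimension-$N$ family admits only $\exp(\bigO{N})$ distinct ``rounded'' behaviours, too few to $\tfrac12 N^{-\sobolev/\inDim}$-approximate all of those functions simultaneously).

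\textbf{Step 3, and the main obstacle.}
Combining the two steps, with $N \le c_1\,\numPrim\log\numPrim$, gives
\begin{align}
L_q\bigl(\measFunc_p^{\sobolev,\inDim},\, \nField_\numPrim^l(\wavelet)\bigr) \ \ge\ c_2\,\bigl(c_1\,\numPrim\log\numPrim\bigr)^{-\sobolev/\inDim} \ =\ \frac{c}{(\numPrim\log\numPrim)^{\sobolev/\inDim}},
\end{align}
with $c$ independent of $\numPrim$, as claimed. The substantive obstacle is Step 1: pushing the pseudo-dimension down to $\bigO{\numPrim\log\numPrim}$ rather than a naive $\bigO{\numPrim^2}$, and tracking the $\log\numPrim$ factor honestly, requires the real-algebraic-geometry input applied to the \emph{affine-invariant} dictionary (and is exactly where the regularity hypothesis on $\wavelet$ is used); this logarithmic overhead is precisely why the bound lies only a $\log$-factor below the linear rate $\numPrim^{-\sobolev/\inDim}$. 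Step 2 is classical and Step 3 is bookkeeping.
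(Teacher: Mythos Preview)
The paper does not prove this theorem; it is quoted as Theorem~1 of \citet{maiorov2002lower} and used as a black-box input to the subsequent corollary bounding the number of Gaussians. There is therefore no in-paper argument for your proposal to be compared against. For what it is worth, your two-step outline --- bound the pseudo-dimension of the $\numPrim$-term affine-wavelet dictionary by $\bigO{\numPrim\log\numPrim}$ via Warren/Khovanskii sign-pattern counts, then feed that into the Maiorov--Ratsaby nonlinear-width lower bound for the Sobolev ball under the embedding condition $\tfrac{\sobolev}{\inDim} > (\tfrac1p-\tfrac1q)_+$ --- is exactly the strategy of the cited source, so your sketch is faithful to the original proof even though the present paper supplies none.
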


        \setcounter{corollary}{0}
        \begin{corollary}[\textbf{Number of Gaussians}]
            Let a \twoD image be a measurable real-valued function whose first derivative exists at all points.
            Then, the minimum number of \twoD splatted Gaussians, $\numPrim$, required to achieve an approximation $L_2$ error of $\errorTwo$ is bounded below by:
            \begin{align}
                \numPrim &\ge \exp  \left[ \lambert  \left(\frac{c}{\errorTwo}\right)^{2} \right],
            \end{align}
            where $\lambert$ denotes the Lambert $W$ function and $c$ is a constant independent of $\errorTwo$.
        \end{corollary}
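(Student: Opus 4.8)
The plan is to exhibit a sum of $\numPrim$ splatted \twoD Gaussians as a special case of the $\numPrim$-term wavelet family of \cref{th:approx_wavelet}, transfer the accompanying lower bound on the approximation error, and then invert that bound using the Lambert $\lambert$ function. First I would observe that a \twoD Gaussian splat with center $\mathbf{b}_j$ and (anisotropic) covariance is precisely a term of the form $\wavelet(\mathbf{A}_jx + \mathbf{b}_j)$, where $\wavelet$ is the standard Gaussian kernel and $\mathbf{A}_j$ encodes the inverse square root of the covariance; hence a sum of $\numPrim$ such splats has exactly the form $\nField_\numPrim^l(\wavelet)$ in \eqref{eq:wavelet_approx}, so the set of $\numPrim$-Gaussian renders is contained in the approximation class to which \cref{th:approx_wavelet} applies. (Restricting $\mathbf{A}_j$ to the positive-definite form used in splatting only shrinks the dictionary, so the lower bound can only be strengthened.)

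Next I would fix the parameters appearing in \cref{th:approx_wavelet}. A \twoD image lives on a compact planar domain, so $\inDim = 2$; measuring error in $L_2$ sets $q = 2$, and taking $p = 2$ the admissibility condition $\tfrac{\sobolev}{\inDim} > (\tfrac1p - \tfrac1q)_+ = 0$ is satisfied. Since the image has a first derivative at every point of the compact domain, its gradient is bounded and it belongs to the Sobolev class $\measFunc_2^{1,2}$, i.e.\ $\sobolev = 1$ and $\sobolev/\inDim = 1/2$. Substituting into \eqref{eq:wavelet_approx_error}, the best $\numPrim$-Gaussian approximation satisfies $\errorTwo \ge c/(\numPrim \log \numPrim)^{1/2}$ for a constant $c$ independent of $\numPrim$; absorbing the base of the logarithm into $c$, I take $\log$ to be the natural logarithm.

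Finally I would invert this inequality. Rearranging gives $\numPrim \log \numPrim \ge (c/\errorTwo)^2$; writing $t = \log \numPrim \ge 0$ (w.l.o.g.\ $\numPrim \ge 1$) so that $\numPrim \log \numPrim = t\, e^{t}$, this reads $t e^{t} \ge (c/\errorTwo)^2$. The map $t \mapsto t e^{t}$ is strictly increasing on $[0,\infty)$ with inverse the principal branch $\lambert$ of the Lambert function, so for $K \ge 0$ one has $t e^{t} \ge K \iff t \ge \lambert(K)$. Applying this with $K = (c/\errorTwo)^2$ yields $\log \numPrim \ge \lambert\!\big((c/\errorTwo)^2\big)$, hence $\numPrim \ge \exp\!\big[\lambert\big((c/\errorTwo)^2\big)\big]$, which is exactly the claimed bound.

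The step I expect to be the main obstacle is the first one: making rigorous that \twoD splatted Gaussians genuinely satisfy the hypotheses behind the cited lower bound — in particular that the Gaussian kernel has the finite pseudo-dimension / admissibility required for \cref{th:approx_wavelet}, and that the bound, originally stated for $\sobolev > 1$, still applies at $\sobolev = 1$ (which it does under the weaker condition $\sobolev/\inDim > (\tfrac1p - \tfrac1q)_+$ invoked above with $p = q = 2$). The remaining work — pinning down $\inDim$, $\sobolev$, $q$ and carrying out the Lambert-$\lambert$ inversion — is routine.
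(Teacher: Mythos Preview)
Your proposal is correct and follows the same strategy as the paper: identify splatted Gaussians as an instance of the $\numPrim$-term wavelet class in \cref{th:approx_wavelet}, set $\inDim=2$ and $\sobolev=1$, and invert the resulting inequality $\numPrim\log\numPrim \ge (c/\errorTwo)^{2}$ via the principal branch of the Lambert $\lambert$ function. The only difference is that you take $p=q=2$ directly, whereas the paper chooses $q=3$, $p>6/5$ and then invokes Jensen's inequality to pass from the $L_3$ error $\errorThree$ to the $L_2$ error $\errorTwo$; your route is more direct and avoids that detour (and the paper, like you, applies the theorem at $\sobolev=1$ despite the stated hypothesis $\sobolev>1$, so your flagged concern is shared).
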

    
        \cref{th:num_gaussians} says that the number of Gaussians $\numPrim$ required increases exponentially as the desired error $\errorTwo$ decreases.

        \begin{proof}
            Let $\error$ denote the approximation error, defined by the LHS of \cref{eq:wavelet_approx_error} in \cref{th:approx_wavelet}.
            Then, we have the following inequality:
            \begin{align}
                \error &\ge \frac{c}{(\numPrim\log \numPrim)^{\sobolev/\inDim}} \nonumber
            \end{align}
            To find the minimum number of Gaussians $\numPrim$, we rearrange the inequality to get
            \begin{align}
                \numPrim\log \numPrim &\ge \left(\frac{c}{\error}\right)^{\inDim/\sobolev}
            \end{align}
            To solve for number of Gaussians $\numPrim$, we use the Lambert $W$ function, which is defined as the inverse function of $f(w)=we^w$. 
            The Lambert $W$ function is the multi-valued inverse. 
            In our case, since the number of Gaussians $\numPrim$ must be a positive integer, we only consider the principal branch of the function, $\lambert_0(w)$ for $w\ge0$.
            We rewrite the inequality in a form that allows us to apply this function.
            Consider the equality case:
            \begin{align}
                \numPrim\log \numPrim &= \left(\frac{c}{\error}\right)^{\inDim/\sobolev}
            \end{align}
            Let $u=\log(\numPrim)$.
            Then, $\numPrim=\exp(u)$.
            Substituting, we have:
            \begin{align}
                \exp(u)~u &= \left(\frac{c}{\error}\right)^{\inDim/\sobolev} \nonumber \\
                \implies u &= \lambert \left(\frac{c}{\error}\right)^{\inDim/\sobolev}
            \end{align}
            Substituting back $u = \log \numPrim$ and writing for original inequality, we have:
            \begin{align}
                \numPrim &\ge \exp \left[ \lambert \left(\frac{c}{\error}\right)^{\inDim/\sobolev} \right].
            \end{align}
    
            For splatted \threeD Gaussians in \twoD, the domain dimension $\inDim{=}2$ and the wavelet $\wavelet$ is the gaussian wavelet.
            Additionally, the terms $c_j$, $\mathbf{A}_j$ and $\mathbf{b}_j$ in \cref{eq:wavelet_approx} corresponds to the opacity, projected \threeD covariance and projected \threeD mean respectively with $l=2$. 
            Also, the target image function has its first derivatives defined at all points, and so, it belongs to a Sobolev space with pseudo-dimension $\sobolev{=}1$.
            Substituting these values and choose $q=3$, $p>6/5$ into the above inequality gives the bound:
            \begin{align}
                \numPrim &\ge \exp  \left[ \lambert  \left(\frac{c}{\errorThree}\right)^{2} \right]
            \end{align}
    
            Let $\errorTwo$ denote the \mse.
            Using Jensen's inequality, the norms are related by $\errorThree \le \errorTwo$.
            Hence, we write:
            \begin{align}
                \numPrim &\ge \exp  \left[ \lambert  \left(\frac{c}{\errorTwo}\right)^{2} \right]
            \end{align}

            Thus, the number of Gaussians $\numPrim$ required increases exponentially for decreasing the \mse $\errorTwo$.
        \end{proof}

\clearpage
\section{Implementation Details}\label{sec:app_imp_details}

    We now provide some additional implementation details.

    \subsection{3D \nvs}\label{sec:supp_impl_nvs}

\begin{table*}[!t]
    \centering
    \caption{\textbf{Near-far Planes and Initialization} details for \nvs datasets.
    }
    \label{tab:nvs_dataset}
    \scalebox{0.75}{
    \setlength\tabcolsep{0.1cm}
    \begin{tabular}{l m c m c m c }
        & \textbf{\mipNerf  \cite{barron2022mipnerf360}, \tAndT \cite{knapitsch2017tanks},} & \multirow{2}{*}{\textbf{\shelly \cite{wang2023adaptiveshells}}} & \multirow{2}{*}{\textbf{\synNerf \cite{mildenhall2020nerf}}}\\
        & \textbf{\deepBlend \cite{hedman2018deep}, \ommo \cite{lu2023large}} & & \\
        \myTopRule
        Near Plane & $0.01$ & $0.00$ & $2.00$\\
        Far Plane & $10^{10}$ & $10^{10}$ & $6.00$\\
        Normalize Space & \cmark & \xmark & \xmark \\
        Alpha Mask & \xmark & \cmark & \cmark \\
        Initialization & \sfm & Random & Random \\
        Random Initial Extent & $3.00$ & $1.50$ & $0.50$ \\
        \#Gaussians for \mcmc & $4$M & \gs & $1$M\\
    \end{tabular}
    }
\end{table*}

        \noIndentHeading{Datasets.}
            Our \nvs experiments use six diverse datasets with a total of $35$ scenes: \mipNerf \cite{barron2022mipnerf360}, \tAndT \cite{knapitsch2017tanks}, \deepBlend \cite{hedman2018deep}, \ommo \cite{lu2023large}, \shelly \cite{wang2023adaptiveshells} and \synNerf (Blender) \cite{mildenhall2020nerf}.
            This selection includes a mix of synthetic, real, bounded indoor, and unbounded outdoor scenes to ensure a comprehensive evaluation.
            Specifically, we use all nine scenes from \mipNerf, the same two scenes each from \tAndT and \deepBlend, and all eight scenes from \synNerf, consistent with the \gs \cite{kerbl2023gaussians}.
            We also test on all eight scenes from the \ommo dataset, which provides scenes with distant objects \cite{kheradmand20243d} and all six scenes from the \shelly dataset, known for its fuzzy surfaces and complex geometries \cite{sharma2024volumetric}.

        \noIndentHeading{Data Splits.}
            We take every 8th image for test and remaining as train for \mipNerf \cite{barron2022mipnerf360}, \tAndT \cite{knapitsch2017tanks}, \deepBlend \cite{hedman2018deep} and \ommo \cite{lu2023large} datasets, as in \cite{barron2022mipnerf360}.
            We use the provided train/test JSONs for the \shelly \cite{wang2023adaptiveshells} and \synNerf \cite{mildenhall2020nerf} datasets.

        \noIndentHeading{Evaluation Metrics.}
            We use Peak Signal-to-Noise Ratio (\psnr), Structural Similarity Index Metric (\ssim) \cite{wang2004image} and VGG-16 based normalized Learned Perceptual Image Patch Similarity (\lpips) \cite{zhang2018unreasonable} metrics \cite{kerbl2023gaussians}.
            We average the scores across all scenes to report a single value for each dataset.

        \noIndentHeading{Data Preprocessing.}
            For data preprocessing, we follow established practices:
            \begin{itemize}
                \item \textit{Single-scale training and single-scale testing experiments} of \cref{tab:nvs_results} and elsewhere: 
                We downsample indoor \mipNerf scenes by a factor of two and outdoor scenes by four \cite{kerbl2023gaussians}. 
                We downsample the \ommo scene \#01 images by a factor of four as in \cite{kheradmand20243d}, while other scenes and datasets use their original image resolutions.
                \item \textit{Single-scale training and multi-scale testing experiments} of \cref{tab:nvs_mip_stmt}: 
                we train on all \mipNerf scenes downsampled by a factor of eight and render at successively higher resolutions ($1 \times$, $2 \times$, $4 \times$, and $8 \times$) following \mipSplat \cite{yu2024mip}.
            \end{itemize}

        \noIndentHeading{Near-far Planes and Initialization  Details.}
            We next provide dataset configurations such as near/far plane settings as well as initialization details for the datasets in \cref{tab:nvs_dataset}.
            The datasets are grouped based on their near-far planes and initialization parameters. 
            Most datasets such as \mipNerf \cite{barron2022mipnerf360}, \tAndT \cite{knapitsch2017tanks}, \deepBlend \cite{hedman2018deep} and \ommo \cite{lu2023large} utilize the default near/far planes and are initialized using standard Structure-from-Motion (\sfm) data. 
            The \shelly \cite{wang2023adaptiveshells} and \synNerf \cite{mildenhall2020nerf} datasets adopt custom near and far plane settings from QFields \cite{sharma2024volumetric} and \nerf \cite{mildenhall2020nerf} respectively, and are initialized randomly due to the absence of \sfm points.

        \noIndentHeading{Architecture.}
            \cref{fig:gs_qnn} shows the architecture of combining Gaussian-Splatting with \qnn to enhance image fidelity. 
            We integrate \qnn into \gsOnly as follows:
            \begin{itemize}
                \item The \gsOnly renders the initial \lf signal $\nFieldPdLP(x)$.
                \item The \qnn concatenates the query $\query$ and the \lf splatted image $\nFieldPdLP(x)$.
                \item The \qnn passes the concatenation through a \cnn to produce a residual image $\qnn(\query, \nFieldPdLP(x))$. 
                \item We then add the \lf splatted image $\nFieldPdLP$ to the residual image \\ $\qnn(\query, \nFieldPdLP(x))$ and produce the final predicted novel view image \\$\nFieldGSQNN(x)=\nFieldPdLP(x) + \qnn(\query,\nFieldPdLP(x))$.
            \end{itemize}

            The final prediction formulation depends on whether the dataset includes an alpha mask for opacity.
            Note that the final model uses \twoD coordinates as queries. So, we substitute $\query=x$ in the following paragraphs.

            \begin{itemize}
                \item \textit{Dataset without an Alpha Mask}: 
                This applies to datasets such as \mipNerf, \tAndT, \deepBlend and \ommo datasets. 
                The outputs of \gs $\nFieldGS(x)$ and \gs+\qnn models $\nFieldGSQNN(x)$ are as follows:
                \begin{align}
                    \nFieldGS(x) &= \nFieldPdLP(x)\\
                    \nFieldGSQNN(x) &= \nFieldPdLP(x) + \qnn(x \concat \nFieldPdLP(x)) \\
                    \nFieldGT^{GT}(x) &= \nFieldGT(x) 
                \end{align}
                \item \textit{Dataset with an Alpha Mask}: 
                For \shelly \cite{wang2023adaptiveshells} and \synNerf \cite{mildenhall2020nerf} datasets, which provide an alpha mask, we follow the \nerfstudio \cite{tancik2023nerfstudio} convention where the predicted or rendered RGB is considered pre-multiplied by the predicted opacity $\opacityPd$ and so we do not multiply the first term by predicted opacity $\opacityPd$.
                We multiply the GT RGB with GT opacity $\opacityGT$. 
                In other words, the output of \gs model $\nFieldGS(x)$ , \gs+\qnn model $\nFieldGSQNN(x)$ and the GT $\nField^{GT}(x)$ are as follows:
                \begin{align}
                    \nFieldGS(x) &= \nFieldPdLP(x) \hspace{4.52cm}+ (1 - \opacityPd(x))*\background\\
                    \nFieldGSQNN(x) &= \nFieldPdLP(x) + QNN(\opacityPd(x){*}[x \concat \nFieldPdLP(x)]) + (1 - \opacityPd(x))*\background\\
                    \nFieldGT^{GT}(x) &= \opacityGT \nFieldGT(x) ~~\hspace{4.52cm}+ (1 - \opacityGT(x))*\background,
                \end{align}
                with $\background$ being the background color, which is set to white for all pixels.
            \end{itemize}

        \noIndentHeading{Implementation.}
            Our implementation uses the \textit{\gsplat} library\footnote{\url{https://github.com/nerfstudio-project/gsplat}} \cite{ye2024gsplat} and PyTorch \cite{paszke2019pytorch}.
            Traditional \mlps process coordinate queries with fully-connected layers to output a three-channel RGB residual image.
            In contrast, \qnns leverages \twoD convolutions to process both the \twoD coordinate queries and the \lf splatted image, and output a three-channel RGB residual image.

            All \mlps use $4$ linear layers with $256$ dimensions and \relus. 
            To ensure a fair comparison with \mlp, we implement \qnn with reduced input channels by a factor of $4$.
            Thus, \qnn uses $4$ \twoD convolutional layers, a $3{\times}3$ kernel size, $64$ channels, \relu activations and utilizes vanilla encodings. 
            We experiment with other encodings for \qnn in the \cref{sec:ablation}: \textit{Encodings} paragraph.

        \noIndentHeading{Loss.}
            We compare the prediction from the \gsOnly{}+\qnn model $\nFieldGSQNN(x)$ against the GT image $\nFieldGT^{GT}(x)$ with the same loss as the corresponding baseline.

        \noIndentHeading{Optimization.}
            For \qnn models, the \qnn parameters are trained with AdamW optimizer with a learning rate of $1e{-}4$.
            We train with all architectures in an end-to-end learning framework and apply \qnn after $75$\% of total iterations (\forExample: after $22{,}500$ iterations for total of $30{,}000$ iterations) without changing the total iterations.

    \subsection{1D Regression}\label{sec:supp_impl_1dreg}

        \noIndentHeading{Dataset.}
            Our experiments use the synthetic \oneD $1/\freqSym^\power$ signal, defined on the interval $[0, 1)$, with $\alpha=0.5$ \cite{tancik2020fourier}. 
            The authors generate this \oneD signal by sampling from a standard i.i.d. Gaussian vector of length $\numData$, scaling its $j$th entry by $1/j^\power$, and then taking the real component of its inverse Fourier transform. 
            To fully evaluate the networks' ability to learn \hf signals, we remove the signal's bandlimitedness.
            We then apply a low-pass filter with a normalized cutoff of $0.125$ to separate this signal into its low- and residual components, and our task is regressing only the residual component.

        \noIndentHeading{Data Splits.}
            For training and testing, we randomly sample data points, using a signal length $\numData{=}32$ \cite{tancik2020fourier}. 

        \noIndentHeading{Evaluation Metrics.}
            We use the \psnr metric for evaluation.

        \noIndentHeading{Baselines.} 
            We use \mlp with vanilla and Fourier encodings \cite{tancik2020fourier} as our baselines.

        \noIndentHeading{Implementation.}
            The \mlp architecture consists of $4$ linear layers with $256$ dimensions and \relus \cite{tancik2020fourier} to output \oneD signal.
            The Fourier \mlps use the default setting of $256$-dimensional cosines and sines each randomly sampled from the standard deviation of $10$ as \cite{tancik2020fourier}. 
            The \mlp takes \oneD queries as input, and so we replace the linear layer in \mlp with \oneD convolution to construct \cnn and \qnn.
            Note that the \qnn takes both the \oneD coordinates and \lf component as inputs, while the \cnn takes only \lf component as input.
            We use \qnn with Vanilla encodings.
            To ensure a fair comparison, we reduce the input channels of \cnn and \qnn architectures by a factor of $\sqrt{3}$ to match the parameter count with vanilla \mlp, as they use a \oneD convolution kernel size of $3$.
            Since all these networks predict the residual signal, we add the \lf component to these signal to obtain the final predicted signal, which we then use for \psnr metric calculation.
            Please refer to \cref{tab:exp_setting} for the experimental settings.

        \noIndentHeading{Loss.}
            We train with the $\lOne$ loss on the training samples to avoid data leakage.

        \noIndentHeading{Optimization.}
            We train all models with \adam optimizer with learning rate $1e{-}5$ for $500$ iterations.

    \subsection{2D Regression}\label{sec:supp_2dreg_impl}

        \noIndentHeading{Datasets.}
            We use the \mipNerf dataset \cite{barron2022mipnerf360} for this experiment, and the rendered output from the baseline \gs \cite{kerbl2023gaussians} as the \lowFid image input to the \qnn. 
            The \qnn learns to predict the residual error left by \gs with respect to the GT image.
            Specifically, the final prediction is the sum of \gs-splatted image (\lf baseline) plus the \qnn's predicted residual image. 

        \noIndentHeading{Data Splits.}
            We randomly sample training and validation pixels over images. 
            
        \noIndentHeading{Evaluation Metrics.}
            We use the \psnr and \ssim \cite{wang2004image} metrics for evaluation as \cite{tancik2020fourier} over both validation and training pixels.
            We run over all scenes of the \mipNerf dataset \cite{barron2022mipnerf360} and average scores across scenes to report a single value for each model.

        \noIndentHeading{Baselines.}        
            We use the \mlp with vanilla encodings, Fourier encodings \cite{tancik2020fourier} and \hashGrids \cite{muller2022instant} as our baselines.

        \noIndentHeading{Implementation.}
            The \mlp architecture has $4$ linear layers, $256$ dimensions and \relus to output three-dimensional RGB signal.
            The Fourier \mlps use the default setting of $256$-dimensional cosines and sines each randomly sampled from the standard deviation of $10$ as \cite{tancik2020fourier}. 
            The \mlps take \twoD coordinate queries as input, and so, we replace the linear layer in \mlp with \twoD convolution and append \lf image to construct \qnn.
            We use \qnn with both Vanilla and Fourier encodings.
            To ensure a fair comparison, we reduce the input channels of \qnn architectures by a factor of $3$ to match the parameter count with vanilla \mlp, as \qnn uses a \twoD convolution kernel size of $3{\times}3$.
            Since all these networks predict the residual output, we add the \lf component to these outputs to obtain the final predicted outputs, which we then use for metric calculation.
            Please refer to \cref{tab:exp_setting} for the experimental settings.

        \noIndentHeading{Loss.}
            We calculate the $\lOne$ loss between the final prediction and the GT \mipNerf image.
            Note that we calculate the $\lOne$ loss on the training samples to avoid data leakage.

        \noIndentHeading{Optimization.}
            We train all models with \adam optimizer with learning rate $1e{-}3$ for $2{,}000$ iterations as \cite{tancik2020fourier}.

    \subsection{2D Image \sr}\label{sec:supp_sr_impl}

        Our experimental setup encompasses two distinct image \sr scenarios:
        \begin{itemize}
            \item Face \sr: This sub-task involves $8\times$ upscaling of face images from $16{\times}16$ to $128{\times}128$ resolution. 
            \item Natural \sr: This sub-task involves $4\times$ upscaling  of natural images from  $64{\times}64$ to $256{\times}256$ \cite{wang2021realesrgan}.
        \end{itemize}

        \noIndentHeading{Datasets.}
            The face \sr experiments use \ffhqFull dataset \cite{karras2019style} for training and \celebAHQ for evaluation as \cite{saharia2022image}.
            The natural \sr experiments use \divTwoK \cite{agustsson2017ntire}, \flikr \cite{timofte2017ntire} and \ost \cite{wang2018recovering} datasets for training and \divTwoK validation images for evaluation following \cite{wang2021realesrgan}.

        \noIndentHeading{Data Splits.}
            For face \sr, we leverage the \ffhq dataset \cite{karras2019style}, utilizing all its $70{,}000$ images for training, and test on $30{,}000$ \celebAHQ dataset \cite{karras2018progressive} as in \cite{saharia2022image}. 
            For natural \sr, our training follows \realEsrgan \cite{wang2021realesrgan}, comprising $13{,}774$ \hr images, with $800$ train images from \divTwoK \cite{agustsson2017ntire}, $2{,}650$ images from \flikr \cite{timofte2017ntire} and $10{,}324$ images from the \ost dataset \cite{wang2018recovering}.
            We evaluate on the paired $100$ \divTwoK validation images.

        \noIndentHeading{Evaluation Metrics.}
            We use the widely accepted \psnr and \ssim \cite{wang2004image} metrics \cite{saharia2022image} for evaluation.

        \noIndentHeading{Baselines.}  
            To thoroughly assess the impact of our proposed \qnn, we integrate it into two prominent architectural paradigms: diffusion models and \gan{}s. 
            We use \srThree \cite{saharia2022image}, a \ddpm-based \sr method and \realEsrgan \cite{wang2021realesrgan}, a \gan-based \sr method as our baselines.

\begin{figure}[!t]
    \centering
    \includegraphics[width=0.68\linewidth]{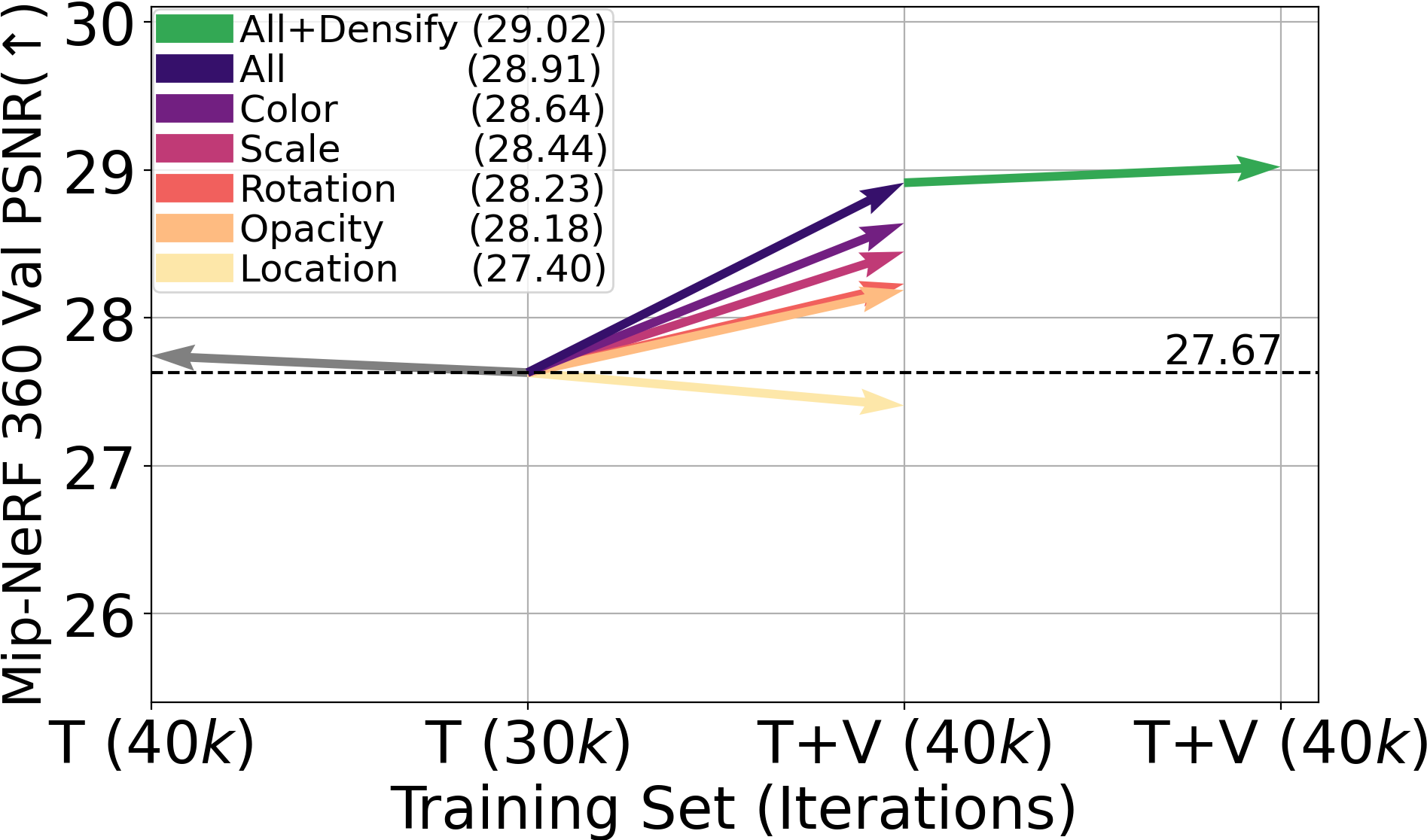}
    \caption{\textbf{Oracle Analysis} on the \mipNerf dataset \cite{barron2022mipnerf360} finetuning one or all of the gaussian primitive parameters with validation \psnr in parentheses.
    \textbf{Color (\sh) parameters are the most important parameter} for improving visual fidelity.
    The figure shows that extra training time [T$(30k)${\rightarrowRHDSmall}T$(40k)$] does not significantly change the validation \psnr, while densification (All+Densify) also does not help if we have finetuned all parameters (All) on the validation images.
    }
    \label{fig:oracle_gs}
    \vspace{-0.2cm}
\end{figure}

        \noIndentHeading{Implementation.}
            For face \sr, we use an unofficial implementation\footnote{\url{https://github.com/Janspiry/Image-Super-Resolution-via-Iterative-Refinement} \srThree does not release its official code. We do not report \srThree's natural \sr performance since this unofficial code does not reproduce natural \sr results.} 
            of \srThree \cite{saharia2022image}, a \ddpm-based \sr method. 
            For natural \sr, we leverage the official \realEsrgan \cite{wang2021realesrgan} codebase\footnote{\url{https://github.com/xinntao/Real-ESRGAN}}. 
            We integrate \qnn into both \srThree \cite{saharia2022image} and \realEsrgan \cite{wang2021realesrgan} methods.
            It is crucial to note that both the original \srThree and \realEsrgan models use \cnn architectures and take the \lr input.
            We realise \qnn by augmenting these existing \cnn models with \twoD coordinate queries.
            Note that \qnn has slightly more number of parameters ($0.0068\%$ more for \realEsrgan generator) than \cnn because input to the \qnn is a $5$-channel image ($3$ RGB + $2$ coordinates) compared to $3$-channel RGB image as input to the \cnn.
            For the \sr experiments, we directly predict the desired signal without adding the \lf signal since the backbones contain residual connections.

        \noIndentHeading{Loss and Optimization.}
            We train the \cnn baseline and \qnn-augmented models with exactly same loss functions, iterations and other hyper-parameters.

\clearpage
\section{Experiments}

    We next provide additional details and results of the experiments evaluating \qnn{}’s performance.

    \subsection{3D \nvs}

        \noIndentHeading{Oracle Analysis of Gaussian Splatting.}
            To determine which parameters are most critical to the performance of \gs \cite{kerbl2023gaussians}, we conduct an ``oracle analysis'' using all nine scenes of the \mipNerf dataset \cite{barron2022mipnerf360}. 
            The \gs method represents a scene with a collection of \threeD gaussian primitives, each defined by parameters for its location, rotation, scale, opacity, and color (represented by spherical harmonics or \sh).
            This analysis aims to reveal which parameters contribute most significantly to the \psnr performance. 
            The standard training procedure involves training the gaussian representation on a set of training images (T) for $30k$ iterations. 
            In our oracle study, we extend this by taking the resulting model and finetuning it for an additional $10k$ iterations on the combined training and validation (T+V) images. 
            To isolate the impact of each component, we finetune either a single parameter (e.g., only color) or all of them simultaneously.
            The results, presented in \cref{fig:oracle_gs}, show that the color (\sh) parameters are the most important parameter for improving the \gs model's visual fidelity.
            This is why predicting colors in \gsOnly{+}\qnn model (\cref{fig:gs_qnn}) improves the \psnr on the \mipNerf dataset \cite{barron2022mipnerf360}, while predicting opacity in \vdgs \cite{malarz2025gaussian} does not improve the \psnr significantly.

\begin{table*}[!t]
    \centering
    \captionof{table}{\textbf{\mipNerf Performance}. 
    \textbf{
    \mcmc{+}\qnn provides \zipNerf-level fidelity with $\mathbf{10}\times$ lower training and $\mathbf{40}\times$ higher \fps than \zipNerf} \cite{barron2023zip}.
    ``GPU-hr'' indicates the number of GPU hours it takes to train a model. 
    ``Mem.'' indicates the amount of memory the finished model consumes on disk.
    [Key: \firstBText{Best}~, \secondBText{Second-best}~, \thirdBText{Third-best}~, \taken= Taken from Tab. 1 of \ever \cite{mai2025ever}].
    }
    \label{tab:nvs_memory_inference}
    \scalebox{0.75}{
    \setlength\tabcolsep{0.1cm}
    \begin{tabular}{l m c c c c c}
        Method & ~\textbf{\psnr} (\uparrowRHDSmall)~ & \textbf{GPU-hr} (\downarrowRHDSmall) & ~~~\textbf{\fps} (\uparrowRHDSmall)~~~ & \textbf{Mem. (MB)} (\downarrowRHDSmall) \\
        \myTopRule
        \zipNerf\cite{barron2023zip} & \secondF{28.54} & $32.00$\taken & ~~~~$0.5$\taken & ~$903$\taken \\
        \myTopRule
        \gs\cite{kerbl2023gaussians} & $27.67$ & \firstF{0.67} & \firstF{148.7} & \firstF{736} \\
        \gs + \qnn & $27.96$ & \secondF{0.83}  & \thirdF{~24.8} & \firstF{736} \\
        \hline 
        \mcmc\cite{kheradmand20243d} & \thirdF{28.26} & \thirdF{2.00} & \secondF{135.3} & \secondF{900} \\
        \mcmc + \qnn & \firstF{28.58} & $2.86$ & ~$24.7$ & \thirdF{901} \\
    \end{tabular}
    }
\end{table*}
\begin{table*}[!t]
    \centering
    \captionof{table}{\textbf{Compute Overhead} of \qnn. 
    \qnn's parameter overhead is negligible when contrasted with the standard \gsOnly parameters.
    [Key: MatMuls= Matrix Multiplications].
    }
    \label{tab:compute_qnn}
    \scalebox{0.75}{
    \setlength\tabcolsep{0.1cm}
    \begin{tabular}{l m c c m c c m c}
        \textbf{Layer} & ~~~\textbf{In Tensor} & \textbf{Out Tensor} & \textbf{Weight Parameters} & \textbf{Biases} & \textbf{MatMuls} \\
        \myTopRule
        $1$ & $(\inChannels{+}\lowChannels){\times}H{\times}W$ & ~~~$\hidChannels{\times}H{\times}W$ & $(\inChannels{+}\lowChannels){\times}\hidChannels{\times}k{\times}k$ & $\hidChannels$ & $(\inChannels{+}\lowChannels){\times}\hidChannels{\times}H{\times}W$ \\
        $2$ to $\numLayer-1$ & ~~~~~~~~~~~$\hidChannels{\times}H{\times}W$ & ~~~$\hidChannels{\times}H{\times}W$  &  ~~~~~~~~~~~~$\hidChannels{\times}\hidChannels{\times}k{\times}k$ & $\hidChannels$ & ~~~~~~~~~~~~$\hidChannels{\times}\hidChannels{\times}H{\times}W$\\
        $\numLayer$ & ~~~~~~~~~~~$\hidChannels{\times}H{\times}W$ & $\outChannels{\times}H{\times}W$   & ~~~~~~~~~~$\outChannels{\times}\hidChannels{\times}k{\times}k$ & ~~$\outChannels$ & ~~~~~~~~~~$\outChannels{\times}\hidChannels{\times}H{\times}W$  \\
        \hline 
        \multirow{2}{*}{\textbf{Total}} & \multirow{2}{*}{\mathDash} & \multirow{2}{*}{\mathDash} & $[\inChannels{+}\lowChannels{+}(\numLayer-2)\hidChannels$ & ~$(\numLayer{-}2)\hidChannels$ & $[\inChannels{+}\lowChannels{+}(\numLayer-2)\hidChannels$ \\
        & & & ${+}\outChannels]{\times}\hidChannels{\times}k{\times}k$ & ${+}\outChannels$ & ${+}\outChannels]{\times}\hidChannels{\times}H{\times}W$ 
    \end{tabular}
    }
\end{table*}

        \noIndentHeading{\mipNerf Performance.}
            \cref{tab:nvs_memory_inference} shows the performance comparison of competing methods on \mipNerf dataset \cite{barron2022mipnerf360}.
            \gs, \gs+ \qnn, \mcmc, \mcmc+ \qnn and \zipNerf models take $0.67$, $0.83$, $2.00$, $2.86$ and $32.00$ GPU-hours respectively for training on a \mipNerf scene measured with V100 GPUs.
            Adding \qnn on top of \gsOnly baseline make the model slower in training and inference than the corresponding \gsOnly baseline.
            However, most importantly, \mcmc+ \qnn provides \zipNerf-level fidelity with $\mathbf{10}\times$ lower training and $\mathbf{40}\times$ higher \fps than \zipNerf \cite{barron2023zip}.
            This is expected because \qnn requires exactly one query per pixel while \nerfs require multiple queries for rendering a pixel.
            Existing techniques like quantized fine-tuning \cite{chen2023mobilenerf} of \qnn and fusing \cnn \cite{muller2022instant} could be used to speed up the \qnn model further.
            We leave these optimizations for a future work.

\begin{table*}[!t]
    \centering
    \caption{\textbf{Residual model ablation} of the \gs baseline \cite{kerbl2023gaussians} on the \mipNerf dataset.
    \textbf{\qnn model with adding residuals performs the best.}
    [Key: \firstBText{Best}~, \secondBText{Second-best}~, \thirdBText{Third-best}~].
    }
    \label{tab:nvs_residual_ablate}
    \scalebox{0.75}{
    \setlength\tabcolsep{0.1cm}
    \begin{tabular}{l l m ccc m ccc }
        \multirow{2}{*}{Method} & \multirow{2}{*}{Residual} &  \multicolumn{3}{cm}{\textbf{Val}} & \multicolumn{3}{c}{\textbf{Train}}\\
        & & \psnr (\uparrowRHDSmall) & \ssim (\uparrowRHDSmall) & \lpips (\downarrowRHDSmall) & \psnr (\uparrowRHDSmall) & \ssim (\uparrowRHDSmall) & \lpips (\downarrowRHDSmall)\\
        \myTopRule
        \gs~\cite{kerbl2023gaussians} & \mathDash & \secondF{27.67} & \secondF{0.82} & \firstF{0.20} & \secondF{29.71} & \firstF{0.90} & \firstF{0.16}\\
        \gs + \qnn & None & 27.40 & \secondF{0.82} & \firstF{0.20} & 29.32 & \firstF{0.90} & \firstF{0.16}\\
        \gs + \qnn & Multiply & \thirdF{27.58} & \secondF{0.82} & \firstF{0.20} & \thirdF{29.56} & \firstF{0.90} & \firstF{0.16}\\
        \hline
        \gs + \qnn & Add & \firstF{27.96} & \firstF{0.83} & \firstF{0.20} & \firstF{30.11} & \firstF{0.90} & \firstF{0.16}\\
    \end{tabular}
    }
\end{table*}

\begin{table*}[!t]
    \centering
    \caption{\textbf{Comparison of adding sky model} and adding \qnn to the \gs baseline \cite{kerbl2023gaussians} on the \mipNerf dataset.
    \textbf{Adding \qnn outperforms \mlp-based sky models}.
    [Key: \firstBText{Best}~, \secondBText{Second-best}~, \thirdBText{Third-best}~].
    }
    \label{tab:nvs_sky}
    \scalebox{0.75}{
    \setlength\tabcolsep{0.1cm}
    \begin{tabular}{l m ccc m ccc }
        \multirow{2}{*}{Method} &  \multicolumn{3}{cm}{\textbf{Val}} & \multicolumn{3}{c}{\textbf{Train}}\\
        & \psnr (\uparrowRHDSmall) & \ssim (\uparrowRHDSmall) & \lpips (\downarrowRHDSmall) & \psnr (\uparrowRHDSmall) & \ssim (\uparrowRHDSmall) & \lpips (\downarrowRHDSmall)\\
        \myTopRule
        \gs~\cite{kerbl2023gaussians} & \thirdF{27.67} & \secondF{0.82} & \firstF{0.20} & \secondF{29.71} & \firstF{0.90} & \firstF{0.16}\\
        \gs + Vanilla Sky \cite{rematas2022urban} & $27.63$ & \secondF{0.82} & \firstF{0.20} & \thirdF{29.67} &  \secondF{0.89} &	\firstF{0.16}\\
        \gs + Fourier Sky & \secondF{27.69} & \secondF{0.82} & \firstF{0.20} & \thirdF{29.67} &  \firstF{0.90} & \firstF{0.16}\\
        \hline
        \gs + \qnn & \firstF{27.96} & \firstF{0.83} & \firstF{0.20} & \firstF{30.11} & \firstF{0.90} & \firstF{0.16}\\
    \end{tabular}
    }
\end{table*}

        \noIndentHeading{Compute Overhead of \qnn.}
            We next report the computational overhead associated with integrating a \qnn consisting of $\numLayer$ layers onto a \lf image of size $H \times W$ in \cref{tab:compute_qnn}.
            In this formulation, $\inChannels$, $\lowChannels$, and $\outChannels$ represent the query dimensions, the channels of the \lf input, and the channels of the predicted image, respectively. 
            The \qnn's architecture is further defined by the number of hidden channels $\hidChannels$ and the kernel size $k$.
            For the \nvs task, the \qnn has the following parameters:
            \begin{itemize}
                \item Layers: $\numLayer{=}4$.
                \item Channels: $\inChannels{=}2$, $\lowChannels{=}3$, and $\outChannels{=}3$.
                \item Architecture: $\hidChannels{=}64$ and $k{=}3$.
            \end{itemize}
            Substituting these values into the expressions in \cref{tab:compute_qnn}, the \qnn introduces approximately $0.3$ MB of weight and bias parameters. 
            This parameter overhead is negligible when contrasted with the standard \gsOnly parameters, which typically exceed $700$ MB. 
            This small increase explains the Mem. numbers of \cref{tab:nvs_memory_inference}, where including \qnn results in a minimal increase in overall memory footprint.

        \noIndentHeading{Residual Model.}
            \qnn adds the residual signal to the \lf splatted signal in \cref{fig:gs_qnn}.
            We next ablate this design in \cref{tab:nvs_residual_ablate} with no residuals (None) and multiplying residuals (Multiply).
            \cref{tab:nvs_residual_ablate} results show that \qnn model with adding the residual performs the best.

        \noIndentHeading{Sky Model.}
            A related baseline is the sky model \cite{rematas2022urban,sun2024pointnerf++}, which is an \mlp network with a sigmoid and that maps view directions to a residual color.
            We compare \qnn with vanilla sky and Fourier encoded sky models in \cref{tab:nvs_sky}.
            \cref{tab:nvs_sky} shows that \qnn outperforms \mlp-based sky models as well.
    
        \noIndentHeading{Sensitivity to Initialization.}
            \gsOnly methods are sensitive to initialization \cite{kerbl2023gaussians}.
            We, therefore, analyze the impact of \sfm and random initialization on all nine scenes of the \mipNerf dataset \cite{barron2022mipnerf360}.
            We experiment with both \gs \cite{kerbl2023gaussians} and \mcmc \cite{kheradmand20243d} baselines in \cref{tab:nvs_init}.
            \cref{tab:nvs_init} results show that \qnn remains effective for both these initializations for both baselines.

        \noIndentHeading{Sensitivity to Number of Layers.}
            Finally, we experiment with the number of convolution layers to pinpoint the optimal \qnn depth in \cref{tab:nvs_conv_layers}.
            We found that reducing the number of layers below four negatively impacted performance, while increasing the count beyond four did not provide any additional benefit.
            Hence, \qnn for \nvs uses four layers.

\begin{table*}[!t]
    \centering
    \caption{\textbf{Sensitivity to initialization} on baseline and \qnn models on the \mipNerf dataset.
    \textbf{Adding \qnn outperforms} the baselines on both val and train images across initializations.
    [Key: Init= Initialization].
    }
    \label{tab:nvs_init}
    \scalebox{0.68}{
    \setlength\tabcolsep{0.1cm}
    \begin{tabular}{l l l m lll m lll }
        \multirow{2}{*}{Method} &  \multirow{2}{*}{Init} &  \multirow{2}{*}{\qnn}  & \multicolumn{3}{cm}{\textbf{Val}} & \multicolumn{3}{c}{\textbf{Train}}\\
        & & & \psnr (\uparrowRHDSmall) & \ssim (\uparrowRHDSmall) & \lpips (\downarrowRHDSmall) & \psnr (\uparrowRHDSmall) & \ssim (\uparrowRHDSmall) & \lpips (\downarrowRHDSmall)\\
        \myTopRule
        \multirow{4}{*}{\gs\cite{kerbl2023gaussians}} & \multirow{2}{*}{\sfm} & \xmark & $27.67$ & $0.82$ & $0.20$ & $29.71$ & $0.90$ & $0.16$\\
        &  & \cmark & \good{27.96}{0.29} & \good{0.83}{0.01} & \nothing{0.20} & \good{30.11}{0.41} & \nothing{0.90} & \nothing{0.16}\\
        \hhline{|~|--------|}
        & \multirow{2}{*}{Random} & \xmark & $26.28$ & $0.77$ & $0.26$ & $28.19$ & $0.85$ & $0.21$\\
        &  & \cmark & \good{26.82}{0.54} & \good{0.79}{0.02} & \good{0.23}{0.03} & \good{29.85}{1.66} & \good{0.89}{0.04} & \good{0.17}{0.04}\\
        \hline
        \multirow{4}{*}{\mcmc\cite{kheradmand20243d}} & \multirow{2}{*}{\sfm} & \xmark & $28.26$ & $0.84$ & $0.17$ & $30.31$ & $0.91$ & $0.14$\\
        &  & \cmark & \good{28.58}{0.32} & \nothing{0.84} & \good{0.16}{0.01} & \good{30.69}{0.38} & \nothing{0.91} & \nothing{0.14}\\
        \hhline{|~|--------|}
        & \multirow{2}{*}{Random} & \xmark & $27.28$ & $0.81$ & $0.20$ & $30.07$ & $0.90$ & $0.15$\\
        &  & \cmark & \good{27.51}{0.23} & \nothing{0.81} & \nothing{0.20} & \good{30.38}{0.31} & \good{0.91}{0.01} & \nothing{0.15}\\
    \end{tabular}
    }
\end{table*}
\begin{table*}[!t]
    \centering
    \caption{\textbf{Sensitivity to number of layers} in \qnn with \gs \cite{kerbl2023gaussians} on the \mipNerf dataset.
    \qnn uses $4$ layers.
    We refrain from highlighting if there are more than $5$ entries with the same number.
    [Key: \firstBText{Best}~, \secondBText{Second-best}~, \thirdBText{Third-best}~].
    }
    \label{tab:nvs_conv_layers}
    \scalebox{0.75}{
    \setlength\tabcolsep{0.1cm}
    \begin{tabular}{c m ccc m ccc }
        \multirow{2}{*}{Layers} &  \multicolumn{3}{cm}{\textbf{Val}} & \multicolumn{3}{c}{\textbf{Train}}\\
        & \psnr (\uparrowRHDSmall) & \ssim (\uparrowRHDSmall) & \lpips (\downarrowRHDSmall) & \psnr (\uparrowRHDSmall) & \ssim (\uparrowRHDSmall) & \lpips (\downarrowRHDSmall)\\
        \myTopRule
        $0$ & $27.67$ & \secondF{0.82} & $0.20$ & $29.71$ & $0.90$ & $0.16$\\
        $2$ & $27.81$ & \secondF{0.82} & $0.20$ & $29.85$ & $0.90$ & $0.16$\\
        $3$ & $27.89$ & \firstF{0.83} & $0.20$ & $30.01$ & $0.90$ & $0.16$\\
        $4$ & \firstF{27.96} & \firstF{0.83} & $0.20$ & \thirdF{30.11} & $0.90$ & $0.16$\\
        $5$ & \secondF{27.91} & \firstF{0.83} & $0.20$ & \firstF{30.13} & $0.90$ & $0.16$\\
        $6$ & \thirdF{27.90} & \firstF{0.83} & $0.20$ & \secondF{30.12} & $0.90$ & $0.16$
    \end{tabular}
    }
\end{table*}

        \noIndentHeading{Sensitivity to \lf Signal Quality.}
            We used the final number of Gaussians in the \mcmc baseline \cite{kheradmand20243d} on \mipNerf dataset \cite{barron2022mipnerf360} as a proxy for the quality of the \lf signal, with lower number of gaussians representing lower fidelity signal, while  higher number of gaussians representing higher fidelity signal.
            As shown in \cref{fig:mcmc_scale_gaussians}, adding \qnn provides consistent \psnr improvements at varying quality of \lf signal. 
            This demonstrates the robustness of \qnn, affirming its ability to work with varying \lf signal quality.
        
        \noIndentHeading{Detailed \nvs Results.}
            We report all numbers for all scenes in \cref{tab:nvs_results_detail}.
            \cref{tab:nvs_results_detail} is thus the detailed version of \cref{tab:nvs_results}.

\begin{table*}[!t]
    \caption{
        \textbf{Detailed \nvs Results.} 
        This is the detailed version of \cref{tab:nvs_results}.
        \textbf{Adding \qnn outperforms} the baselines across all datasets. 
        \zipNerf \synNerf numbers are from their official repository.
        [Key: \firstBText{Best}~, \secondBText{Second-best}~, \thirdBText{Third-best}~, \retrained= Retrained, T\&T= Tanks and Temples, DB= \deepBlend{}].
    }
    \label{tab:nvs_results_detail}
    \resizebox{\linewidth}{!}{
    \centering
    \setlength{\arrayrulewidth}{0.01cm}
    \setlength{\tabcolsep}{0.25cm}
    \begin{tabular}{l l c c c c c}
        \myTopRule
        & \multirow{2}{*}{Scene}
        & \textbf{\zipNerf \!\cite{barron2023zip}}
        & \textbf{\gs\!\retrained \cite{kerbl2023gaussians}} 
        & \textbf{\gs + \qnn} 
        & \textbf{\mcmc\!\retrained \cite{kheradmand20243d}} 
        & \textbf{\mcmc + \qnn}\\
        & & \tiny{\psnr\!$\uparrow$\sep\ssim\!$\uparrow$\sep\lpips\!$\downarrow$}
        & \tiny{\psnr\!$\uparrow$\sep\ssim\!$\uparrow$\sep\lpips\!$\downarrow$}
        & \tiny{\psnr\!$\uparrow$\sep\ssim\!$\uparrow$\sep\lpips\!$\downarrow$}
        & \tiny{\psnr\!$\uparrow$\sep\ssim\!$\uparrow$\sep\lpips\!$\downarrow$}
        & \tiny{\psnr\!$\uparrow$\sep\ssim\!$\uparrow$\sep\lpips\!$\downarrow$}\\[0.05cm]
        \hline
        \\[-0.25cm]
        \multirow{11}{*}{\rotatebox{90}{\textbf{\mipNerf \cite{barron2022mipnerf360}}}}
        & Garden   & $28.20$ \sep $0.86$ \sep $0.12$ & $27.67$ \sep $0.87$ \sep $0.10$ & $27.88$ \sep $0.87$ \sep $0.10$ & $28.00$ \sep $0.88$ \sep $0.10$ & $28.25$ \sep $0.88$ \sep $0.09$\\
        & Bicycle  & $25.80$ \sep $0.77$ \sep $0.21$ & $25.60$ \sep $0.77$ \sep $0.20$ & $25.68$ \sep $0.77$ \sep $0.20$ & $25.97$ \sep $0.80$ \sep $0.17$ & $25.99$ \sep $0.80$ \sep $0.16$\\
        & Stump    & $27.55$ \sep $0.80$ \sep $0.19$ & $26.88$ \sep $0.78$ \sep $0.20$ & $27.18$ \sep $0.79$ \sep $0.20$ & $27.17$ \sep $0.80$ \sep $0.18$ & $27.56$ \sep $0.81$ \sep $0.18$\\
        & Bonsai   & $34.46$ \sep $0.95$ \sep $0.17$ & $32.11$ \sep $0.95$ \sep $0.18$ & $32.60$ \sep $0.95$ \sep $0.18$ & $33.25$ \sep $0.96$ \sep $0.15$ & $33.77$ \sep $0.96$ \sep $0.15$\\
        & Counter  & $29.38$ \sep $0.90$ \sep $0.19$ & $29.10$ \sep $0.92$ \sep $0.18$ & $29.28$ \sep $0.92$ \sep $0.18$ & $29.86$ \sep $0.93$ \sep $0.15$ & $29.98$ \sep $0.93$ \sep $0.15$\\
        & Kitchen  & $32.50$ \sep $0.93$ \sep $0.12$ & $31.39$ \sep $0.93$ \sep $0.11$ & $31.80$ \sep $0.93$ \sep $0.11$ & $32.30$ \sep $0.94$ \sep $0.10$ & $32.81$ \sep $0.94$ \sep $0.10$\\
        & Room     & $32.65$ \sep $0.93$ \sep $0.20$ & $31.59$ \sep $0.93$ \sep $0.19$ & $32.40$ \sep $0.93$ \sep $0.19$ & $32.28$ \sep $0.94$ \sep $0.17$ & $33.31$ \sep $0.94$ \sep $0.16$\\
        & Treehill & $23.89$ \sep $0.68$ \sep $0.27$ & $22.81$ \sep $0.65$ \sep $0.31$ & $22.75$ \sep $0.65$ \sep $0.31$ & $23.27$ \sep $0.67$ \sep $0.27$ & $23.34$ \sep $0.67$ \sep $0.27$\\
        & Flowers  & $22.40$ \sep $0.64$ \sep $0.24$ & $21.89$ \sep $0.62$ \sep $0.33$ & $21.97$ \sep $0.62$ \sep $0.32$ & $22.22$ \sep $0.65$ \sep $0.27$ & $22.23$ \sep $0.65$ \sep $0.27$\\
        \hhline{|~|------|}
        & Average          & \secondB{28.54} \sep \secondB{0.83} \sep \thirdB{0.19} & $27.67$ \sep \thirdB{0.82} \sep $0.20$ & $27.96$ \sep \secondB{0.83} \sep $0.20$ & \thirdB{28.26} \sep \firstB{0.84} \sep \secondB{0.17} & \firstB{28.58} \sep \firstB{0.84} \sep \firstB{0.16} \\[0.07cm]
        \myTopRule
        \\[-0.25cm]
        \multirow{3}{*}{\rotatebox{90}{\textbf{T\&T}}}
        & Truck & \mathDash & $25.24$ \sep $0.88$ \sep $0.15$ & $25.97$ \sep $0.88$ \sep $0.14$ & $26.34$ \sep $0.90$ \sep $0.10$ & $26.81$ \sep $0.90$ \sep $0.09$\\
        & Train & \mathDash & $21.81$ \sep $0.81$ \sep $0.21$ & $22.25$ \sep $0.81$ \sep $0.20$ & $22.71$ \sep $0.85$ \sep $0.15$ & $22.93$ \sep $0.85$ \sep $0.14$\\
        \hhline{|~|------|}
        & Average & \mathDash & $23.52$ \sep $0.84$ \sep $0.18$ & \thirdB{24.11} \sep \thirdB{0.85} \sep \thirdB{0.17}  & \secondB{24.53} \sep \secondB{0.87} \sep \secondB{0.13} & \firstB{24.87} \sep \firstB{0.88} \sep \firstB{0.12} \\[0.07cm]
        \myTopRule
        \\[-0.25cm]
        \multirow{3}{*}{\rotatebox{90}{\textbf{DB \cite{hedman2018deep}}}}
        & Dr Johnson & \mathDash & $29.05$ \sep $0.90$ \sep $0.24$ & $29.20$ \sep $0.90$ \sep $0.24$ & $29.44$ \sep $0.91$ \sep $0.23$ 
        & $29.34$ \sep $0.91$ \sep $0.23$\\
        & Playroom   & \mathDash & $29.95$ \sep $0.91$ \sep $0.25$ & $30.12$ \sep $0.91$ \sep $0.25$ & $29.20$ \sep $0.91$ \sep $0.25$ 
        & $30.17$ \sep $0.91$ \sep $0.23$\\
        \hhline{|~|------|}
        & Average & \mathDash & \thirdB{29.50} \sep \secondB{0.90} \sep \secondB{0.24} & \secondB{29.67} \sep \firstB{0.91} \sep \thirdB{0.25} & $29.32$ \sep \firstB{0.91} \sep \secondB{0.24} & \firstB{29.76} \sep \firstB{0.91} \sep \firstB{0.23} \\[0.07cm]
        \myTopRule
        \\[-0.25cm]
        \multirow{9}{*}{\rotatebox{90}{\textbf{\ommo \cite{lu2023large}}}}
        & $01$ & \mathDash & $25.59$ \sep $0.78$ \sep $0.22$ & $25.60$ \sep $0.79$ \sep $0.21$ & $25.63$ \sep $0.79$ \sep $0.19$ & $25.80$ \sep $0.80$ \sep $0.18$\\
        & $03$ & \mathDash & $27.96$ \sep $0.90$ \sep $0.18$ & $29.66$ \sep $0.92$ \sep $0.15$ & $30.42$ \sep $0.94$ \sep $0.12$ & $31.54$ \sep $0.95$ \sep $0.11$\\
        & $05$ & \mathDash & $27.74$ \sep $0.86$ \sep $0.22$ & $28.10$ \sep $0.87$ \sep $0.22$ & $28.73$ \sep $0.88$ \sep $0.18$ & $28.99$ \sep $0.88$ \sep $0.17$\\
        & $06$ & \mathDash & $26.97$ \sep $0.92$ \sep $0.16$ & $27.02$ \sep $0.93$ \sep $0.16$ & $28.30$ \sep $0.95$ \sep $0.12$ & $28.40$ \sep $0.95$ \sep $0.12$\\
        & $10$ & \mathDash & $29.85$ \sep $0.90$ \sep $0.17$ & $29.95$ \sep $0.90$ \sep $0.16$ & $32.05$ \sep $0.92$ \sep $0.12$ & $31.91$ \sep $0.92$ \sep $0.12$\\
        & $13$ & \mathDash & $33.35$ \sep $0.96$ \sep $0.10$ & $33.29$ \sep $0.96$ \sep $0.10$ & $33.78$ \sep $0.96$ \sep $0.09$ & $33.83$ \sep $0.96$ \sep $0.08$\\
        & $14$ & \mathDash & $30.97$ \sep $0.95$ \sep $0.10$ & $31.23$ \sep $0.95$ \sep $0.09$ & $31.49$ \sep $0.95$ \sep $0.08$ & $31.70$ \sep $0.96$ \sep $0.08$\\
        & $15$ & \mathDash & $29.83$ \sep $0.93$ \sep $0.10$ & $30.16$ \sep $0.94$ \sep $0.09$ & $30.41$ \sep $0.94$ \sep $0.08$ & $30.56$ \sep $0.94$ \sep $0.08$\\
        \hhline{|~|------|}
        & Average & \mathDash & $29.03$ \sep \thirdB{0.90} \sep \thirdB{0.16} & $29.37$ \sep \secondB{0.91} \sep \secondB{0.15} & \secondB{30.10} \sep \firstB{0.92} \sep \firstB{0.12} & \firstB{30.34} \sep \firstB{0.92} \sep \firstB{0.12} \\[0.07cm]
        \myTopRule
        \\[-0.25cm]
        \multirow{7}{*}{\rotatebox{90}{\textbf{\shelly \cite{wang2023adaptiveshells}}}}
        & Fern   & \mathDash & $38.69$ \sep $0.99$ \sep $0.03$ & $39.10$ \sep $0.99$ \sep $0.02$ & $37.55$ \sep $0.98$ \sep $0.03$ & $37.67$ \sep $0.98$ \sep $0.03$\\
        & Horse  & \mathDash & $44.56$ \sep $0.99$ \sep $0.03$ & $45.30$ \sep $0.99$ \sep $0.03$ & $40.25$ \sep $0.99$ \sep $0.04$ & $40.58$ \sep $0.99$ \sep $0.04$\\
        & Khady  & \mathDash & $32.42$ \sep $0.90$ \sep $0.16$ & $32.61$ \sep $0.90$ \sep $0.16$ & $33.32$ \sep $0.91$ \sep $0.14$ & $33.42$ \sep $0.91$ \sep $0.13$\\
        & Kitten & \mathDash & $39.64$ \sep $0.98$ \sep $0.04$ & $40.12$ \sep $0.98$ \sep $0.04$ & $39.48$ \sep $0.98$ \sep $0.04$ & $39.61$ \sep $0.98$ \sep $0.04$\\
        & Pug    & \mathDash & $37.36$ \sep $0.95$ \sep $0.10$ & $37.64$ \sep $0.95$ \sep $0.10$ & $26.65$ \sep $0.86$ \sep $0.21$ & $28.12$ \sep $0.87$ \sep $0.20$\\
        & Woolly & \mathDash & $32.71$ \sep $0.93$ \sep $0.11$ & $33.18$ \sep $0.94$ \sep $0.10$ & $33.61$ \sep $0.94$ \sep $0.10$ & $33.99$ \sep $0.95$ \sep $0.09$\\
        \hhline{|~|------|}
        & Average & \mathDash & \secondB{37.56} \sep \firstB{0.96} \sep \secondB{0.08} & \firstB{37.99} \sep \firstB{0.96} \sep \firstB{0.07} & $35.14$ \sep \thirdB{0.94} \sep \thirdB{0.09} & \thirdB{35.57} \sep \secondB{0.95} \sep \thirdB{0.09} \\[0.07cm]
        \myTopRule
        \\[-0.25cm]
        \multirow{9}{*}{\rotatebox{90}{\textbf{\synNerf \cite{mildenhall2020nerf}}}}
        & Chair     & $35.78$ \sep $0.99$ \sep ~~\mathDash~~ & $37.44$ \sep $0.99$ \sep $0.01$ & $37.71$ \sep $0.99$ \sep $0.01$ & $38.38$ \sep $0.99$ \sep $0.01$ & $38.74$ \sep $0.99$ \sep $0.01$\\
        & Drum      & $25.91$ \sep $0.95$ \sep ~~\mathDash~~ & $27.71$ \sep $0.96$ \sep $0.03$ & $27.84$ \sep $0.97$ \sep $0.03$ & $28.10$ \sep $0.97$ \sep $0.03$ & $28.23$ \sep $0.99$ \sep $0.03$\\
        & Ficus     & $34.72$ \sep $0.99$ \sep ~~\mathDash~~ & $35.49$ \sep $0.99$ \sep $0.01$ & $37.66$ \sep $0.99$ \sep $0.01$ & $36.54$ \sep $0.99$ \sep $0.01$ & $38.08$ \sep $0.99$ \sep $0.01$\\
        & Hotdog    & $38.05$ \sep $0.99$ \sep ~~\mathDash~~ & $38.72$ \sep $0.99$ \sep $0.02$ & $38.89$ \sep $0.99$ \sep $0.02$ & $39.78$ \sep $0.99$ \sep $0.01$ & $39.85$ \sep $0.99$ \sep $0.01$\\
        & Lego      & $35.79$ \sep $0.98$ \sep ~~\mathDash~~ & $37.63$ \sep $0.99$ \sep $0.01$ & $38.05$ \sep $0.99$ \sep $0.01$ & $39.26$ \sep $0.99$ \sep $0.01$ & $39.39$ \sep $0.99$ \sep $0.01$\\
        & Materials & $31.05$ \sep $0.97$ \sep ~~\mathDash~~ & $32.39$ \sep $0.98$ \sep $0.03$ & $33.02$ \sep $0.98$ \sep $0.02$ & $34.02$ \sep $0.98$ \sep $0.02$ & $34.50$ \sep $0.99$ \sep $0.02$\\
        & Mic       & $35.92$ \sep $0.99$ \sep ~~\mathDash~~ & $36.91$ \sep $0.99$ \sep $0.01$ & $37.13$ \sep $0.99$ \sep $0.01$ & $39.65$ \sep $1.00$ \sep $0.00$ & $40.14$ \sep $0.99$ \sep $0.00$\\
        & Ship      & $32.33$ \sep $0.94$ \sep ~~\mathDash~~ & $32.32$ \sep $0.92$ \sep $0.11$ & $32.54$ \sep $0.92$ \sep $0.11$ & $33.45$ \sep $0.94$ \sep $0.08$ & $33.72$ \sep $0.99$ \sep $0.07$\\
        \hhline{|~|------|}
        & Average & $33.69$ \sep \thirdB{0.97} \sep ~~\mathDash~~ & $34.83$ \sep \secondB{0.98} \sep \thirdB{0.03} & \thirdB{35.36} \sep \secondB{0.98} \sep \thirdB{0.03} & \secondB{36.14} \sep \secondB{0.98} \sep \secondB{0.02} &  \firstB{36.58} \sep \secondB{0.98} \sep \secondB{0.02} \\[0.07cm]
        \myTopRule
    \end{tabular}
    }
\end{table*}

\begin{table*}[!t]
    \centering
    \caption{\textbf{Variance Analysis} on the \mipNerf dataset.
    \textbf{\qnn outperforming} the baselines is statistically significant.
    [Key: \firstBText{Baseline+QNN}~, \secondBText{Baseline}~, Avg= Average, $\sigma$= Standard Deviation].
    }
    \label{tab:nvs_seed}
    \scalebox{0.7}{
    \setlength\tabcolsep{0.15cm}
    \begin{tabular}{l l m lll m lll }
        \multirow{2}{*}{Method} & \multirow{2}{*}{Seed} & \multicolumn{3}{cm}{\textbf{Val}} & \multicolumn{3}{c}{\textbf{Train}}\\
        & & \psnr (\uparrowRHDSmall) & \ssim (\uparrowRHDSmall) & \lpips (\downarrowRHDSmall) & \psnr (\uparrowRHDSmall) & \ssim (\uparrowRHDSmall) & \lpips (\downarrowRHDSmall)\\
        \myTopRule
        \multirow{4}{*}{\gs\!\cite{kerbl2023gaussians}} & $222$ & $27.67$ & $0.82$ & $0.20$ & $29.71$ & $0.90$ & $0.16$\\
        & $111$ & $27.65$ & $0.82$ & $0.20$ & $29.69$ & $0.90$ & $0.16$\\
        & $555$ & $27.72$ & $0.82$ & $0.20$ & $29.69$ & $0.90$ & $0.16$\\
        \rowcolor{\secondColor}
        & \avg $\pm~\sigma$ & $27.68\pm0.03$ & $0.82\pm0.00$ & $0.20\pm0.00$ & $29.69\pm0.02$ & $0.90\pm0.00$ & $0.16\pm0.00$\\
        \myTopRule
        \multirow{4}{*}{\gs + \qnn} & $222$ & $27.96$ & $0.83$ & $0.20$ & $30.11$ & $0.90$ & $0.16$\\
        & $111$ & $27.92$ & $0.83$ & $0.20$ & $30.11$ & $0.90$ & $0.16$\\
        & $555$ & $27.93$ & $0.83$ & $0.20$ & $30.12$ & $0.90$ & $0.16$\\
        \rowcolor{\firstColor}
        & \avg $\pm~\sigma$ & $27.92\pm0.03$ & $0.83\pm0.00$ & $0.20\pm0.00$ & $30.12\pm0.02$ & $0.90\pm0.00$ & $0.16\pm0.00$\\
        \myTopRule
        \multirow{4}{*}{\mcmc\!\cite{kheradmand20243d}} & $222$ & $28.26$ & $0.84$ & $0.17$ & $30.31$ & $0.91$ & $0.14$\\
        & $111$ & $28.22$ & $0.84$ & $0.17$ & $30.32$ & $0.91$ & $0.13$\\
        & $555$ & $28.31$ & $0.84$ & $0.17$ & $30.31$ & $0.91$ & $0.13$\\
        \rowcolor{\secondColor}
        & \avg $\pm~\sigma$ & $28.26\pm0.03$ & $0.84\pm0.00$ & $0.17\pm0.00$ & $30.31\pm0.02$ & $0.91\pm0.00$ & $0.13\pm0.00$\\
        \myTopRule
        \multirow{4}{*}{\mcmc + \qnn} & $222$ & $28.58$ & $0.84$ & $0.17$ & $30.69$ & $0.91$ & $0.14$\\
        & $111$ & $28.53$ & $0.84$ & $0.17$ & $30.67$ & $0.91$ & $0.14$\\
        & $555$ & $28.54$ & $0.84$ & $0.17$ & $30.69$ & $0.91$ & $0.14$\\
        \rowcolor{\firstColor}
        & \avg $\pm~\sigma$ & $28.55\pm0.02$ & $0.84\pm0.00$ & $0.17\pm0.00$ & $30.67\pm0.02$ & $0.91\pm0.00$ & $0.13\pm0.00$\\
    \end{tabular}
    }
\end{table*}

        \noIndentHeading{Variance Analysis.}
            We next check the statistical significance of our results by running with three different seeds and reporting the mean and standard deviation on the \mipNerf dataset \cite{barron2022mipnerf360} in \cref{tab:nvs_seed}.
            The table shows that integrating \qnn into both the \gsOnly baselines yields statistically significant improvements.

        \noIndentHeading{Indoor Outdoor Analysis.}
            We next investigate the question whether \psnr gains are statistically consistent across scenes, or dominated by a few scenes in \cref{tab:nvs_indoor_outdoor}.
            This table shows the mean \psnr across three different seeds with both \gs \cite{kerbl2023gaussians} and \mcmc \cite{kheradmand20243d} baselines and their \qnn-augmented models on the \mipNerf dataset \cite{barron2022mipnerf360}.
            \cref{tab:nvs_indoor_outdoor} confirms \psnr gains are not uniformly distributed across all scenes and larger improvements are observed specifically in indoor scenes.

\begin{table*}[!t]
    \centering
    \caption{\textbf{Indoor-Outdoor \psnr Analysis} for the \nvs task on the \mipNerf dataset averaged across three seeds.
    \textbf{Larger \psnr improvements are observed in indoor scenes}
    [Key: \firstBText{Best}~, \secondBText{Second-best}~, \thirdBText{Third-best}~, Avg= Average].
    }
    \label{tab:nvs_indoor_outdoor}
    \scalebox{0.7}{
    \setlength\tabcolsep{0.08cm}
    \begin{tabular}{l m c m c m c m ccc ccc ccc }
        Method & \textbf{\avg} & \textbf{Outdoor} & \textbf{Indoor} & Garden	& Bicycle & Stump & Bonsai & Counter & Kitchen & Room & Treehill & Flowers\\
        \myTopRule
        \gs\!\cite{kerbl2023gaussians} & $27.68$ & $24.97$ & $31.06$ & $27.66$ & $25.61 $ & $26.89 $ & $32.16 $ & $29.10 $ & $31.40$ & $31.59 $ & $22.82$ & $21.88$ \\
        \gs + \qnn & \thirdF{27.92} & \thirdF{25.08} & \thirdF{31.50} & $27.89$ & $25.66$ & $27.17$ & $32.57$ & $29.23$ & $31.70$ & $32.51$ & $22.73$ & $21.97$ \\
        \myTopRule
        \mcmc\!\cite{kheradmand20243d} & \secondF{28.26} & \secondF{25.38} & \secondF{31.96} & $28.00$ & $25.92$ & $27.22$ & $33.25$ & $29.86$ & $32.46$ & $32.26$ & $23.33$ & $22.24$ \\
        \mcmc + \qnn & \firstF{28.55} & \firstF{25.43} & \firstF{32.44} & $28.24$ & $25.97$ & $27.53$ & $33.72$ & $29.96$ & $32.74$ & $33.34$ & $23.25$ & $22.17$ \\
    \end{tabular}
    }
\end{table*}

        \noIndentHeading{Single-scale Training and Multi-scale Testing.}
            Contrary to most prior works that evaluate models trained on single-scale data at the same scale, we consider the new setting from \mipSplat \cite{yu2024mip} that involves training on downsampled resolution and testing on upsampled resolution. 
            To simulate zoom-in effects, we train models on data downsampled by a factor of $8$ and rendered at successively higher resolutions ($1 \times$, $2 \times$, $4 \times$, and $8 \times$) in \cref{tab:nvs_mip_stmt} following \mipSplat \cite{yu2024mip}. 
            \cref{tab:nvs_mip_stmt} results show that adding \qnn outperforms the baselines at the training scale ($1 \times$) and performs competitive to all \sota methods at higher resolutions. 
            Notably, adding \qnn to the \mcmc baseline again outperforms the \zipNerf baseline in multi-scale testing as \cref{tab:nvs_results} at increased resolutions, likely due to the \mlps' inability to extrapolate to out-of-distribution details, which agrees with \cref{tab:nvs_results}. 
            While adding \qnn performs sub-optimal to the \mipSplat \cite{yu2024mip}, these two directions are orthogonal to each other and one can add \qnn to the \mipSplat \cite{yu2024mip} as well.

\clearpage
\section{Supportive Explanations}

        We finally add some explanations which we could not put in the main paper because of the space constraints.

        \noIndentHeading{Novelty \wrt Coordinate \cnns.}
            We list out the differences between Coordinate \cnns \cite{liu2018intriguing} and \qnns in \cref{tab:novelty_qnn}.
            \qnns represent a generalization over Coordinate \cnns, underpinned by a distinct theoretical foundation and motivation. 
            \qnns achieves high-fidelity results in \nvs, extending the scope and impact beyond traditional Coordinate \cnn applications.
            
        \noIndentHeading{Is there any regularization to keep \qnn from hallucinating view-inconsistent detail across viewpoints?}
            There is no explicit regularization to prevent the \qnn from hallucinating view-inconsistent details across different viewpoints.
            However, our improved empirical performance on novel views suggests that added details are not inconsistent hallucinations (see next remark).
            Please note that the \qnn model is neither generative nor pretrained elsewhere; hence, the details it adds are learned only from the given scene.
            
        \noIndentHeading{Why does adding \qnn not harm 3D consistency of \gsOnly in practice?}
            We believe this is due to the following three reasons:
            \begin{itemize}
                \item \textit{Residual Framework over \gsOnly}: The \qnn predictions are added as a residual refinement on top of the \gsOnly output. 
                Since \gsOnly provides a good, view-consistent starting point, \qnn benefits from the consistency of the base signal.
                \item \textit{Strong Multi-view Supervision}: The final output from \gsOnly+ \qnn  model is supervised by GT multi-view images identical to the baseline \gsOnly.
                Hence, the model is incentivized to produce details that are at least consistent across the training views.
                \item \textit{Interpolation Regime}: The \nvs task operates in an interpolation regime, where test views are distributed similarly to training views. Neural networks like \qnn are good at interpolation, which helps maintain consistency between views.
            \end{itemize}

\begin{table*}[!t]
    \centering
    \caption{
    \textbf{Single-scale Training and Multi-scale Testing on the \mipNerf dataset.} 
    All methods are trained on the smallest scale ($1 \times$) and evaluated across four scales ($1 \times$, $2 \times$, $4 \times$, and $8 \times$), with evaluations at higher sampling rates simulating zoom-in effects.
    [Key: \firstBText{Best}~, \secondBText{Second-Best}~, \thirdBText{Third-Best}~, \retrained= Retrained, \taken= Reported in \mipSplat \cite{yu2024mip}, Avg= Average].
    }
    \label{tab:nvs_mip_stmt}
    \scalebox{0.75}{
    \setlength\tabcolsep{0.1cm}
    \begin{tabular}{l l m cccc|c m cccc|c m cccc|c }
        & \multirow{2}{*}{Method} & \multicolumn{5}{cm}{\textbf{\psnr} (\uparrowRHDSmall)} & \multicolumn{5}{cm}{\textbf{\ssim} (\uparrowRHDSmall)} & \multicolumn{5}{c}{\textbf{\lpips} (\downarrowRHDSmall)}  \\
        & & $1 \times$ & $2 \times$ & $4 \times$ & $8 \times$ & \avg & $1 \times$ & $2 \times$ & $4 \times$ & $8 \times$ & \avg & $1 \times$ & $2 \times$ & $4 \times$ & $8 \times$ & \avg \\
        \myTopRule
        \multirow{3}{*}{\rotatebox{90}{\textbf{Ray}}} & INGP\taken~\cite{muller2022instant} &  $26.79$  &  \thirdF{24.76}  & \secondF{24.27}  & \secondF{24.27}  &  \thirdF{25.02}  & $0.75$  & $0.64$  & \thirdF{0.63}  & \thirdF{0.70}  & $0.68$  & $0.24$  & $0.37$  & $0.45$  & $0.48$  &  $0.38$\\
        & \mipNerfMethod{}\taken~\cite{barron2022mipnerf360}& $29.26$  & \secondF{25.18}  & \thirdF{24.16}  & \thirdF{24.10}  & \secondF{25.67}  & \thirdF{0.86}  & \secondF{0.73}  & \secondF{0.67} & \secondF{0.71}  & \secondF{0.74}  & \thirdF{0.12}  & $0.26$  & \secondF{0.37} & \secondF{0.43}  &  $0.30$\\
        & \zipNerf{}\taken~\cite{barron2023zip}& \secondF{29.66}  & $23.27$  & $20.87$  & $20.27$  & $23.52$  & \firstF{0.88}  & $0.70$  & $0.57$  & $0.56$  & $0.67$  &\firstF{0.10}  & $0.26$  & $0.42$  & $0.49$  & $0.32$\\
        \myTopRule
        \multirow{5}{*}{\rotatebox{90}{\textbf{Raster}}}
        & \mipSplat{}\taken~\cite{yu2024mip} & \thirdF{29.39}  & \firstF{27.39}  & \firstF{26.47}  &\firstF{26.22}  & \firstF{27.37}  & \firstF{0.88}  & \firstF{0.81} & \firstF{0.75}  & \firstF{0.77}  & \firstF{0.80}  & \secondF{0.11}  &\firstF{0.21}  & \firstF{0.31}  &\firstF{0.39}  & \firstF{0.25} \\
        & \gs\retrained~\cite{kerbl2023gaussians} & $28.91$  & $23.16$  & $20.39$  & $19.31$  & $22.94$  & \secondF{0.87}  &$0.71$  & $0.59$  & $0.60$  & $0.69$  & \secondF{0.11}  & $0.25$   & $0.40$   & $0.48$  & $0.31$ \\
        & \gs + \qnn & $29.30$ & $23.52$ & $20.85$ & $19.83$ & $23.37$ & \firstF{0.88} & \thirdF{0.72} & $0.60$ & $0.61$ & $0.71$ & \secondF{0.11} & \thirdF{0.24} & $0.40$ & $0.48$ & $0.31$\\
        \hhline{|~|----------------|}
        & \mcmc (1M)~\retrained \cite{kheradmand20243d} & $29.30$ & $23.67$ & $21.15$ & $20.22$ & $23.58$ & \firstF{0.88} & \secondF{0.73} & $0.62$ & $0.64$ & $0.72$ & \secondF{0.11} & \secondF{0.23} & \thirdF{0.38} & \thirdF{0.46} & \thirdF{0.30} \\
        & \mcmc (1M) + \qnn & \firstF{29.67} & $24.27$ & $21.96$ & $21.17$ & $24.26$ & \firstF{0.88} & \secondF{0.73} & \thirdF{0.63} & $0.66$ & \thirdF{0.73} & \firstF{0.10} & \secondF{0.23} & \thirdF{0.38} & \thirdF{0.46} & \secondF{0.29} \\
    \end{tabular}
    }
\end{table*}

\begin{table*}[!t]
    \centering
    \caption{
    \textbf{Novelty \wrt coordinate \cnns.}
    \qnns achieves high-fidelity results in \nvs, extending the scope and impact beyond traditional Coordinate \cnn applications.
    }
    \label{tab:novelty_qnn}
    \scalebox{0.95}{
    \setlength\tabcolsep{0.4cm}
    \begin{tabular}{@{}l m cc@{}}
        & \textbf{Coordinate \cnn \cite{liu2018intriguing}} & \textbf{\qnn} \\
        \myTopRule
        {Core Claim} & Spatial Coordinate Awareness & High-Fidelity Learning\\
        {Theory} & \xmark & \cmark \\
        \multirow{2}{*}{Queries} & \multirow{2}{*}{\twoD coordinates} & \oneD, \twoD, \threeD coordinates, \Raymap, \\
        & & \Plucker or their concatenation\\
        \multirow{3}{*}{Tasks} & \imageNet classification, &	\multirow{3}{*}{\threeD \nvs, \twoD \sr} \\
        & \mnist detection, & \\
        & LSUN GAN, RL & \\
    \end{tabular}
    }
\end{table*}

\section*{Acknowledgements}
    We thank anonymous \iclr~reviewers for their exceptional feedback and constructive criticism that shaped this final manuscript.
    We appreciate the SJC cluster admins at SRA for making our experiments possible.

\end{document}